\newcommand{\R}{\mathbb{R}}
\newcommand{\Scal}{\mathcal{S}}
\newcommand{\x}{\mathbf{x}}
\newcommand{\vv}{\mathbf{v}}
\newcommand{\epsi}{\varepsilon}
\DeclareMathOperator*{\E}{\mathbb{E}}
\DeclareMathOperator*{\var}{\mathrm{var}}
\newcommand{\Id}{\mathrm{Id}}
\newcommand{\Tr}{\mathrm{Tr}}
\newcommand{\rot}{\mathrm{rot}}
\newcommand{\yt}{\widetilde{y}}
\newcommand{\yh}{\widehat{y}}
\newtheorem{theorem}{Theorem}
\newtheorem{corollary}{Corollary}
\newcommand{\g}[1]{{\color{Green} #1}}
\title{Input Similarity from the Neural Network Perspective}
\author{Guillaume Charpiat$^1$\\
   \And
   Nicolas Girard$^2$\\
   \And
   Loris Felardos$^1$\\
   \And
   Yuliya Tarabalka$^{2,3}$\\
   \and
   $^1$ TAU team,  Inria Saclay, LRI, Université Paris-Sud \\
   $^2$ Université Côte d'Azur, Inria Sophia-Antipolis\\
   $^3$ LuxCarta Technology\\
   {\tt\small firstname.lastname@inria.fr}
}
\begin{document}

\maketitle

\begin{abstract}
  We first exhibit a multimodal image registration task, for which a neural network trained on a dataset with noisy labels reaches almost perfect accuracy, far beyond noise variance. This surprising auto-denoising phenomenon can be explained as a noise averaging effect over the labels of similar input examples.
  This effect theoretically grows with the number of similar examples; the question is then to define and estimate the \emph{similarity} of examples.

We express a proper definition of similarity, from the neural network perspective, \ie we quantify how undissociable two inputs $A$ and $B$ are, taking a machine learning viewpoint: how much a parameter variation designed to change the output for $A$ would impact the output for $B$ as well?

We study the mathematical properties of this similarity measure, and show how to use it on a trained network to estimate sample density, in low complexity, enabling new types of statistical analysis for neural networks.
We analyze data by retrieving samples perceived as similar by the network, and are able to quantify the denoising effect without requiring true labels.
We also propose, during training, to enforce that examples known to be similar should also be seen as similar by the network, and notice speed-up training effects for certain datasets.

\end{abstract}

\section{Motivation: Dataset self-denoising}
\label{sec:denoising}


In remote sensing imagery, data is abundant but noisy \cite{mnih2012learning}. For instance RGB satellite images and binary cadaster maps (delineating buildings) are numerous but badly aligned for various reasons (annotation mistakes, atmosphere disturbance, elevation variations...). In a recent preliminary work~\cite{anonymous}, we tackled the task of automatically registering these two types of images together with neural networks, considering as ground truth a dataset of hand-picked relatively-well-aligned areas
\cite{maggiori2017dataset}, and hoping the network would be able to learn from such a dataset of imperfect alignments.
Learning with noisy labels is indeed an active topic of research \cite{sukhbaatar2014training,natarajan2013learning,li2017learning}.

For this, we designed an iterative approach: train, then test on the training set and re-align it accordingly; repeat (for 3 iterations).
The results were surprisingly good, yielding far better alignments than the ground truth it learned from, both qualitatively (Figure~\ref{fig:qualitative_results}) and quantitatively (Figure~\ref{fig:accuracies}, obtained on manually-aligned data): the median registration error dropped from 18 pixels to 3.5 pixels, which is the best score one could hope for, given intrinsic ambiguities in such registration task.
To check that this performance was not due to a subset of the training data that would be perfectly aligned, we added noise to the ground truth and re-trained from it: the new results were about as good again (dashed lines). Thus the network did learn almost perfectly just from noisy labels.



\begin{figure}
\begin{minipage}[c]{0.42\linewidth}
  \includegraphics[width=0.99\linewidth]{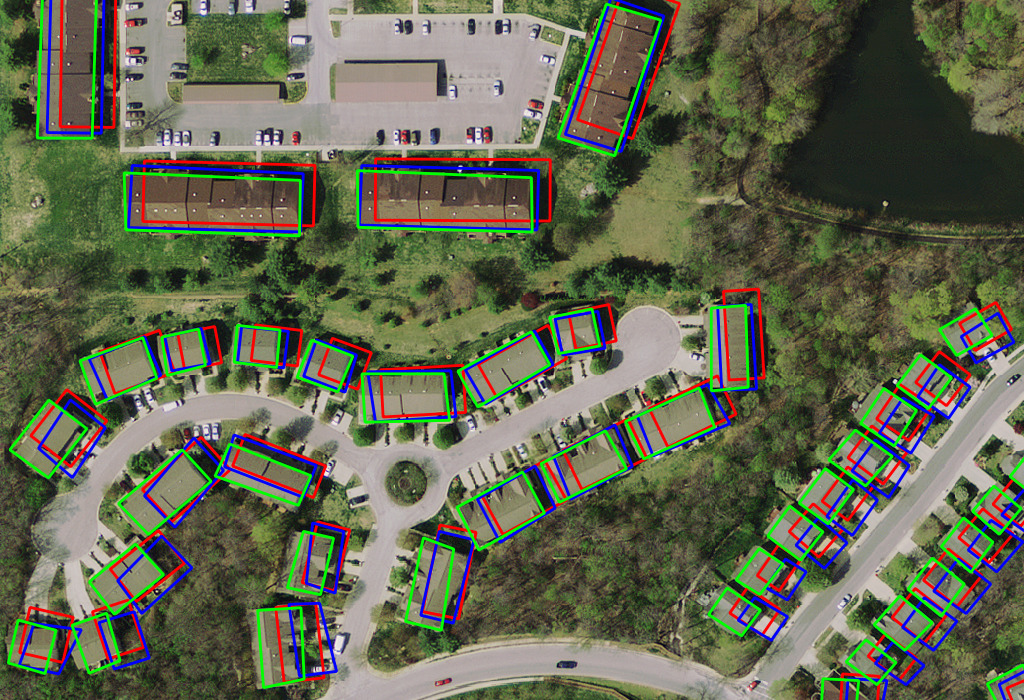}

  \smallskip\vspace{1mm}
        \caption{Qualitative alignment results \cite{anonymous} on a crop of bloomington22 from the Inria dataset \cite{maggiori2017dataset}. \textcolor{red}{Red: initial dataset annotations}; \textcolor{blue}{blue: aligned annotations round 1}; \g{green:  aligned annotations round 2}.}
        \label{fig:qualitative_results}
\end{minipage} \hfill
\begin{minipage}[c]{0.55\linewidth}
        \vspace{-3mm}
        \hspace{-2.5mm}\includegraphics[width=1.05\linewidth]{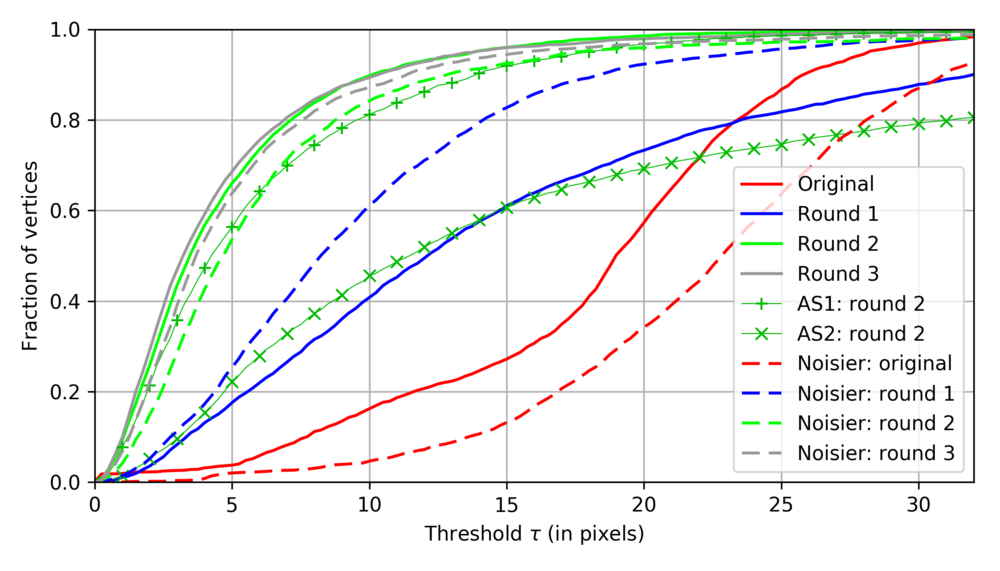}
        \vspace{-4mm}
        \caption{Accuracy cumulative distributions \cite{anonymous} measured with the manually-aligned annotations of bloomington22 \cite{maggiori2017dataset}. Read as: fraction of image pixels whose registration error is less than threshold $\tau$.}
        \label{fig:accuracies}
\end{minipage}
\end{figure}


An explanation for this self-denoising phenomenon is proposed in \cite{noise2noise} as follows.
Let us consider a regression task, with a $L^2$ loss, and where true labels $y$ were altered with i.i.d.~noise $\epsi$ of variance $v$. Suppose a same input $\x$ appears $n$ times in the training set, thus with $n$ different labels $y_i = y + \epsi_i$. The network can only output the same prediction for all these $n$ cases (since the input is the same), and the best option, considering the $L^2$ loss, is to predict the average $\frac{1}{n} \sum_i y_i$, whose distance to the true label $y$ is $O(\frac{v}{\sqrt{n}})$. Thus a denoising effect by a factor $\sqrt{n}$ can be observed.
However, the exact same point $\x$ is not likely to appear several times in a dataset (with different labels). Rather, relatively \emph{similar} points may appear, and the amplitude of the self-denoising effect will be a function of their number. Here, the similarity should reflect the neural network perception (similar inputs yield the same output) and not an \emph{a priori} norm chosen on the input space.

The purpose of this article is to express the notion of similarity from the network's point of view. We first define it, and study it mathematically, in Section~\ref{sec:sim}, in the one-dimensional output case for the sake of simplicity.
Higher-dimensional outputs are dealt with in Section~\ref{sec:higher}.
We then compute, in Section~\ref{sec:density}, the number of neighbors (\ie, of similar samples), and propose for this a very fast estimator. This brings new tools to analyze already-trained networks.
As they are differentiable and fast to compute, they can be used during training as well, \eg, to enforce that given examples should be perceived as similar by the network (\cf Section~\ref{sec:enforce}).
Finally, in Section~\ref{backtodenoise}, we apply the proposed tools to analyze a network trained with noisy labels for a remote sensing image alignment task, and formalize the self-denoising phenomenon, quantifying
its effect,
extending~\cite{noise2noise} to real datasets.



\section{Similarity}
\label{sec:sim}

\subsection{Notions of similarities}

The notion of similarity between data points is an important topic in the machine learning literature, obviously in domains such as image retrieval, where images similar to a query have to be found; but not only. For instance when training auto-encoders, the quality of the reconstruction is usually quantified as the $L^2$ norm between the input and output images. Such a similarity measure is however questionable, as color comparison, performed pixel per pixel, is a poor estimate of human perception: the $L^2$ norm can vary a lot with transformations barely noticeable to the human eye such as small translations or rotations (for instance on textures), and does not carry semantic information, \ie whether the same kind of objects are present in the image.

Therefore, so-called \emph{perceptual losses} \cite{johnson2016perceptual} were introduced to quantify image similarity: each image is fed to a standard pre-trained network such as VGG, and the activations in a particular intermediate layer are used as descriptors of the image \cite{gatys2015texture,gatys2015neural}. The distance between two images is then set as the $L^2$ norm between these activations. Such a distance carries implicitly semantic information, as the VGG network was trained for image classification. However, the choice of the layer to consider is arbitrary. In the ideal case, one would wish to combine the information from all layers, as some are more abstract and some more detail-specific. But then the particular weights chosen to combine the different layers would also be arbitrary. Would it be possible to get a canonical similarity measure, well posed theoretically?

More importantly, the previous litterature does not consider the notion of input similarity from the point of view of the neural network that is being used, but from the point of view of another one (typically, VGG) which aims at imitating human perception. A notable exception~\cite{koh2017understanding} transposes to machine learning the concept of influence functions in statistics~\cite{Hampel1974}. The differences with our definition of similarity might seem slight at first glance but they have important consequences: first, making use of the loss (and of its gradient and its Hessian) in the similarity measure has the issue that the expressed quantities are not intrinsic to the neural network but also depend on the optimization criterion used during training, which is problematic in the case of noisy labels as, at training convergence, the gradient of the loss with respect to the output points in random directions (remaining label noise that the network is not able to overfit). Second, the inverse of the Hessian appears in influence functions, while our definition makes use of gradients only. Another interesting related work~\cite{NNKernel} expresses neural networks as a kernel between test point and training points. Once again however the kernel definition relies on the training criterion.

As a supplementary motivation for this study, neural networks are black boxes difficult to interpret, and showing which samples a network considers as similar would help to explain its decisions. Also, the number of such similar examples would be a key element for confidence estimation at test time.

In this section we define a proper, intrinsic notion of similarity as seen by the network, relying on how easily it can distinguish different inputs.

\subsection{Similarity from the point of view of the parameterized family of functions}

Let $f_\theta$ be a parameterized function, typically a neural network already trained for some task, and $\x, \x'$ possible inputs, for instance from the training or test set.
For the sake of simplicity, let us suppose in a first step that $f_\theta$ is real valued.
To express the similarity between $\x$ and $\x'$, as seen by the network, one could compare the output values $f_\theta(\x)$ and $f_\theta(\x')$. This is however not very informative, and a same output might be obtained for different reasons.

Instead, we define similarity as the influence of $\x$ over $\x'$,
by quantifying how much an additional training step for $\x$ would change the output for $\x'$ as well. If $\x$ and $\x'$ are very different from the point of view of the neural network, changing $f_\theta(\x)$ will have little consequence on $f_\theta(\x')$. Vice versa, if they are very similar, changing $f_\theta(\x)$ will greatly affect $f_\theta(\x')$ as well.

\begin{figure}[ht]
\hfill \begin{minipage}[c]{0.4\linewidth}
  \includegraphics[width=4.2cm]{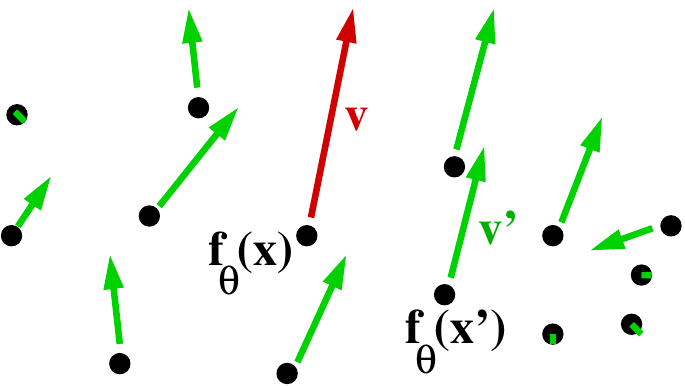}
\end{minipage}
\begin{minipage}[r]{0.38\linewidth}
  \caption{\label{fig:kernel} Moves in the space of outputs. We quantify the influence of a data point $\x$ over another one $\x'$ by how much the tuning of parameters $\theta$ to obtain a desired output change $\vv$ for $f_\theta(\x)$ will affect $f_\theta(\x')$ as well.} 
\end{minipage}
\end{figure}

Formally, if one wants to change the value of $f_\theta(\x)$ by a small quantity $\epsi$, one needs to update $\theta$ by $\delta\theta = \epsi\, \frac{ \nabla_{\!\theta} f_\theta(\x) }{ \| \nabla_{\!\theta} f_\theta(\x) \|^2} $. Indeed, after the parameter update, the new value at $\x$ will be:
$$f_{\theta + \delta \theta} (\x) \;\;=\;\; f_\theta(\x) + \nabla_{\!\theta} f_\theta(\x) \cdot \delta \theta + O(\|\delta\theta\|^2)
\;\;=\;\; f_\theta(\x) + \epsi  + O(\epsi^2).$$
This parameter change induces a value change at any other point $\x'$ :
$$f_{\theta + \delta \theta} (\x') \;=\; f_\theta(\x') + \nabla_{\!\theta} f_\theta(\x') \cdot \delta \theta + O(\|\delta\theta\|^2)
\;=\; f_\theta(\x') + \epsi  \frac{ \nabla_{\!\theta} f_\theta(\x') \cdot \nabla_{\!\theta} f_\theta(\x)  }{ \| \nabla_{\!\theta} f_\theta(\x) \|^2}  + O(\epsi^2).$$
Therefore the kernel $\displaystyle k^N_\theta(\x,\x') = \frac{ \nabla_{\!\theta} f_\theta(\x) \cdot \nabla_{\!\theta} f_\theta(\x')   }{ \| \nabla_{\!\theta} f_\theta(\x) \|^2}$
represents the influence of $\x$ over $\x'$: if one wishes to change the output value $f_\theta(\x)$ by $\epsi$, then $f_\theta(\x')$ will change by $\epsi\, k^N_\theta(\x,\x')$. In particular, if $k^N_\theta(\x,\x')$ is high, then $\x$ and $\x'$ are not distinguishable from the point of view of the network, as any attempt to move $f_\theta(\x)$ will move $f_\theta(\x')$ as well (see Fig.~\ref{fig:kernel}). We thus see $k^N_\theta(\x,\x')$ as a measure of similarity. Note however that $k^N_\theta(\x,\x')$ is not symmetric.

\newpage
  
\paragraph{Symmetric similarity: correlation}
Two symmetric kernels natural arise:
the inner product:
\begin{equation}
  \label{eq:innerproductkern}
  k^I_\theta(\x,\x') \;=\; \nabla_{\!\theta} f_\theta(\x)  \cdot   \nabla_{\!\theta} f_\theta(\x')
\end{equation}
and its normalized version, the correlation:
\begin{equation}
  \label{eq:symkern}
  k^C_\theta(\x,\x') = \frac{ \nabla_{\!\theta} f_\theta(\x)  }{ \| \nabla_{\!\theta} f_\theta(\x) \|}  \cdot \frac{ \nabla_{\!\theta} f_\theta(\x')   }{ \| \nabla_{\!\theta} f_\theta(\x') \|}
\end{equation}
which has the advantage of being bounded (in $[-1,1]$), 
thus expressing similarity in a usual meaning.


\subsection{Properties for vanilla neural networks}

Intuitively, inputs that are similar from the network perspective should produce similar outputs; we can check that $k^C_\theta$ is a good similarity measure in this respect (all proofs are deferred to the Appendix):
\begin{theorem} \label{basicnet}
  For any real-valued neural network $f_\theta$
  whose last layer is a linear layer (without any parameter sharing) or a standard activation function thereof (sigmoid, tanh, ReLU...),
  and for any inputs $\x$ and $\x'$,
    $$\nabla_{\!\theta} f_\theta(\x) = \nabla_{\!\theta} f_\theta(\x') \;\;\implies\;\;f_\theta(\x) = f_\theta(\x') \,.$$ 
\end{theorem}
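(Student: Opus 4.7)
The plan is to exploit the structure of the last layer. Write $\theta = (\phi, w, b)$ where $\phi$ collects all parameters of the subnetwork feeding into the last layer, and $w \in \mathbb{R}^d$, $b \in \mathbb{R}$ are the weight vector and bias of the last (linear, unshared) layer. Let $h_\phi(\x) \in \mathbb{R}^d$ denote the activation of the penultimate layer, so that
\begin{equation*}
f_\theta(\x) \;=\; \sigma\!\left(w \cdot h_\phi(\x) + b\right),
\end{equation*}
where $\sigma$ is either the identity, sigmoid, tanh, or ReLU. Set $z(\x) = w \cdot h_\phi(\x) + b$ so that $f_\theta(\x) = \sigma(z(\x))$.

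First I would compute the partial gradients with respect to the last-layer parameters, which are uncoupled from the rest thanks to the no-sharing assumption:
\begin{equation*}
\partial_b f_\theta(\x) \;=\; \sigma'(z(\x)), \qquad \nabla_w f_\theta(\x) \;=\; \sigma'(z(\x))\, h_\phi(\x).
\end{equation*}
The hypothesis $\nabla_\theta f_\theta(\x) = \nabla_\theta f_\theta(\x')$ implies, by looking only at these two components,
\begin{equation*}
\sigma'(z(\x)) = \sigma'(z(\x')) \qquad \text{and} \qquad \sigma'(z(\x))\, h_\phi(\x) = \sigma'(z(\x'))\, h_\phi(\x').
\end{equation*}

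Next I would do a case split on $\sigma$. If $\sigma$ is the identity, then $\sigma' \equiv 1$ and the second equation gives $h_\phi(\x) = h_\phi(\x')$, which immediately yields $z(\x) = z(\x')$ and thus $f_\theta(\x) = f_\theta(\x')$. If $\sigma$ is sigmoid or tanh, then $\sigma' > 0$ everywhere, so we can divide out the common nonzero factor, again getting $h_\phi(\x) = h_\phi(\x')$, then $z(\x) = z(\x')$, and finally $f_\theta(\x) = \sigma(z(\x)) = \sigma(z(\x')) = f_\theta(\x')$. For ReLU the argument splits: if $\sigma'(z(\x)) = \sigma'(z(\x')) \neq 0$ we proceed as before; otherwise both values are $0$, which forces $z(\x), z(\x') \leq 0$, hence $\sigma(z(\x)) = \sigma(z(\x')) = 0$ and again $f_\theta(\x) = f_\theta(\x')$.

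The only delicate point is the ReLU case at the non-differentiable point $z = 0$; it is handled by the same case split, since whichever subgradient is used, matching $\partial_b f$ between the two inputs forces the outputs into the flat region together. The rest is essentially bookkeeping enabled by the crucial no-parameter-sharing assumption on the last layer, which is what allows us to isolate $h_\phi(\x)$ as a pure gradient coordinate rather than a quantity entangled with gradients through earlier layers.
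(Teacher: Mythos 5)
Your proof is correct and follows essentially the same route as the paper's: read off the activations of the penultimate layer from the last-layer weight gradients, use the bias gradient to identify $\sigma'(z(\x)) = \sigma'(z(\x'))$, and split on whether this common value can vanish (never for sigmoid/tanh by strict monotonicity, and for ReLU the vanishing case forces both outputs to the flat value $0$). The only cosmetic difference is that you package the non-last-layer parameters into $\phi$ explicitly, whereas the paper works directly with the coefficients $a_i(\x)$; the substance is identical.
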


\begin{corollary} \label{alphasim}
Under the same assumptions, for any inputs $\x$ and $\x'$, 
$$\begin{array}{crcl}
& k^C_\theta(\x, \x') = 1  & \implies & \nabla_{\!\theta} f_\theta(\x) = \nabla_{\!\theta} f_\theta(\x') \,, \vspace{1mm} \\
\mathrm{hence} & k^C_\theta(\x, \x') = 1  & \implies & f_\theta(\x) = f_\theta(\x') \,. \\
\end{array}$$
\end{corollary}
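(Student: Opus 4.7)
The corollary packages two claims, and the plan is to notice that the second implication is an immediate application of Theorem~\ref{basicnet} once the first is established; so I concentrate on proving that $k^C_\theta(\x,\x') = 1$ forces $\nabla_{\!\theta} f_\theta(\x) = \nabla_{\!\theta} f_\theta(\x')$. Unpacking the definition of $k^C$, the hypothesis is equivalent to saying the two unit vectors $\nabla_{\!\theta} f_\theta(\x)/\|\nabla_{\!\theta} f_\theta(\x)\|$ and $\nabla_{\!\theta} f_\theta(\x')/\|\nabla_{\!\theta} f_\theta(\x')\|$ coincide, i.e.\ $\nabla_{\!\theta} f_\theta(\x) = \lambda\, \nabla_{\!\theta} f_\theta(\x')$ with $\lambda = \|\nabla_{\!\theta} f_\theta(\x)\|/\|\nabla_{\!\theta} f_\theta(\x')\| > 0$. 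The whole problem collapses to showing $\lambda = 1$.

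For this I would inspect a single, well-chosen coordinate of the parameter vector: the bias $b$ of the final linear layer. Write the last linear layer as $z(\x) = W h(\x) + b$ where $h(\x)$ denotes the penultimate activations. In the purely linear case $f_\theta = z$, so $\partial f_\theta/\partial b \equiv 1$ regardless of the input; the $b$-coordinate of the proportionality relation then reads $1 = \lambda \cdot 1$ and we are done. The "no parameter sharing" assumption is exactly what guarantees that this bias gradient appears as an honest, input-independent coordinate. When a standard activation $\sigma$ is composed on top, $f_\theta = \sigma(z)$, the bias coordinate gives $\sigma'(z(\x)) = \lambda\, \sigma'(z(\x'))$ while each weight coordinate gives $\sigma'(z(\x))\, h_i(\x) = \lambda\, \sigma'(z(\x'))\, h_i(\x')$; combining these two families forces $h_i(\x) = h_i(\x')$ for every $i$, hence $z(\x) = z(\x')$, hence $\sigma'(z(\x)) = \sigma'(z(\x'))$, which finally delivers $\lambda = 1$.

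The one mild obstacle I anticipate is the activation case: one needs $\sigma'$ to be nonzero at the relevant pre-activations to divide cleanly. For sigmoid and tanh this holds everywhere; for ReLU the only issue would be $\sigma'(z(\x)) = 0$, in which case the full parameter-gradient vanishes by backpropagation and the correlation $k^C$ is undefined, so this boundary case is automatically excluded from the hypothesis. Once $\lambda = 1$ is secured the two parameter-gradients are equal, and Theorem~\ref{basicnet} immediately closes the second implication $f_\theta(\x) = f_\theta(\x')$.
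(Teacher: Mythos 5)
Your proof is correct and follows essentially the same route as the paper's: extract the positive proportionality constant $\lambda$ from $k^C_\theta(\x,\x')=1$, then use the bias coordinate of the last layer (which equals $1$ in the linear case, or $\sigma'$ in the activated case) together with the weight coordinates to recover equal activities and force $\lambda=1$, before invoking Theorem~\ref{basicnet}. The only cosmetic difference is the degenerate ReLU case, which you dispatch by noting $k^C_\theta$ is undefined when the gradient vanishes, whereas the paper observes that both gradients are then zero and hence equal; both resolutions are acceptable.
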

  
Furthermore,
\begin{theorem} \label{basicnet2}
  For any real-valued neural network $f_\theta$ without parameter sharing,
  if $\nabla_{\!\theta} f_\theta(\x) = \nabla_{\!\theta} f_\theta(\x')$ for two inputs $\x, \x'$,
  then all useful activities computed when processing $\x$ are equal to the ones obtained when processing $\x'$.
\end{theorem}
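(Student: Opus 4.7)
The plan is to exploit the factorized form that the gradient of a feedforward network takes with respect to its weights and biases when no parameter is shared. Using standard backpropagation notation, let $z^{(\ell)}$ denote the pre-activations at layer $\ell$, $a^{(\ell)} = \sigma_\ell(z^{(\ell)})$ the post-activations, and $\delta^{(\ell)} = \nabla_{\!z^{(\ell)}} f_\theta$ the backpropagated signal. Absent parameter sharing, each bias and each weight contributes a single term to the chain rule, so $\partial f_\theta / \partial b^{(\ell)}_i = \delta^{(\ell)}_i(\x)$ and $\partial f_\theta / \partial W^{(\ell)}_{ij} = \delta^{(\ell)}_i(\x)\, a^{(\ell-1)}_j(\x)$.

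First I would use the equality of gradients at the biases (present in the standard case): this immediately yields $\delta^{(\ell)}_i(\x) = \delta^{(\ell)}_i(\x')$ for every neuron $i$ of every layer. Substituting into the equality of weight gradients gives $\delta^{(\ell)}_i(\x)\bigl(a^{(\ell-1)}_j(\x) - a^{(\ell-1)}_j(\x')\bigr) = 0$, so every activity $a^{(\ell-1)}_j$ feeding at least one downstream neuron $i$ with $\delta^{(\ell)}_i \neq 0$ must coincide at $\x$ and $\x'$; this also applies at the input layer, where $a^{(0)}$ is $\x$ or $\x'$ itself. In a bias-free architecture the argument still goes through by a layerwise induction: the ratio $\delta^{(\ell)}_i(\x)/\delta^{(\ell)}_i(\x')$ must be independent of $i$ (since it equals $a^{(\ell-1)}_j(\x')/a^{(\ell-1)}_j(\x)$ for every useful $j$), and propagating this scalar through the backward recursion $\delta^{(\ell-1)}_j = \sigma_{\ell-1}'(z^{(\ell-1)}_j)\sum_i W^{(\ell)}_{ij}\delta^{(\ell)}_i$ forces it to be $1$ at every useful layer.

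The activities left out by this argument are exactly those whose downstream $\delta^{(\ell)}_i$ all vanish, hence whose perturbation leaves $f_\theta$ unchanged; these are \emph{useless} by construction. The natural formalization of \emph{useful} is thus: $a^{(\ell-1)}_j$ admits at least one path to the output along which all local derivatives (equivalently all $\delta$'s) are nonzero, i.e.\ there exists a downstream $i$ with $\delta^{(\ell)}_i \neq 0$. With this convention the theorem follows layer by layer. The main obstacle is not the algebra, which is mechanical from the chain rule, but pinning down the qualifier \emph{useful} so that the statement holds unconditionally and covers activation functions that can produce vanishing derivatives (ReLU, saturated sigmoid): one must argue operationally that activities with vanishing downstream backward signal cannot influence $f_\theta$, so that the activities forced to match by the gradient equality are exactly the activities that matter.
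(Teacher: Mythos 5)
Your proof is correct and follows essentially the same route as the paper's: use the bias gradient $\partial f_\theta/\partial b_i = \delta_i$ to pin down the backpropagated signal, then divide the weight gradient $\delta_i\, a_j$ by it to recover the activity $a_j$ whenever some downstream $\delta_i$ is nonzero (which is implied by the paper's definition of \emph{useful}, $df_\theta/da_j \neq 0$). The only difference is your extra remark on bias-free architectures, which the paper does not treat (its proof assumes each neuron has a bias term).
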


We name \emph{useful} activities all activities $a_i(\x)$ whose variation would have an impact on the output, \ie all the ones satisfying $\frac{d f_\theta(\x) }{da_i} \neq 0$. This condition is typically not satisfied when the activity
is negative and followed by a ReLU, or when it is multiplied by a 0 weight,
or when all its contributions to the output cancel one another (\eg, a sum of two neurons with opposite weights: $f_\theta(\x) = \sigma( a_i(\x) ) - \sigma( a_i(\x) )$).




\paragraph{Link with the \emph{perceptual loss}}
For a
vanilla network without parameter sharing, the gradient $\nabla_{\!\theta} f_\theta(\x)$ is  a list of
coefficients $\nabla_{\!w_i^j} f_\theta(\x) = \frac{d f_\theta(\x) }{db_j}\, a_i(\x)$, where $w_i^j$ is the parameter-factor that multiplies the input activation $a_i(\x)$ in neuron $j$, and of coefficients $\nabla_{\!b_j} f_\theta(\x) = \frac{d f_\theta(\x) }{db_j}$ for neuron biases, which we will consider as standard parameters $b_j = w_0^j$ that act on a constant activation $a_0(\x) = 1$, yielding $\nabla_{\!w_0^j} f_\theta(\x) = \frac{d f_\theta(\x) }{db_j}\, a_0(\x)$.
Thus the gradient $\nabla_{\!\theta} f_\theta(\x)$ can be seen as a list of 
all activation values $a_i(\x)$ multiplied by the potential impact on the output $f_\theta(\x)$ of the neurons $j$ using them, \ie $\frac{d f_\theta(\x) }{db_j}$. Each activation appears in this list as many times as it is fed to different neurons.
The similarity between two inputs then rewrites:
$$k^I_\theta(\x,\x') = \!\!\!\sum_{\mathrm{activities\;} i}\!\!\! \lambda_i(\x,\x')\; a_i(\x) \, a_i(\x')
\;\;\;\;\;\; \text{where}
\;\;\;\;\;\; 
\lambda_i(\x,\x') = \!\!\!\!\!\!\sum_{\mathrm{neuron\;} j \mathrm{\;using\;} a_i}
\frac{d f_\theta(\x) }{db_j} \frac{d f_\theta(\x') }{db_j}$$
are data-dependent importance weights.
Such weighting schemes on activation units naturally arise when expressing intrinsic quantities; the use of natural gradients would bring invariance to re-parameterization \cite{RiemanNN_I, RiemanNN_II}.
On the other hand, the inner product related to the perceptual loss would be
$$\sum_{\mathrm{activities\;} i \neq 0}\;\lambda_{\mathrm{layer}(i)} \;a_i(\x) \, a_i(\x')$$
for some arbitrary fixed layer-dependent weights $\lambda_{\mathrm{layer}(i)}$. 


\subsection{Properties for parameter-sharing networks}
When sharing weights, as in convolutional networks,
the gradient $\nabla_{\!\theta} f_\theta(\x)$
is made of the same coefficients 
(impact-weighted activations)
but summed over shared parameters.
Denoting by $\mathcal{S}(i)$ the set of (neuron, input activity) pairs where the parameter $w_i$ is involved,
$$k^I_\theta(\x,\x') \;\;=\;\; \sum_{\text{params}\;i} \left( \sum_{(j,k)\in\mathcal{S}_i} a_{k}(\x) \frac{d f_\theta(\x) }{db_{j}} \right) \left( \sum_{(j,k)\in\mathcal{S}_i}  a_{k}(\x') \frac{d f_\theta(\x') }{db_{j}} \right)$$
Thus, in convolutional networks, $k^I_\theta$ similarity does not imply similarity of first layer activations anymore, but only of their (impact-weighted) spatial average.
More generally, any invariance introduced by a weight sharing scheme in an architecture will be reflected in the similarity measure $k^I_\theta$, which is expected as $k^I_\theta$ was defined as the input similarity \emph{from the neural network perspective}.

Note that this type of objects was recently studied from an optimization viewpoint
under the name of Neural Tangent Kernel~\cite{NTK,LazyTraining} in the infinite layer width limit.






\section{Higher output dimension}
\label{sec:higher}

Let us now study the more complex case where $f_\theta(\x)$ is a vector $\left( f^i_\theta(\x) \right)_{i \in [1,d]}$ in $\R^d$ with $d > 1$.
Under a mild hypothesis on the network (output expressivity), always satisfied unless specially designed not to:
\begin{theorem} \label{th:multidim}
The optimal parameter change $\delta \theta$ to push $f_\theta(\x)$ in a direction $\vv \in \R^d$ (with a force $\epsi \in \R$), \ie such that $f_{\theta + \delta \theta} (\x) - f_\theta(\x) = \epsi \vv$,
  induces at any other point $\x'$ the following output variation:
\begin{equation}
  \label{eq:multidim}
f_{\theta + \delta \theta} (\x') - f_\theta(\x')  = \epsi \, K_\theta(\x',\x)\,  K_\theta(\x,\x)^{-1}\, \vv \, +\, O(\epsi^2)
\end{equation}
where the $d \times d$ kernel matrix $K_\theta(\x',\x)$ is defined by $K^{ij}_\theta(\x',\x) =  \nabla_{\!\theta} f^i_\theta(\x') \cdot \nabla_{\!\theta} f^j_\theta(\x)$.
\end{theorem}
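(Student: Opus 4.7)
The plan is to treat this as a constrained linearization problem. Let $J(\x)$ denote the $d \times P$ Jacobian of $f_\theta$ with respect to $\theta$ at input $\x$, that is, the matrix whose $i$-th row is $\nabla_{\!\theta} f^i_\theta(\x)^{\!\top}$, so that by construction $J(\x)\, J(\x)^{\!\top} = K_\theta(\x,\x)$ and $J(\x')\, J(\x)^{\!\top} = K_\theta(\x',\x)$. Taylor-expanding the network in $\theta$ gives
$$f_{\theta + \delta \theta}(\x) - f_\theta(\x) \;=\; J(\x)\, \delta\theta \,+\, O(\|\delta\theta\|^2),$$
so the target condition $f_{\theta+\delta\theta}(\x) - f_\theta(\x) = \epsi \vv$ is, to leading order, the linear system $J(\x)\, \delta\theta = \epsi \vv$.

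Next I would solve for the optimal (minimum-norm) $\delta\theta$, which is what the 1D formula $\delta\theta = \epsi\, \nabla_{\!\theta} f_\theta(\x)/\|\nabla_{\!\theta} f_\theta(\x)\|^2$ already was, simply rewritten with the pseudo-inverse. The output-expressivity hypothesis is precisely the statement that $J(\x)$ has full row rank: any direction $\vv \in \R^d$ can be realized at $\x$. This is exactly the condition that makes $K_\theta(\x,\x) = J(\x) J(\x)^{\!\top}$ invertible, and the Moore–Penrose minimum-norm solution reads
$$\delta\theta \;=\; \epsi\, J(\x)^{\!\top}\, \bigl( J(\x)\, J(\x)^{\!\top} \bigr)^{-1} \vv \;=\; \epsi\, J(\x)^{\!\top}\, K_\theta(\x,\x)^{-1}\, \vv.$$
A quick check against the 1D case ($d=1$, $\vv = 1$) recovers the formula already established in the text, confirming the reduction.

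Finally I would substitute this expression into the linearization at $\x'$:
$$f_{\theta + \delta\theta}(\x') - f_\theta(\x') \;=\; J(\x')\, \delta\theta \,+\, O(\|\delta\theta\|^2) \;=\; \epsi\, J(\x')\, J(\x)^{\!\top}\, K_\theta(\x,\x)^{-1}\, \vv \,+\, O(\epsi^2),$$
and identify $J(\x')\, J(\x)^{\!\top}$ with the kernel matrix $K_\theta(\x',\x)$. Since $\|\delta\theta\| = O(\epsi)$ (the factor $K_\theta(\x,\x)^{-1}$ is $\theta$-dependent but $\epsi$-independent), the error term $O(\|\delta\theta\|^2)$ is indeed $O(\epsi^2)$, which is exactly the claimed identity~\eqref{eq:multidim}.

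The only non-routine point is the justification that ``optimal'' means minimum-norm and that ``output expressivity'' is the rank condition guaranteeing invertibility of $K_\theta(\x,\x)$; once those are made explicit, the rest is a one-line pseudo-inverse computation followed by substitution into a first-order Taylor expansion. There is no real analytic obstacle; the main care needed is bookkeeping of the $O(\epsi^2)$ remainder and making sure the chosen $\delta\theta$ indeed solves the constraint only to first order (which is all the statement requires, since the theorem itself is stated modulo $O(\epsi^2)$).
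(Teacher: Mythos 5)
Your proof is correct and is essentially the paper's own argument in more compact notation: the paper decomposes $\delta\theta$ as a linear combination of the coordinate gradients plus an orthogonal remainder $\gamma$, notes that $\gamma$ contributes nothing to first order and so is set to zero for minimality, and solves $C\alpha = \epsi\vv$ with $C = K_\theta(\x,\x)$ — which is precisely your Moore--Penrose minimum-norm solution $\delta\theta = \epsi\, J(\x)^{\!\top} K_\theta(\x,\x)^{-1}\vv$, followed by the same substitution into the linearization at $\x'$. The only presentational difference is that the paper justifies invertibility of $K_\theta(\x,\x)$ by explicitly arguing linear independence of the per-coordinate gradients (via the bias parameters), where you invoke the full-row-rank reading of output expressivity directly.
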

The similarity kernel is now a matrix and not just a single value, as it describes the relation between moves $\vv \in \R^d$.
Note that these matrices $K_\theta$ are only $d \times d$ where $d$ is the output dimension. They are thus generally small and easy to manipulate or inverse.

\paragraph{Normalized similarity matrix}
The unitless symmetrized, normalized version of the kernel (\ref{eq:multidim}) is:
\begin{equation}
  \label{eq:multidimkern}
  K_\theta^C(\x,\x') \;=\;  K_\theta(\x,\x)^{-1/2}\; K_\theta(\x,\x')\;  K_\theta(\x',\x')^{-1/2} \;.
\end{equation}
It has the following properties:
its coefficients are bounded, in $[-1,1]$;
its trace is at most $d$;
its (Frobenius) norm is at most $\sqrt{d}$;
self-similarity is identity: $\forall \x, \,\;K_\theta^C(\x,\x) = \Id$;
the kernel is symmetric, in the sense that  $K_\theta^C(\x',\x) = K_\theta^C(\x,\x')^T$.

\paragraph{Similarity in a single value}
\label{sec:singlevalue}
To summarize the similarity matrix $K_\theta^C(\x,\x')$ into a single real value in $[-1,1]$, we consider:\vspace{-2mm}
\begin{equation}
  \label{eq:multidimkernsum}
  k_\theta^C(\x,\x') \;=\;
  \frac{1}{d} \,\Tr\, K^C_\theta(\x,\x') \;.
\end{equation}
It can be shown indeed that if $k_\theta^C(\x,\x')$ is close to 1, then $K_\theta^C(\x,\x')$ is close to $\Id$, and reciprocally. See Appendix~\ref{sec:high2} for more details and a discussion about the links between $\frac{1}{d} \,\Tr\, K^C_\theta(\x,\x')$ and $\big\| K_\theta^C(\x,\x') - \Id\big\|_F$.

\paragraph{Metrics on output: rotation invariance}
Similarity in $\R^d$ might be richer than just estimating distances in $L^2$ norm. For instance, for our 2D image registration task, the network could be known (or desired) to be equivariant to rotations.
The similarity between two output variations $\vv$ and $\vv'$ can be made rotation-invariant by applying the rotation that best aligns $\vv$ and $\vv'$ beforehand. This can actually be easily computed in closed form and yields:
$$  k^{C, \rot}_\theta(\x,\x') \;=\; \frac{1}{2}  \sqrt{ \big\| K_\theta^C(\x,\x') \big\|_F^2 + 2 \det K_\theta^C(\x,\x')} \; . $$

Note that other metrics are possible in the output space. For instance, the loss metric quantifies the norm of a move $\vv$ by its impact on the loss $\frac{dL(y)}{dy}\big|_{f_\theta(\x)}(\vv)$. It has a particular meaning though, is not intrinsic, and is not always relevant, \eg in the noisy label case seen in Section \ref{sec:denoising}.


\paragraph{The case of classification tasks}
When the output of the network is a probability distribution $p_{\theta,\x}(c)$, over a finite number of given classes $c$ for example, it is natural from an information theoretic point of view to rather consider $f^c_\theta(\x) =  - \log p_{\theta,\x}(c)$. This is actually the quantities computed in the pre-softmax layer from which common practice directly computes the cross-entropy loss.

It turns out that the $L^2$ norm of variations $\delta \!f$ in this space naturally corresponds to the Fisher information metric, which quantifies the impact of parameter variations $\delta \theta$ on the output probability $p_{\theta,\x}$, as $\mathrm{KL}(p_{\theta,\x}||p_{\theta+\delta\theta,\x})$. The matrices $K_{\theta}(\x,\x) = \big(\, \nabla_{\theta} f_\theta^c(\x) \cdot \nabla_{\theta} f_\theta^{c'}(\x) \,\big)_{c,c'}$ and $F_{\theta,\x} = \E_c \left[ \nabla_{\theta} f_\theta^c(\x)\; \nabla_{\theta} f_\theta^c(\x)^T \right]$ are indeed to each other what correlation is to covariance.
Thus the quantities defined in Equation (\ref{eq:multidimkernsum}) already take into account information geometry when applied to the pre-softmax layer, and do not need supplementary metric adjustment.

\paragraph{Faster setup for classification tasks with many classes}
\label{Lorisproj}
In a classification task in $d$ classes with large $d$, the computation of $d \times d$ matrices may be prohibitive. As a workaround, for a given input training sample $\x$, the classification task can be seen as a binary one (the right label $c_R$ \vs the other ones), in which case the $d$ outputs of the neural network can be accordingly combined in a single real value.
The 1D similarity measure can then be used to compare any training samples of the same class.

When making statistics on similarity values $\E_{\x'}\big[ k^C_\theta(\x,\x') \big]$, another possible task binarization approach is to sample an adversary class $c_A$ along with $\x'$,  and hence consider $\nabla_{\!\theta} f^{c_R}_\theta(\x) - \nabla_{\!\theta} f^{c_A}_\theta(\x)$. Both approaches will lead to similar results in Section~\ref{sec:enforce}.

\section{Estimating density}
\label{sec:density}

In this section, we use similarity to estimate input neighborhoods and perform statistics on them.

\subsection{Estimating the number of neighbors}

Given a point $\x$, how many samples $\x'$ are similar to $\x$ according to the network? This can be measured by computing $k^C_\theta(\x,\x')$ for all $\x'$ and picking the closest ones, \ie \eg the $\x'$ such that $k^C_\theta(\x,\x') \geqslant 0.9$.
More generally, for any data point $\x$, the histogram of the similarity $k^C_\theta(\x,\x')$ over all $\x'$ in the dataset (or a representative subset thereof) can be drawn, and turned into an estimate of the number of neighbors of $\x$. To do this, several types of estimates are possible:
\begin{itemize} \setlength\itemsep{0em}  \setlength{\parskip}{1pt} 
\item hard-thresholding, for a given threshold $\tau \in [0,1]$: \hfill $N_\tau(\x) = \sum_{\x'} \mathbb{1}_{k^C_\theta(\x,\x') \geqslant \tau}$ \vspace{0.5mm}
\item soft estimate: \hfill $N_S(\x) \; = \; \sum_{\x'} k_\theta^C(\x,\x')$ \vspace{0.5mm}
\item less-soft positive-only estimate ($\alpha > 0$): \hfill $N^+_\alpha(\x) \; = \; \sum_{\x'} \mathbb{1}_{k_\theta^C(\x,\x') > 0}\; k_\theta^C(\x,\x')^\alpha$
\end{itemize}
In practice we observe that $k_\theta^C$ is very rarely negative, and thus the soft estimate $N_S$ can be justified as an average of the hard-thresholding estimate $N_\tau$ over all possible thresholds $\tau$:
$$\int_{\tau = 0}^1 \!\!\!N_\tau(\x) d\tau \;=\; \sum_{\x'} \int_{\tau = 0}^1 \!\!\mathbb{1}_{k^C_\theta(\x,\x') \geqslant \tau} \, d\tau \;=\; \sum_{\x'} k^C_\theta(\x,\x')\, \mathbb{1}_{k^C_\theta(\x,\x') \geqslant 0} \;=\; N^+_1(\x) \;\simeq\; N_S(\x)$$

\subsection{Low complexity of the soft estimate $N_S(\x)$}

\label{sec:NNcomplexity}

The soft estimate $N_S(\x)$ is rewritable as:
$$\sum_{\x'} k^C_\theta(\x,\x') =  \sum_{\x'} \frac{ \nabla_\theta f_\theta(\x) }{\| \nabla_\theta f_\theta(\x) \|} \cdot \frac{ \nabla_\theta f_\theta(\x') }{\|\nabla_\theta f_\theta(\x')\|} = \frac{ \nabla_\theta f_\theta(\x) }{\| \nabla_\theta f_\theta(\x) \|} \cdot \mathbf{g} \;\;\;\;\mathrm{with}\;\;\;\mathbf{g}=\sum_{\x'} \frac{ \nabla_\theta f_\theta(\x') }{\|\nabla_\theta f_\theta(\x')\|}$$
and consequently $N_S(\x)$ can be computed jointly for all $\x$ in linear time $O(|\mathcal{D}|p)$ in the dataset size $|\mathcal{D}|$ and in the number of parameters $p$, in just two passes over the dataset, when the output dimension is 1. For higher output dimensions $d$, a similar trick can be used and the complexity becomes $O(|\mathcal{D}|d^2p)$. For classification tasks with a large number $d$ of classes, the complexity can be reduced to $O(|\mathcal{D}|p)$ through an approximation consisting in binarizing the task (\cf end of Section~\ref{Lorisproj}).



\subsection{Test of the various estimators}

In order to rapidly test the behavior of all possible estimators, we applied them to a toy problem where the network's goal is to predict a sinusoid. To change the difficulty of the problem, we vary its frequency, while keeping the number of samples constant. Appendix~\ref{sec:estim2} gives more details and results for the toy problem. Fig.\ref{fig:toy_avg_all_measures} shows for each estimator (with different parameters when relevant), the result of their neighbor count estimation. When the frequency $f$ of the sinusoid to predict increases, the number of neighbors decreases in $\frac{1}{f}$ for every estimator. This aligns with our intuition that as the problem gets harder, the network needs to distinguish input samples more to achieve a good performance, thus the amount of neighbors is lower. In particular we observe that the proposed $N_S(\x)$ estimator behaves well, thus we will use that one in bigger studies requiring an efficient estimator.





\subsection{Further potential uses for fitness estimation}

When the number of neighbors of a training point $\x$ is very low, 
the network is able to set any label to $\x$, as this won't interfere with other points, by definition of our similarity criterion $k_\theta(\x,\x')$.
This is thus a typical overfit case, where the network can learn by heart a label associated to a particular, isolated point.

On the opposite, when the set of neighbors of $\x$ is a large fraction of the dataset, comprising varied elements, by definition of $k_\theta(\x,\x')$ the network is not able to distinguish them, and consequently it can only provide a common output for all of them. Therefore it might not be able to express variety enough, which would be a typical underfit case.


The quality of fit can thus be observed by monitoring the number of neighbors together with the variance of the desired labels in the neighborhoods (to distinguish underfit from just high density).


\begin{figure}
\begin{minipage}[c]{0.6\linewidth}
  \includegraphics[width=0.99\linewidth]{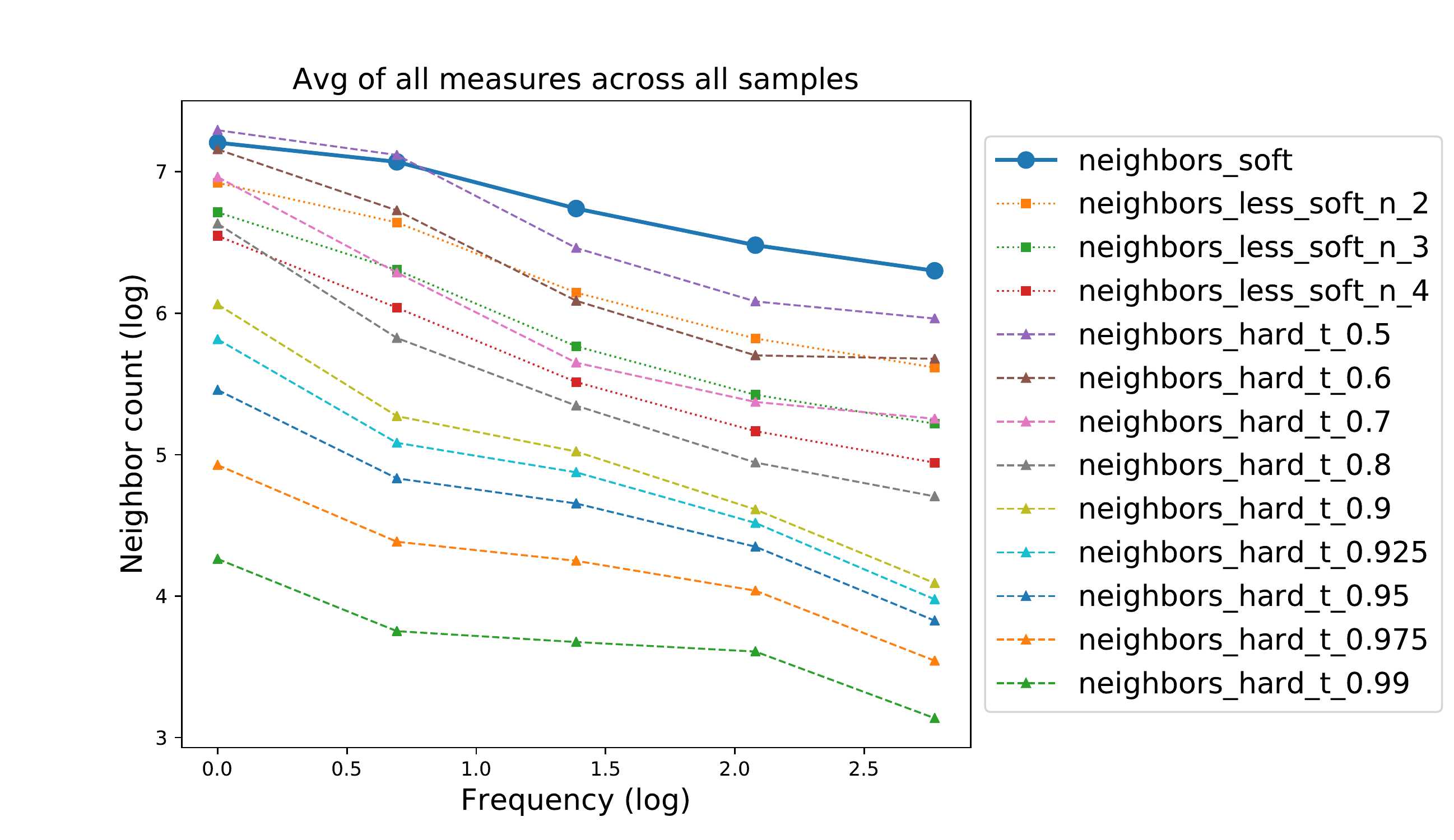}
  \caption{Density estimation using the various approaches (log scale).
    All approaches behave similarly and show good results, except the ones with extreme thresholds.}
  \label{fig:toy_avg_all_measures}
\end{minipage} \hfill
\begin{minipage}[c]{0.35\linewidth}
    \includegraphics[width=\linewidth]{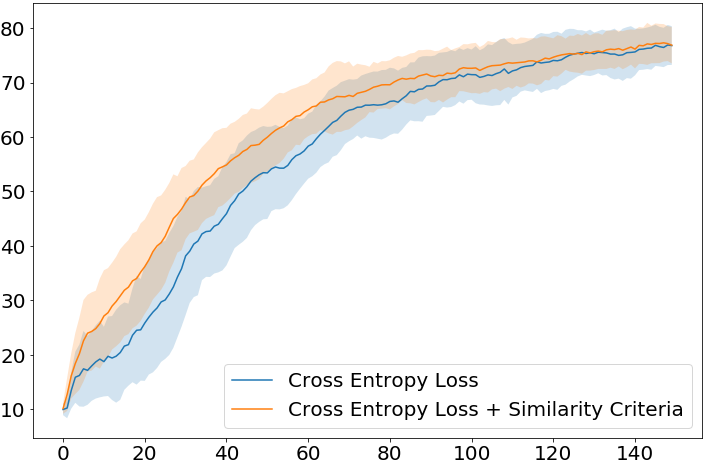}
    \caption{Validation accuracy of a neural network trained on MNIST with and without the similarity criterion (note that the x-axis is the number of minibatches presented to the network, not of epochs).}
    \label{fig:dyn}
\end{minipage}
\end{figure}


\paragraph{Prediction uncertainty} A measure of the uncertainty of a prediction $f_\theta(\x)$ could be to
check how easy it would have been to obtain another value during training, without disturbing the training of other points.
A given change $\vv$ of $f_\theta(\x)$ induces changes $\frac{k^I_\theta(\x,\x')}{\| \nabla_\theta f_\theta(\x) \|^2} \vv$ over other points $\x'$ of the dataset, creating a total $L^1$ disturbance $\sum_{\x'} \|\frac{k^I_\theta(\x,\x')}{\| \nabla_\theta f_\theta(\x) \|^2} \vv\|$.
The uncertainty factor would then be the norm of $\vv$ affordable within a disturbance level, and quickly approximable as $\frac{ \| \nabla_\theta f_\theta(\x) \|^2 }{ \sum_{\x'} k^I_\theta(\x,\x')}$.

\section{Enforcing similarity}
\label{sec:enforce}

The similarity criterion we defined could be used not only to estimate how similar two samples are perceived, after training, but also to incite the network, during training, to evolve in order to consider these samples as similar.

\paragraph{Asking two samples to be treated as similar}

If two inputs $\x$ and $\x'$ are known to be similar (from a human point of view), one can enforce their similarity from the network perspective, by adding to the loss the term:\vspace{-3mm}
$$- k^C_\theta(\x,\x') \; .$$

\paragraph{Asking a distribution of samples to be treated as similar}

By extension, to enforce the similarity of a subset $\Scal$ of training samples, of size $n = |\Scal|$, one might consider the average pairwise similarity $k_\theta^C$ over all pairs, or the standard deviation of the gradients. Both turn out to be equivalent to maximizing the
norm of the gradient mean $\mu = \frac{1}{n} \sum_{i\in\Scal} \frac{ \nabla_{\!\theta} f_\theta(\x_i) }{ \| \nabla_{\!\theta} f_\theta(\x_i) \|}$:
$$\frac{1}{n (n-1)} \sum_{i,j\in\Scal, i\neq j} \!\!\!k^C_\theta(x_i,x_j) \;=\;
\frac{n}{n-1} \|\mu\|^2 - \frac{1}{n-1} \;\;\;\;\;\;\mathrm{and}\;\;\;\;\;\; \var_{i\in\S}\, \frac{ \nabla_{\!\theta} f_\theta(\x_i) }{ \| \nabla_{\!\theta} f_\theta(\x_i) \|} = 1 - \|\mu\|^2\,.$$

In practice, common deep learning platforms are much faster when using mini-batches, but then return only the gradient sum $\sum_{i\in\mathcal{B}} \nabla_{\!\theta} f_\theta(\x_i)$ over a mini-batch $\mathcal{B}$, not individual gradients, preventing the normalization of each of them to compute $k_\theta^C$ or $\mu$. So instead we compare means of un-normalized gradients, over two mini-batches $\mathcal{B}_1$ and $\mathcal{B}_2$ comprising each $n_B$ samples from $\Scal$, which yields the criterion:
$$n_B \, \frac{ \| \mu_1 - \mu_2 \|^2 }{ \| \mu_1 \| \| \mu_2 \| } \;\;\;\;\;\;\;\;\mathrm{where}\;\;\;\;\mu_k = \frac{1}{n} \sum_{i\in\mathcal{B}_k} \nabla_{\!\theta} f_\theta(\x_i) \, .$$
The factor $n_B$ counterbalances the $\frac{1}{\sqrt{n_B}}$ variance reduction effect due to averaging over $n_B$ samples.

\paragraph{Group invariance} The distributions of samples asked to be seen as similar could be group orbits~\cite{cohen2016group}. A differential formulation of group invariance enforcement is also proposed in Appendix~\ref{sec:group2}.

\paragraph{Complexity}
The \emph{double-backpropagation} routine, available on common deep learning platforms, allows the optimization of
such criteria~\cite{drucker1991double,hochreiter1995simplifying,rifai2011contractive,gulrajani2017improved}, roughly doubling the computational time of a gradient step.

\paragraph{Dynamics of learning}
Our approach enforces similarity not just at the output level, but within the whole internal computational process. Therefore, during training, information is provided directly to each parameter instead of being back-propagated through possibly many layers. Thus the dynamics of learning are expected to be different, especially for deep networks.

To test this hypothesis, we train a small network on MNIST with and without the similarity criteria acting as an auxiliary loss (see Fig.~\ref{fig:dyn}). As a result, we observe an acceleration of the convergence very early in the learning process. It is worth noting that this effect can be observed across a wide range of different neural architectures. We performed additional experiments on toy datasets as well as on CIFAR10 with no or only negligible improvements. All together this suggests that using the similarity criteria during training may be beneficial to specific datasets as opposed to specific architectures, and indeed, as the class intra-variability in CIFAR10 is known to be high, considering all examples of a class of CIFAR10 as similar is less relevant.

\section{Dataset self-denoising}
\label{backtodenoise}

We now go back to the task described in Section~\ref{sec:denoising} and show how input similarity can be used to analyse experimental results and bring theoretical guarantees about robustness to label noise.

\subsection{Similarity experimentally observed between patches}
We studied the multi-round training scheme of \cite{anonymous} by applying our similarity measure to a sampling of input patches of the training dataset for one network per round. The principle of the multiple round training scheme is to reduce the noise of the annotations, obtaining aligned annotations in the end (more details in Appendix~\ref{sec:denoise2}). For a certain input patch, we computed its similarity with all the other patches for the 3 networks. With those similarities we can compute the nearest neighbors of that patch, see Fig. \ref{fig:k_nearest}. The input patch is of a suburb area with sparse houses and individual trees. The closest neighbors look similar as they usually feature the same types of buildings, building arrangement and vegetation. However sometimes the network sees a patch as similar when it is not clear from our point of view (for example patches with large buildings).

For more in-depth results, we computed the histogram of similarities for the same patch, see Fig.~\ref{fig:bloomington22_individual_hist_02}.
We observe that round 2 shows different neighborhood statistics, in that the patch is closer to all other patches than in other rounds. We observe the same behavior in 19 other input patches (see Appendix~\ref{sec:denoise2}). An hypothesis for this phenomenon is that the average gradient was not 0 at the end of that training round (due to optimization convergence issues, e.g.), which would shift all similarity histograms by a same value.

Qualitatively, for patches randomly sampled, their similarity histograms tend to be approximately symmetric in round 2, but with a longer left tail in round 1 and a longer right tail in round 3.
Neighborhoods thus seem to change across the rounds, with fewer and fewer close points (if removing the global histogram shift in round 2).
A possible interpretation is that this would reflect an increasing ability of the network to distinguish between different patches, with finer features  in later training rounds.

\begin{figure}
  \center
  \includegraphics[width=\linewidth]{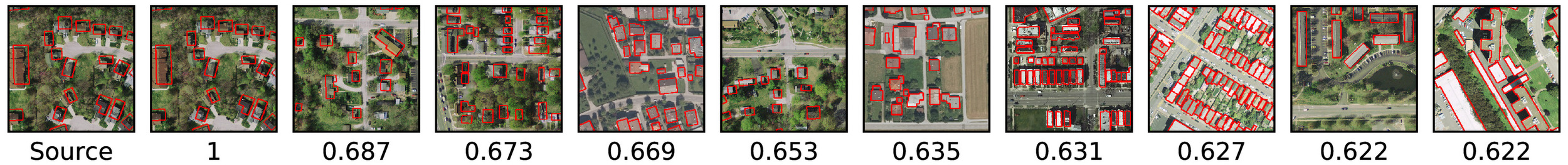}
  \includegraphics[width=\linewidth]{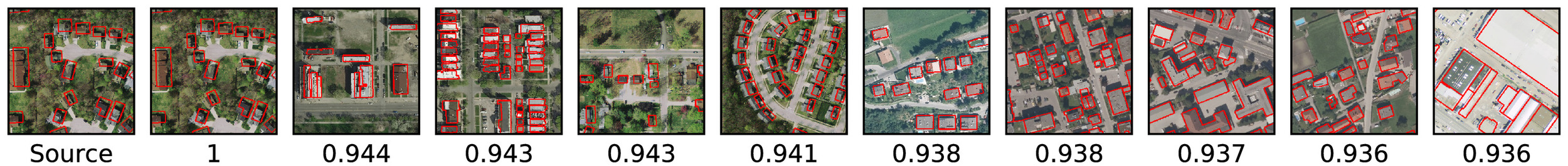}
  \includegraphics[width=\linewidth]{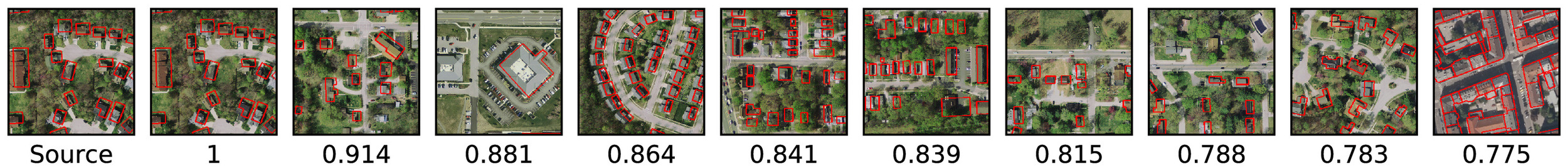}
  \caption{Example of nearest neighbors for a patch. Each line corresponds to a round. Each patch has its similarity written under it.}
  \label{fig:k_nearest}
\end{figure}

\begin{figure}
	\centering
	\begin{subfigure}[b]{0.3\textwidth}
		\centering
		\caption{Round 1}		
		\includegraphics[width=\linewidth]{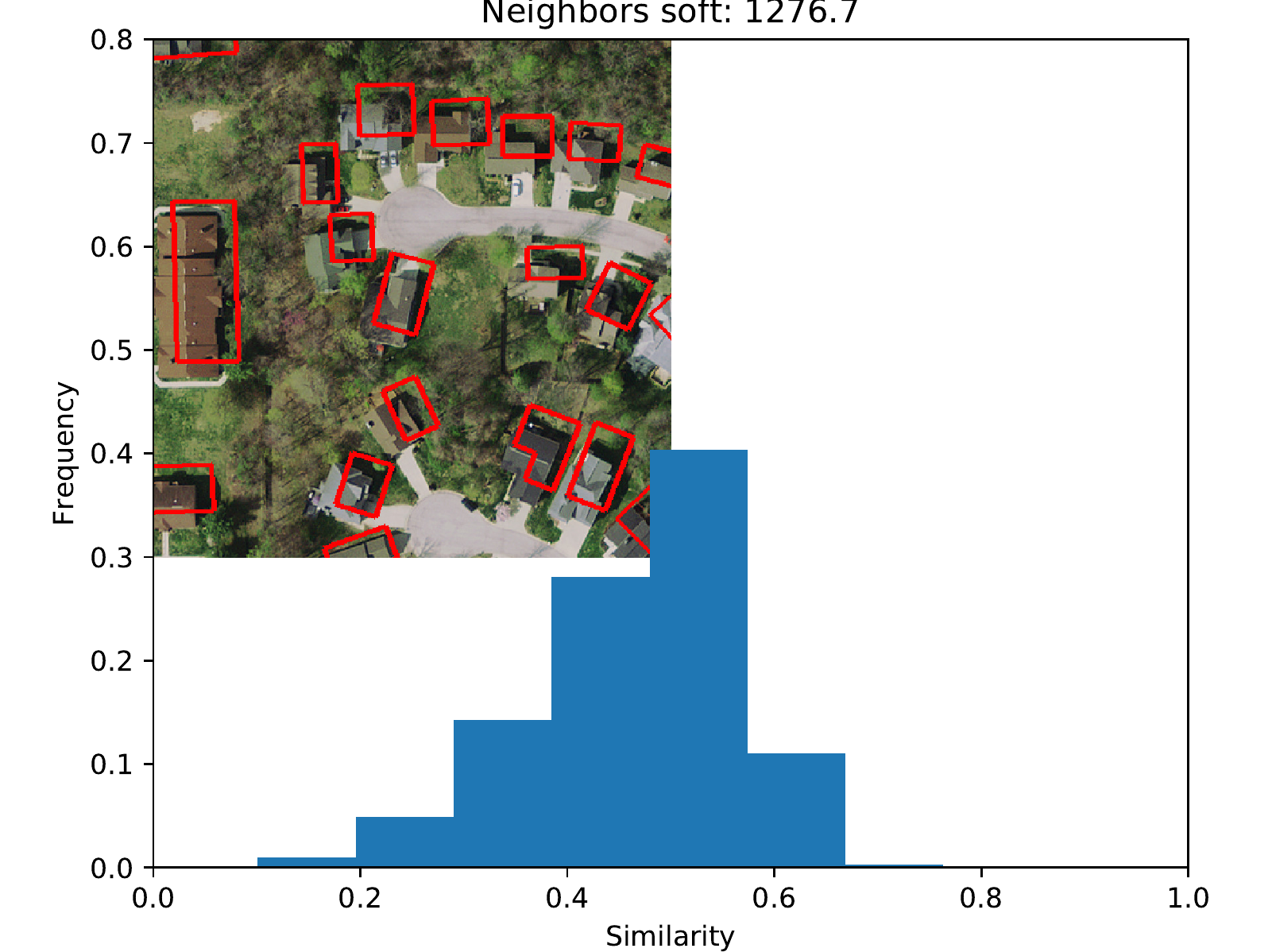}
	\end{subfigure}
	\begin{subfigure}[b]{0.3\textwidth}
		\centering
		\caption{Round 2}		
		\includegraphics[width=\linewidth]{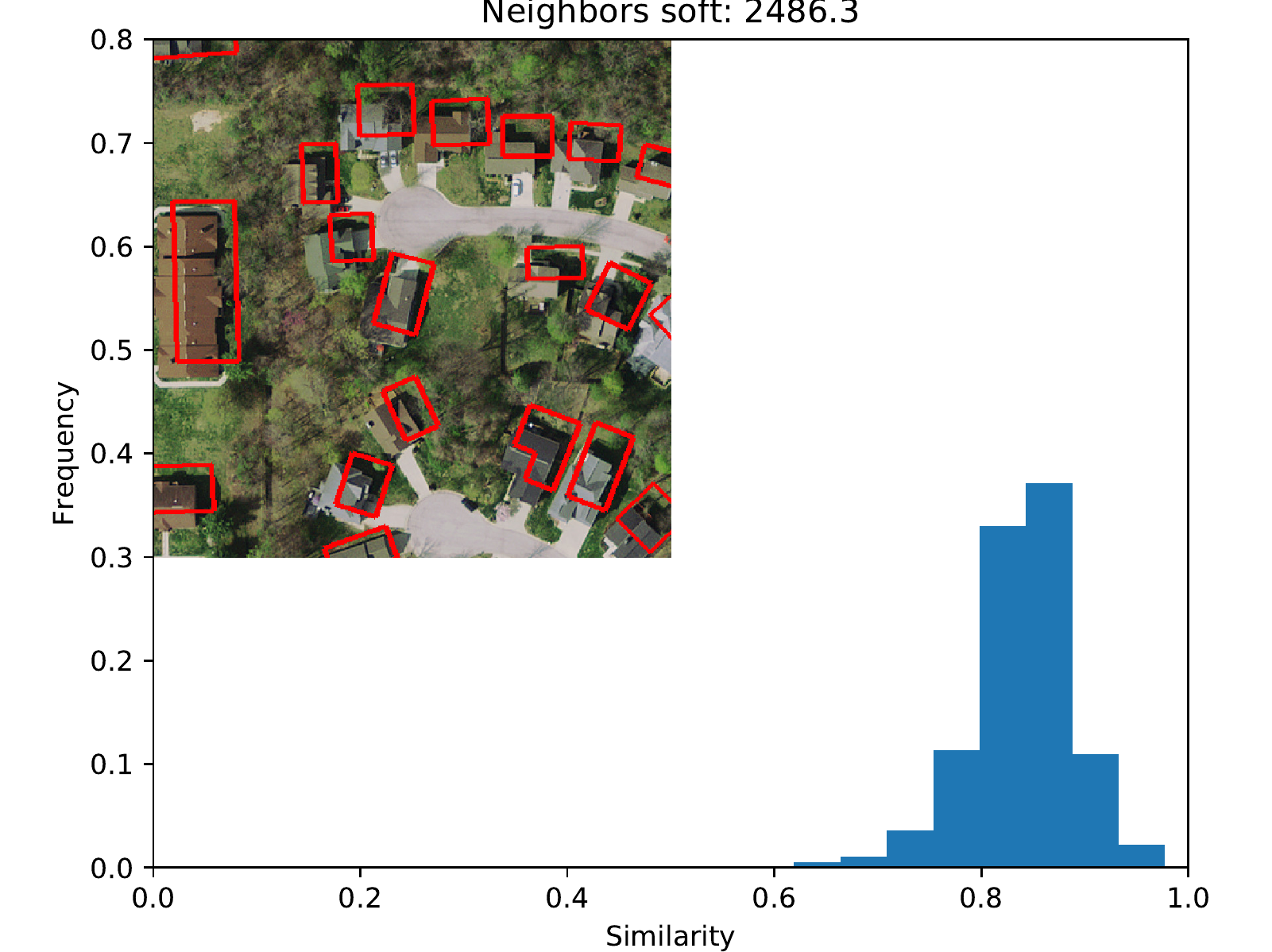}
	\end{subfigure}
	\begin{subfigure}[b]{0.3\textwidth}
		\centering
		\caption{Round 3}		
		\includegraphics[width=\linewidth]{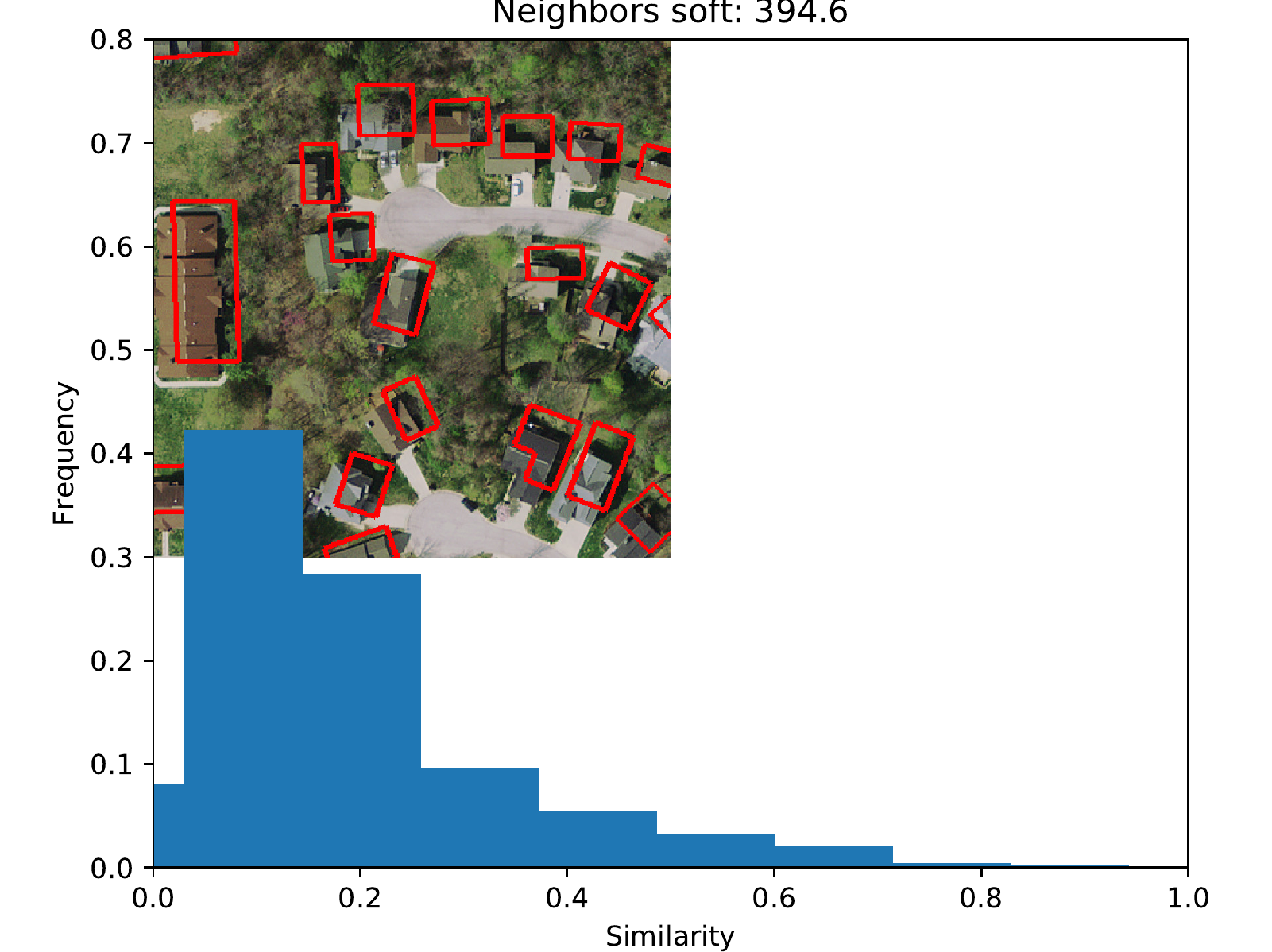}
	\end{subfigure}
	\caption{Histograms of similarities for one patch across rounds.}
	\label{fig:bloomington22_individual_hist_02}
\end{figure}


\subsection{Comparison to the \emph{perceptual loss}}

We compare our approach to the \emph{perceptual loss} on a nearest neighbor retrieval task. We notice that the \emph{perceptual loss} sometimes performs reasonably well, but often not. For instance, we show in Fig.~\ref{fig:comparison_percept} the closest neighbors to a structured residential area image, for the \emph{perceptual loss}
(first row: not making sense) and for our similarity measure (second row: similar areas).

\begin{figure}
	\rotatebox[origin=l]{90}{$\!$Perceptual}\includegraphics[width=0.98\linewidth]{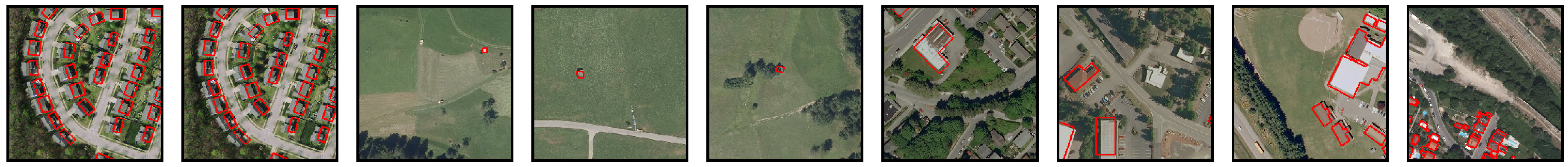}
	\rotatebox[origin=l]{90}{$\!\!$Similarity}\includegraphics[width=0.98\linewidth]{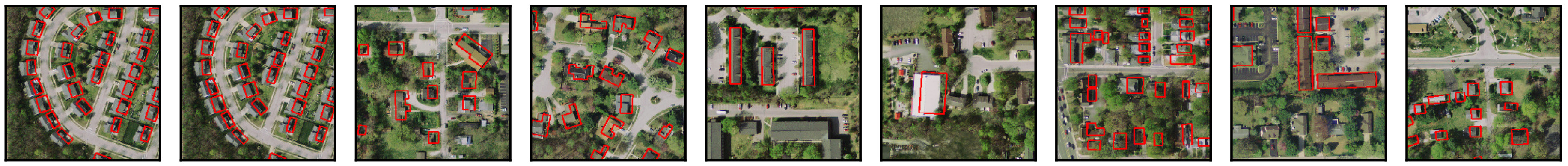}
	\hspace*{5mm}Source\hspace*{2.2mm} | \hspace*{3.2mm}Closest neighbor patches
	\caption{Closest neighbors to the leftmost patch, using the \emph{perceptual loss} (first row) and our similarity definition (second row).}
	\label{fig:comparison_percept}
\end{figure}



\subsection{From similarity statistics to self-denoising effect estimation}

We now show how such similarity experimental computations can be used to solve the initial problem of Section~\ref{sec:denoising}, by explicitly turning similarity statistics into a quantification of the self-denoising effect. 

Let us denote by $y_i$ the true (unknown) label for input $\x_i$, by $\yt_i$ the noisy label given in the dataset, and by $\yh_i = f_\theta(\x_i)$ the label predicted by the network. 
We will denote the (unknown) noise by  $\epsi_i = \yt_i - y_i$ and assume it is centered and i.i.d., with finite variance $\sigma_\epsi$.
The training criterion is $ E(\theta) = \sum_j || \yh_j - \yt_j ||^2 $.
At convergence, the training leads to a local optimum of the energy landscape:
$ \nabla_{\!\theta} E = 0 $, that is, $ \sum_j (\yh_j - \yt_j) \nabla_{\!\theta} \yh_j = 0 $.
Let's choose any sample $i$ and multiply by $\nabla_{\!\theta} \yh_i$ : using $\;k^I_\theta(\x_i,\x_j) = \nabla_{\!\theta} \yh_i . \nabla_{\!\theta} \yh_j\,$, we get:
$$\;\; \sum_j (\yh_j - \yt_j) \, k^I_\theta(\x_j, \x_i) = 0.$$ 
Let us denote by $ k^{IN}_\theta(\x_j,\x_i) =  k^{I}_\theta(\x_j,\x_i) \big(\sum_j k^I_\theta(\x_j,\x_i)\big)^{-1} $ the column-normalized kernel, and by  $ \E_k [ a ] =\, \sum_j\, a_j\, k^{IN}_\theta(\x_j,\x_i)$ the mean value of $a$ in the neighborhood of $i$, that is, the weighted average of the $a_j$ with weights $k^I_\theta(\x_j,\x_i)$ normalized to sum up to 1. This is actually a kernel regression, in the spirit of Parzen-Rosenblatt window estimators. Then the previous property can be rewritten as $ \,\E_k[ \yh ] = \E_k[ \yt ]\, $.
As   $\, \E_k[ \yt ] = \E_k[ y ] + \E_k[ \epsi ] \,$, this yields:
$$\;\;\; \yh_i - \E_k[ y ]  =  \E_k[ \epsi ]  +  ( \yh_i - \E_k[ \yh ] ) $$
\ie the difference between the predicted $\yh_i$ and the average of the true labels in the neighborhood of $i$ is equal to the average of the noise in the neighborhood of $i$, up to the deviation of the prediction $\yh_i$ from the average prediction in its neighborhood.

We want to bound the error $\| \yh_i - \E_k[ y ]\|$ without knowing neither the true labels $y$ nor the noise~$\epsi$. One can show that $\E_k[ \epsi ] \propto \var_\epsi(\E_k[ \epsi ])^{1/2} = \sigma_\epsi \, \| k^{IN}_\theta (\cdot,\x_i) \|_{L2}$. The denoising factor is thus the similarity kernel norm $\| k^{IN}_\theta (\cdot,\x_i)\|_{L2}$, 
which is between $1/\sqrt{N}$ and 1, depending on the neighborhood quality. It is $1/\sqrt{N}$ when all $N$ data points are identical, i.e. all satisfying $k^C_\theta(\x_i,\x_j) = 1$. On the other extreme, this factor is 1 when all points are independent: $k^I_\theta(\x_i,\x_j) =~0 \;\;$ $\forall i \neq j$. 
This way we extend \emph{noise2noise} \cite{noise2noise}
to real datasets with
non-identical
inputs.

In our remote sensing experiment, we estimate this way a denoising factor of 0.02, consistent across all training rounds and inputs ($\pm 10\%$), implying that each training round contributed equally to denoising the labels. This is confirmed by Fig.~\ref{fig:accuracies}, which shows the error steadily decreasing, on a control test where true labels are known.
The shift $( \yh_i - \E_k[ \yh ] )$ on the other hand can be directly estimated given the network prediction. In our case, it is $4.4$px on average, which is close to the observed median error for the last round in Fig.~\ref{fig:accuracies}.
It is largely
input-dependent,
with variance $3.2$px, which is reflected by the spread distribution of errors in Fig.~\ref{fig:accuracies}. This input-dependent shift thus provides a hint about prediction reliability.


It is also possible to bound $( \yh_i - \E_k[ \yh ] ) = \E_k[ \yh_i - \yh ]$ using only similarity information (without predictions $\yh$). Theorem \ref{basicnet} implies that the application: $\frac{\nabla_{\!\theta} f_\theta(\x)}{\|\nabla_{\!\theta} f_\theta(\x)\|} \mapsto f_\theta(\x)$ is well-defined, and it can actually be shown to be 
Lipschitz with a network-dependent constant (under mild hypotheses). Thus
$$\| f_\theta(\x) -  f_\theta(\x')\| \leqslant C \left\| \frac{\nabla_{\!\theta} f_\theta(\x)}{\|\nabla_{\!\theta} f_\theta(\x)\|} - \frac{\nabla_{\!\theta} f_\theta(\x')}{\|\nabla_{\!\theta} f_\theta(\x')\|}  \right\| = \sqrt{2} C \sqrt{1 - k_\theta^C(\x,\x')}\;,$$
yielding $\|  \yh_i - \yh_j \|  \leqslant \sqrt{2} C \sqrt{1 - k_\theta^C(\x_i,\x_j)} $ and thus $\big|\E_k[ \yh_i - \yh ]\,\big| \leqslant \sqrt{2} C \E_k\!\left[ \sqrt{1 - k_\theta^C(\x_i,\cdot)}\,\right]$.

\section{Conclusion}

We defined a proper notion of input similarity as perceived by the neural network, based on the ability of the network to distinguish the inputs.
This brings a new tool to analyze trained networks, in plus of visualization tools such as grad-CAM \cite{gradcam}.
We showed how to turn it into a density estimator, which was validated on a controlled experiment, and usable to perform fast statistics on large datasets. It opens the door to underfit/overfit/uncertainty analyses or even control during training, as it is differentiable and computable at low cost.
We also showed that any desired similarity could be enforced during training, at reasonable cost, and noticed a dataset-dependent boosting effect that should be further studied along with
robustness to adversarial attacks,
as such training differs significantly from usual methods.
Finally, we extended \emph{noise2noise} \cite{noise2noise} to the case of non-identical inputs, thus expressing self-denoising effects as a function of inputs' similarities.
The code is available at
\url{https://github.com/Lydorn/netsimilarity}~.

\section*{Acknowledgments}

We thank Victor Berger and Adrien Bousseau for useful discussions.
This work benefited from the support of the project EPITOME ANR-17-CE23-0009 of the French National Research Agency~(ANR).



{\small
\bibliographystyle{plainnat}
\bibliography{similarity,not_anonymous}
}

\setcounter{theorem}{0}
\setcounter{corollary}{0}

\appendix

\section{Code}

The whole code (image registration, experiments to test density estimators, enforcing similarity...)  is available on the following github repository: \url{https://github.com/Lydorn/netsimilarity} .

\section{Proofs of the properties of the 1D similarity kernel}

We give here the proofs at the properties of the 1-dimensional-output similarity kernel.

\subsection{Proof of Theorem \ref{basicnet}}

\begin{theorem} 
  For any real-valued neural network $f_\theta$
  whose last layer is a linear layer (without any parameter sharing) or a standard activation function thereof (sigmoid, tanh, ReLU...),
  and for any inputs $\x$ and $\x'$,
  $$\nabla_{\!\theta} f_\theta(\x) = \nabla_{\!\theta} f_\theta(\x') \;\;\implies\;\;f_\theta(\x) = f_\theta(\x') \,.$$
\end{theorem}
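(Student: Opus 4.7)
The key observation is that the full gradient equality $\nabla_{\!\theta} f_\theta(\x) = \nabla_{\!\theta} f_\theta(\x')$ is much stronger than needed: the equality of only the components corresponding to the last layer's parameters is already enough. So I would isolate those components and ignore the rest of the gradient.

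Write the last layer explicitly: let $a_1(\x),\dots,a_m(\x)$ denote the penultimate-layer activations, let $w_1,\dots,w_m,b$ denote the last linear layer's weights and bias (no sharing, by hypothesis), let $z(\x) = b + \sum_i w_i a_i(\x)$, and let $\sigma$ be the optional final activation, so that $f_\theta(\x) = \sigma(z(\x))$ (take $\sigma = \mathrm{Id}$ if there is no final activation). A direct chain-rule computation gives
\begin{equation*}
\nabla_{\!b} f_\theta(\x) = \sigma'(z(\x)), \qquad \nabla_{\!w_i} f_\theta(\x) = \sigma'(z(\x))\, a_i(\x).
\end{equation*}
Imposing equality of these components at $\x$ and $\x'$ yields $\sigma'(z(\x)) = \sigma'(z(\x'))$ and $\sigma'(z(\x))\,a_i(\x) = \sigma'(z(\x'))\,a_i(\x')$ for every $i$.

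Next I would split into two cases according to whether $\sigma'(z(\x))$ vanishes. If $\sigma'(z(\x)) \neq 0$, then dividing the second family of equalities by $\sigma'(z(\x)) = \sigma'(z(\x'))$ gives $a_i(\x) = a_i(\x')$ for every $i$; hence $z(\x) = z(\x')$, and applying $\sigma$ yields $f_\theta(\x) = f_\theta(\x')$. If instead $\sigma'(z(\x)) = 0$, then also $\sigma'(z(\x'))= 0$, and here one uses the explicit form of the standard activations listed in the statement. For $\sigma = \mathrm{Id}$, sigmoid, or tanh the derivative never vanishes on $\mathbb{R}$, so this case is vacuous. For ReLU, $\sigma'(z) = 0$ forces $z \leq 0$, which forces $\sigma(z) = 0$; applying this to both $z(\x)$ and $z(\x')$ gives $f_\theta(\x) = 0 = f_\theta(\x')$.

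\textbf{Main obstacle.} The algebra is routine; the only subtle point is the ReLU-type case where $\sigma'$ genuinely can vanish, since there one cannot divide and must instead exploit the specific algebraic identity $\sigma'(z)=0 \Rightarrow \sigma(z)=0$ that ReLU (and its standard cousins such as leaky-ReLU with the appropriate caveat) satisfies. This is why the hypothesis restricts to \emph{standard} activation functions: the proof implicitly relies on the dichotomy that at any point either $\sigma'$ is nonzero, or the output value is already determined independently of the input. One should also mention that the ``no parameter sharing'' hypothesis is used precisely to ensure that $\nabla_{\!b} f_\theta$ and the $\nabla_{\!w_i} f_\theta$ are genuinely independent coordinates of $\nabla_{\!\theta} f_\theta$, so that the full gradient equality does imply equality of each of these scalar components.
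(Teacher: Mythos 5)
Your proof is correct and follows essentially the same route as the paper's: read off the bias-gradient component to get $\sigma'(z(\x))=\sigma'(z(\x'))$, read off the weight-gradient components to recover $a_i(\x)=a_i(\x')$ when $\sigma'$ is nonzero and rebuild the output, and handle the ReLU-type flat piece separately via $\sigma'(z)=0\Rightarrow\sigma(z)=0$. The only cosmetic difference is that you unify the linear and activated cases by taking $\sigma=\mathrm{Id}$, whereas the paper treats them as two bullet points.
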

\begin{proof}
If the last layer is linear, the output is of the form $f_\theta(\x) = \sum_i w_i a_i(\x) + b$, where $w_i$ and $b$ are parameters in $\R$ and $a_i(\x)$ activities from previous layers.
The gradient $\nabla_{\!\theta} f_\theta(\x)$ contains in particular as coefficients the derivatives $\frac{d f_\theta(\x) }{dw_i} = a_i(\x)$.
Thus $\nabla_{\!\theta} f_\theta(\x) = \nabla_{\!\theta} f_\theta(\x') \implies a_i(\x) = a_i(\x') \;\forall i$ in the last layer.
The outputs can be then rebuilt: $f_\theta(\x) = \sum_i w_i a_i(\x) + b = \sum_i w_i a_i(\x') + b = f_\theta(\x')$.

If the output is of the form $f_\theta(\x) = \sigma(c(\x))$ with $c(\x) = \sum_i w_i a_i(\x) + b$, then the gradient equality implies $\frac{d f_\theta(\x) }{db} = \frac{d f_\theta(\x') }{db}$, whose value is $\sigma'(c(\x)) = \sigma'(c(\x'))$. Then, as $\sigma'(c(\x))\, a_i(\x) = \frac{d f_\theta(\x) }{dw_i} = \frac{d f_\theta(\x') }{dw_i} = \sigma'(c(\x'))\, a_i(\x')$, we can deduce $a_i(\x) = a_i(\x')$ for all $i$ provided $\sigma'(c(\x)) \neq 0$. In that case, from these identical activities one can rebuild identical outputs. Otherwise, $\sigma'(c(\x)) = \sigma'(c(\x')) = 0 $, which is not possible with strictly monotonous activation functions, such as tanh or sigmoid. For ReLU, $\sigma'(c(\x)) = 0 \implies \sigma(c(\x)) = 0$ and thus $f_\theta(\x) = f_\theta(\x') = 0$.
The same reasoning holds for other activation functions with only one flat piece (such as the ReLU negative part), \ie for which the set $\sigma(\sigma'^{-1}(\{0\}))$ is a singleton.
\end{proof}

\subsection{Proof of Corollary \ref{alphasim}}
\begin{corollary} 
Under the same assumptions, for any inputs $\x$ and $\x'$,
$$\begin{array}{crcl}
& k^C_\theta(\x.\x') = 1  & \implies & \nabla_{\!\theta} f_\theta(\x) = \nabla_{\!\theta} f_\theta(\x') \,, \vspace{1mm} \\
\mathrm{hence} & k^C_\theta(\x.\x') = 1  & \implies & f_\theta(\x) = f_\theta(\x') \,. \\
\end{array}$$
\end{corollary}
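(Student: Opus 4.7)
The plan is to reduce the first implication to the hypothesis of Theorem~\ref{basicnet} --- namely equality (not merely proportionality) of the full parameter gradients at $\x$ and $\x'$ --- after which the second implication follows immediately by invoking that theorem. By the equality case of Cauchy--Schwarz, the condition $k^C_\theta(\x,\x') = 1$ is equivalent to $\nabla_{\!\theta} f_\theta(\x) = \lambda\, \nabla_{\!\theta} f_\theta(\x')$ for some $\lambda > 0$, so the whole content to establish is $\lambda = 1$.

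The key trick I would use is to inspect the gradient component corresponding to the bias $b$ of the last layer. In the purely linear last-layer case $f_\theta(\x) = \sum_i w_i a_i(\x) + b$, one has $\partial f_\theta(\x)/\partial b = 1$ independently of the input. Matching this entry on both sides of the proportionality forces $\lambda = 1$ in one line.

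In the activation-of-linear case $f_\theta(\x) = \sigma(c(\x))$ with $c(\x) = \sum_i w_i a_i(\x) + b$, the bias derivative is $\sigma'(c(\x))$, which is no longer constant, so I would argue in two steps. Proportionality of the $w_i$-components reads $\sigma'(c(\x))\,a_i(\x) = \lambda\, \sigma'(c(\x'))\,a_i(\x')$, while the bias component reads $\sigma'(c(\x)) = \lambda\, \sigma'(c(\x'))$. When $\sigma'(c(\x)) \neq 0$, dividing the first relation by the second gives $a_i(\x) = a_i(\x')$ for each $i$ feeding the last layer, which forces $c(\x) = c(\x')$, hence $\sigma'(c(\x)) = \sigma'(c(\x'))$, and finally $\lambda = 1$. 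Coupled with the $a_i(\x) = a_i(\x')$ identities, this yields equality of every entry of the gradient, not just proportionality.

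Once $\lambda = 1$ is established we have $\nabla_{\!\theta} f_\theta(\x) = \nabla_{\!\theta} f_\theta(\x')$, i.e.~the first displayed implication; the second implication $f_\theta(\x) = f_\theta(\x')$ is then exactly Theorem~\ref{basicnet}. The only non-routine point I anticipate is the degenerate branch $\sigma'(c(\x)) = 0$, since then the above division is invalid; however it is already resolved in the proof of the precursor theorem (impossible for strictly monotonous $\sigma$ such as sigmoid or tanh, and forcing $f_\theta(\x) = f_\theta(\x') = 0$ for ReLU-type activations whose flat piece is a singleton), so the same case analysis transfers verbatim.
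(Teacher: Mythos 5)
Your argument matches the paper's own proof essentially step for step: reduce $k^C_\theta(\x,\x')=1$ to proportionality of the full gradients via the Cauchy--Schwarz equality case, then pin down the proportionality constant using the last-layer bias component (identically $1$ in the linear case; the divide-and-rebuild argument giving $a_i(\x)=a_i(\x')$, hence $c(\x)=c(\x')$ and $\lambda=1$, in the activation case), with the degenerate $\sigma'(c(\x))=0$ branch disposed of as in Theorem~\ref{basicnet} (there both full gradients vanish, so they are trivially equal). The proof is correct and no gaps remain.
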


\begin{proof}$\;\;$
$k^C_\theta(\x.\x') = 1$ means
$\frac{ \nabla_{\!\theta} f_\theta(\x)  }{ \| \nabla_{\!\theta} f_\theta(\x) \|}  \cdot \frac{ \nabla_{\!\theta} f_\theta(\x')   }{ \| \nabla_{\!\theta} f_\theta(\x') \|} = 1$, which implies
$\exists\, \alpha \in \R^*, \; \nabla_{\!\theta} f_\theta(\x)  = \alpha\,  \nabla_{\!\theta} f_\theta(\x')$. We need to show that $\alpha = 1$. Under the assumptions of Theorem \ref{basicnet}, following its proof:
\begin{itemize}
\item either the last layer is linear, the output is of the form $f_\theta(\x) = \sum_i w_i a_i(\x) + b$, and then $\nabla_{\!b} f_\theta(\x)  = \alpha  \nabla_{\!b} f_\theta(\x')$ while $\frac{d f_\theta(\x) }{db} = 1$ and $\frac{d f_\theta(\x') }{db} =1$, hence $\alpha = 1$;
\item either the output is of the form $f_\theta(\x) = \sigma(c(\x))$ with $c(\x) = \sum_i w_i a_i(\x) + b$, and then $\sigma'(c(\x)) = \nabla_{\!b} f_\theta(\x)  = \alpha  \nabla_{\!b} f_\theta(\x') = \alpha \,\sigma'(c(\x'))$, while, for any $i$,
$\sigma'(c(\x))\, a_i(\x) = \frac{d f_\theta(\x) }{dw_i} = \alpha \frac{d f_\theta(\x') }{dw_i} = \alpha \,\sigma'(c(\x'))\, a_i(\x')$. Thus, supposing $\sigma'(c(\x)) \neq 0$, we obtain $a_i(\x) = a_i(\x')\; \forall i$, and thus we can rebuild from the activities $c(\x) = c(\x')$, from which $\sigma'(c(\x)) = \sigma'(c(\x'))$ and thus $\alpha = 1$. Otherwise, $\sigma'(c(\x)) = \sigma'(c(\x')) = 0$ and the two full gradients $\nabla_{\!\theta} f_\theta(\x)$ and $\nabla_{\!\theta} f_\theta(\x')$ are 0 and thus equal.
\end{itemize}
The conditions for $\;k^C_\theta(\x.\x') = 1  \implies \nabla_{\!\theta} f_\theta(\x) = \nabla_{\!\theta} f_\theta(\x')\;$ to hold are actually much weaker: it is sufficient that in the whole network architecture there exists \emph{one} useful neuron (in the sense of the next paragraph) of that type (so called \emph{linear} but actually affine).

\end{proof}

\subsection{Proof of Theorem \ref{basicnet2}}

\begin{theorem} 
  For any real-valued neural network $f_\theta$ without parameter sharing,
  if $\nabla_{\!\theta} f_\theta(\x) = \nabla_{\!\theta} f_\theta(\x')$ for two inputs $\x, \x'$,
  then all useful activities computed when processing $\x$ are equal to the ones obtained when processing $\x'$.
\end{theorem}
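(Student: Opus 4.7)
The plan is to exploit the explicit form of $\nabla_{\!\theta} f_\theta(\x)$ for a vanilla feedforward network (without parameter sharing) already described in the paper: each weight coordinate is $\nabla_{\!w_i^j} f_\theta(\x) = \tfrac{d f_\theta(\x)}{db_j}\, a_i(\x)$ and each bias coordinate is $\nabla_{\!b_j} f_\theta(\x) = \tfrac{d f_\theta(\x)}{db_j}$, where $b_j$ denotes the pre-activation of neuron $j$. The hypothesis $\nabla_{\!\theta} f_\theta(\x) = \nabla_{\!\theta} f_\theta(\x')$ therefore splits into two families of scalar equalities that I will exploit in sequence.

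First, I would read off the bias-coordinate equalities to obtain $\tfrac{d f_\theta(\x)}{db_j} = \tfrac{d f_\theta(\x')}{db_j}$ for every neuron $j$ (assuming, as is standard, that each neuron carries a bias parameter; the case of bias-free neurons can be recovered by picking any incoming weight coordinate and using the same coefficient-matching argument). Denote this common value by $\beta_j$. Plugging $\beta_j$ into the weight-coordinate equalities yields $\beta_j\,(a_i(\x) - a_i(\x')) = 0$ for every $(i,j)$ such that $a_i$ feeds neuron $j$. Hence, whenever $\beta_j \neq 0$, all input activities of neuron $j$ at $\x$ and $\x'$ coincide.

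Next I would translate the "useful" condition into a statement about the $\beta_j$'s. By the chain rule through the outgoing connections of $a_i$, $\tfrac{d f_\theta(\x)}{da_i} = \sum_{j:\, i \to j} w_i^j\, \beta_j$. If $a_i$ is useful then this sum is nonzero, so at least one term $w_i^j \beta_j$ is nonzero, in particular $\beta_j \neq 0$ for some $j$ receiving $a_i$. Applying the previous paragraph to that $j$ gives $a_i(\x) = a_i(\x')$. This establishes the theorem for every useful activity in the network (including the input coordinates, interpreting them as degree-zero activities with the same "useful" criterion).

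The statement is essentially structural once the gradient decomposition is in place; the only delicate point I anticipate is the bookkeeping around neurons without an explicit bias parameter, and around activities appearing in several downstream neurons (where one must be careful that a single nonzero summand in $\sum_j w_i^j \beta_j$ is sufficient to trigger the argument, rather than needing every summand to be nonzero). The "no parameter sharing" hypothesis is used crucially to treat each $w_i^j$ as an independent coordinate of $\theta$; otherwise one would only get equalities between sums of $\beta_j a_i$ across tied weights, which is precisely the weaker kind of invariance discussed for convolutional architectures in the main text.
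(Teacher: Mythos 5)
Your proof is correct and takes essentially the same route as the paper's: equate the bias coordinates of the gradient to get $\beta_j = \tfrac{d f_\theta(\x)}{db_j} = \tfrac{d f_\theta(\x')}{db_j}$, equate the weight coordinates to get $\beta_j\,(a_i(\x)-a_i(\x'))=0$, and use the chain rule $\tfrac{d f_\theta(\x)}{da_i}=\sum_j w_i^j \beta_j$ to find a downstream neuron with $\beta_j\neq 0$ when $a_i$ is useful. The paper leaves that last chain-rule step implicit (``it is fed to at least one useful neuron'') and, like you, assumes each neuron carries a bias (treated as a weight on a constant activity), so your extra bookkeeping remarks only make the argument more explicit.
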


We name \emph{useful} activities all activities whose variation would have an impact on the output, \ie all the ones satisfying $\frac{d f_\theta(\x) }{da_i} \neq 0$. This condition is typically not satisfied when the activity is multiplied by 0, \ie $w_i = 0$, or when it is negative and followed by a ReLU, or when all its contributions to the output annihilate together (\eg, a sum of two neurons with opposite weights: $f_\theta(\x) = \sigma( a_i(\x) ) - \sigma( a_i(\x) )$).

\begin{proof}
Let $a_i(\x)$ be a useful activity (for $\x$).
It is fed to at least one useful neuron, whose pre-activation output is of the form $c(\x) = \sum_i w_i a_i(\x) + b$. Then $\frac{d f_\theta(\x) }{db} = \frac{d f_\theta(\x) }{dc} \neq 0$ (the output of the neuron is useful), and
$\frac{d f_\theta(\x) }{dw_i} = \frac{d f_\theta(\x) }{db} a_i(\x)$.
From the gradient equality, $a_i(\x) = \frac{d f_\theta(\x) }{dw_i} / \frac{d f_\theta(\x) }{db} = \frac{d f_\theta(\x') }{dw_i} / \frac{d f_\theta(\x') }{db}  = a_i(\x')$.
\end{proof}

\section{Higher output dimension}
\label{sec:high2}

We expand here all the mathematical aspects of the homonymous section of the article.

\subsection{Derivation}

Let us now study the case where $f_\theta(\x)$ is a vector in $\R^d$ with $d > 1$.

The optimal parameter change $\delta \theta$ to push $f_\theta(\x)$ in a direction $\vv$ (with a force $\epsi$) is less straightforward to obtain.
First, one can define as many gradients as output coordinates: $\nabla_{\!\theta} f_\theta^i(\x)$, for $i \in \llbracket 1, d \rrbracket$.

This family of gradients can be shown to be linearly independent, unless the architecture of the network is specifically built not to. If for instance each output coordinate has its own bias parameter, \ie writes in the form $f_\theta^i(\x) = b_i + g_\theta(\x)$ or $\sigma( b_i + g_\theta(\x) )$ with a strictly monotonous activation function $\sigma$, then the derivative \wrt $b_i$ will be 1 (or $\sigma'$) only in the $i$-th gradient and 0 in the other ones. Thus the $j$-th gradient contains in particular the subvector $(\frac{df^j}{db_i})_i = (\delta_{i=j})_i$, and the gradients are consequently independent. In the case where all coordinates depend on all biases, but not identically, as with a softmax, the argument stays true.

Any parameter variation $\delta \theta \in \R^p$ can then be uniquely decomposed as:
$$\delta\theta = \sum_{i=1}^d \alpha_i \nabla f_\theta^i(\x) \;+\;\gamma $$
where $\alpha_i \in \R$ and where $\gamma \in \R^p$ is orthogonal to all coordinate gradients. This parameter variation induces an output variation:
$$ f_{\theta + \delta \theta} (\x) - f_\theta(\x) = \nabla_{\!\theta} f_\theta(\x) \;  \delta \theta + O(\|\delta\theta\|^2)$$
$$= \left( \sum_i \alpha_i \nabla_{\!\theta} f^i_\theta(\x) \cdot \nabla f_\theta^j(\x)\right)_j + 0 + O(\|\delta\theta\|^2)$$
$$= C \alpha + O(\|\alpha\|^2)$$
where $C$ is the correlation matrix of the gradients: $C_{ij} = \nabla_{\!\theta} f^i_\theta(\x) \cdot \nabla f_\theta^j(\x)$.
It turns out that $C$ is invertible:
$$C \alpha = 0 \implies \alpha C \alpha = 0 \implies \alpha \nabla_{\!\theta} f_\theta(\x)\, \nabla_{\!\theta} f_\theta(\x) \,\alpha = 0$$
$$\implies \|\nabla_{\!\theta} f_\theta(\x)\, \alpha\|^2 = 0 \implies \sum_i \alpha_i \nabla f_\theta^i(\x) = 0$$
$\implies \alpha = 0$
as the $\nabla_{\!\theta} f^i_\theta(\x)$ are linearly independent. Thus, for a desired output move in the direction $\vv$ with amplitude $\epsi$, \ie $f_{\theta + \delta \theta} (\x) - f_\theta(\x) = \epsi \vv$, one can compute the associated linear combination $\alpha = \epsi\, C^{-1}\vv$ and thus the smallest associated parameter change $\delta\theta = \sum_i \alpha_i \nabla f_\theta^i(\x)$.

The output variation induced at any other point $\x'$ by this parameter change is then:
$$f_{\theta + \delta \theta} (\x') - f_\theta(\x') = \left( \nabla_{\!\theta} f^i_\theta(\x') \cdot \delta \theta \right)_i + O(\|\delta\theta\|^2)$$
$$ = \left( \sum_j \alpha_j \nabla_{\!\theta} f^i_\theta(\x') \cdot \nabla_{\!\theta} f^j_\theta(\x) \right)_i + O(\|\delta\theta\|^2).$$
\begin{equation}
  = \epsi \, K_\theta(\x',\x)\,  C_\theta(\x)^{-1}\, \vv \, +\, O(\epsi^2)
\end{equation}
where the $d \times d$ kernel matrix $K_\theta(\x,\x')$ is defined by $K^{ij}_\theta(\x,\x') =  \nabla_{\!\theta} f^i_\theta(\x) \cdot \nabla_{\!\theta} f^j_\theta(\x')$, and where the matrix $C_\theta(\x) = K_\theta(\x,\x)$ is the previously defined self-correlation  matrix $C$. Its role is equivalent of the normalization by $\|\nabla_{\!\theta} f_\theta(\x)\|^2$ in the 1D case, in plus of decorrelating the gradients.

The interpretation of (\ref{eq:multidim}) is that if one moves the output for point $\x$ by $\vv$, then the output for point $\x'$ will be moved also, by $M \vv$, with $M = K_\theta(\x,\x')\,  K_\theta(\x,\x)^{-1}$.
Note that these matrices $M$ or $K$ are only $d \times d$ where $d$ is the output dimension. They are thus generally small and easy to manipulate or inverse.

\subsection{Normalized cross-correlation matrix}

The normalized version of the kernel (\ref{eq:multidim}) is:
\begin{equation}
  K_\theta^C(\x,\x') \;=\;  C_\theta(\x)^{-1/2}\; K_\theta(\x,\x')\;  C_\theta(\x')^{-1/2}
\end{equation}
which is symmetric in the sense that $K_\theta^C(\x',\x) = K_\theta^C(\x,\x')^T$.

A matrix $K_\theta^C(\x,\x')$ with small coefficients means that $\x$ and $\x'$ are relatively independent, from a neural network point of view (moves at $\x$ won't be transferred to $\x'$). On the opposite,
the highest possible dependency is $K_\theta^C(\x,\x) = \Id$.

To study properties of this similarity measure, note that $K_\theta^C(\x,\x') = (G^N_\x)^T\, G^N_{\x'}$ with $G^N_\x = G_\x (G_\x^T G_\x)^{-1/2}$, where $G_\x = \nabla_{\!\theta} f(\x)$ : it is the product of normalized, decorrelated versions of the gradient. Indeed, at any point $\x$, the normalized gradient matrix $G^N_\x$ satisfies:
$(G^N_\x)^T\, G^N_{\x} = K_\theta^C(\x,\x) =  K_\theta(\x,\x)^{-1/2} K_\theta(\x,\x) K_\theta(\x,\x)^{-1/2}  = \Id$ and consequently $G^N_\x$ can be seen as an orthonormal family of vectors $G^{N,i}_\x$.

The $L^2$ (Frobenius) norm of the ortho-normalized gradient $G^N_\x$ is thus:
$$\big\|G^N_\x\big\|^2_F = \Tr((G^N_\x)^T\, G^N_{\x}) = \Tr(\Id) = d \;.$$

At point $\x'$, $G^N_{\x'}$ is also an orthonormal family, but possibly arranged differently or generating a different subspace of $\R^p$. If $G^N_{\x}$ and $G^N_{\x'}$ generate the same subspace, then their product $(G^N_\x)^T\, G^N_{\x'}$ is an orthogonal matrix $Q$ (change of basis) and its $L^2$ (Frobenius) norm is then $\big\|Q\big\|^2_F = \Tr(Q^T Q) = \Tr(\Id) = d$. Otherwise, $(G^N_\x)^T\, G^N_{\x'}$ can be seen as a projection from one subspace to another one, each vector $G^{N,j}_{\x'}$ is projected onto the ortho-normal family $(G^{N,i}_\x)_i$, and as a projection decreases the Euclidean norm, $\sum_i \left( G^{N,i}_\x \cdot G^{N,j}_{\x'} \right)^2 \leqslant \big\|G^{N,j}_{\x'}\big\|^2 = 1$. Thus:
$$\big\|K_\theta^C(\x,\x')\big\|_F = \sqrt{\sum_{ij} \left( G^{N,i}_\x \cdot G^{N,j}_{\x'} \right)^2} \leqslant \sqrt{d} \, .$$
Moreover, any coefficient of the kernel matrix satisfies:
$$\left| K_\theta^{C, ij}(\x,\x') \right| = \left| G^{N,i}_\x \cdot G^{N,j}_{\x'} \right| \leqslant \big\|G^{N,i}_\x\big\|_2 \, \big\|G^{N,j}_{\x'}\big\|_2 = 1$$
as each vector $G^{N,i}_\x$ is unit-norm.
This implies in particular that the trace is bounded:
$$-d \;\leqslant\; \Tr(K_\theta^C(\x,\x')) \;\leqslant d.$$

To sum up, the similarity matrix $K_\theta^C(\x,\x')$ satisfies the following properties:
\begin{itemize}  \setlength\itemsep{0em}  \setlength{\parskip}{1pt} 
\item its coefficients are bounded, in $[-1,1]$
\item its trace is at most $d$
\item its (Frobenius) norm is at most $\sqrt{d}$
\item self-similarity is identity: $\forall \x, \,\;K_\theta^C(\x,\x) = \Id$
\item the kernel is symmetric, in the sense that  $K_\theta^C(\x',\x) = K_\theta^C(\x,\x')^T$.
\end{itemize}

\subsection{Similarity in a single value}


Note that when the trace is close to its maximal value $d$, the diagonal coefficients are close to 1, and their contribution to the Frobenius norm squared is close to $d$. Therefore, all non-diagonal coefficients are close to 0, and the matrix is close to $\Id$. And reciprocally, a matrix close to $\Id$ has a trace close to $d$.
Thus, two related ways to quantify similarity in a single real value in $[-1,1]$ appear:
\begin{itemize}  \setlength\itemsep{1pt}  \setlength{\parskip}{1pt} 
\item the distance to the identity  $D = \big\| K_\theta^C(\x,\x') - \Id\big\|_F$, which can be turned into a similarity as $1 - \frac{1}{\sqrt{d}} D$ or $1 - \frac{1}{2d} D^2$, since $D \in [0, 2\sqrt{d}]$
\item the normalized trace: $\frac{1}{d} \,\Tr\, K^C_\theta(\x,\x')$, which is also the alignment with the identity: $\frac{1}{d} K_\theta^C(\x,\x') \cdot_F \Id$, where $\cdot_F$ denotes the Frobenius inner product (\ie coefficient by coefficient).
\end{itemize}
The link between these two quantities can be made explicit by developing:
$$\big\| K_\theta^C(\x,\x') - \Id \big\|^2_F
= \big\| K_\theta^C(\x,\x') \big\|^2_F - 2 \Tr(K_\theta^C(\x,\x')) + d$$
which rewrites as:
$$\left(1 - \frac{D^2}{2d} \right) = \frac{\Tr(K_\theta^C(\x,\x')) }{d}   + \frac{1}{2}\left( 1 - \frac{\big\| K_\theta^C(\x,\x') \big\|^2_F}{d} \right).$$
The last term lies in $[0,1]$ and measures the mismatch between the vector subspaces generated by the two families of gradients $\left(\nabla_{\!\theta} f^i(\x)\right)_i$ and $\left(\nabla_{\!\theta} f^i(\x')\right)_i$. It is 1 when $f_\theta(\x)$ and $f_\theta(\x')$ can be moved independently, and 0 when they move jointly (though not necessarily in the same direction).

As our two similarity measures $1 - \frac{D^2}{2d}$ and $\frac{1}{d}\Tr(K_\theta^C(\x,\x'))$ have same optimum ($\Id$) and are closely related, in the sequel we will focus on the second one and define:
\begin{equation}
  k_\theta^C(\x,\x') \;=\;
  \frac{1}{d} \,\Tr\, K^C_\theta(\x,\x') \;.
\end{equation}

\subsection{Metrics on output: rotation-invariance}

Similarity in $\R^d$, to compare $\vv$ and $\vv' = M\vv$, might be richer than just checking whether the vectors are equal or close in $L^2$ norm.

For instance, one could quotient the output space by the group of rotations, in order to express a known or desired equivariance of the network to rotations. If the output is the predicted motion of some object described in the input, one could wish indeed that if the input object is rotated by an angle $\phi$, then the output should be rotated as well with the same angle.

In that case, given two inputs $\x$ and $\x'$ and associated output variations $\vv$ and $\vv'$, without knowing the rotation angle if applicable, one could consider all possible rotated versions $R_\phi \vv' = R_\phi M \vv$, where $R_\phi$ is the rotation matrix with angle $\phi$, and pick the best angle $\phi$ that maximizes the alignment $\vv \cdot R_\phi M \vv$, \ie such that $R_\phi M$ is the closest to the $d \times d$ identity matrix. This can be computed easily in closed form, for instance in the 2-dimensional case as follows.

The $2 \times 2$ matrix of interest (Eq.~$\ref{eq:multidimkern}$) can be written as
the product of two $p \times 2$ matrices of the form $G (G^T G)^{-1/2}$, where $G$ is the matrix containing the gradient of all coordinates. Rotating the coordinates of $G$ amounts to considering $G R_\phi (R_\phi^TG^T GR_\phi)^{-1/2} = G (G^T G)^{-1/2} R_\phi$ instead. Thus the effect of rotation is just right-multiplying our $2 \times 2$ matrix $M$ of interest (Eq.~$\ref{eq:multidimkern}$) by $R_\phi$.
We are thus interested into getting $M R_\phi$ as close as possible to the $2 \times 2$ identity. For our trace-based similarity kernel (Eq.~\ref{eq:multidimkernsum}), this amounts to maximizing $\Tr(MR_\phi) = \cos(\phi)(M_{11}+M_{22}) + \sin(\phi)(M_{12}-M_{21})$ \wrt $\phi$, whose optimal value is:
\begin{align*}
  k^{C, \rot}_\theta(\x,\x') & = \frac{1}{2} \sqrt{(M_{11}+M_{22})^2 + (M_{12}-M_{21})^2}\\
  & =  \frac{1}{2}  \sqrt{ \big\|M\big\|_F^2 + 2 \det M}
  \end{align*}
where $M = K_\theta^C(\x,\x')$. This quantity is indeed rotation-invariant, as the Frobenius norm and the determinant do not change upon rotations. Note that one could also consider instead the subspace match $\frac{1}{d}\big\|M\big\|_F^2 $. The main difference between the two is that the first one penalizes mirror symmetries (through $\det M$) while the second one does not.

Note that other metrics are possible in the output space. For instance, the loss metric quantifies the norm of a move $\vv$ by its impact on the loss $\left.\frac{dL(y)}{dy}\right|_{f_\theta(\x)}(\vv)$. It has a particular meaning though, and is relevant only if well designed and not noisy, as seen in the remote sensing image registration example. Note also that in such a case the associated similarity would not be intrinsic anymore to the neural network as it depends on the loss.


\section{Estimating density}
\label{sec:estim2}

\subsection{Toy problem}

The toy problem used in the paper to test the various estimators for neighbor count estimation consists of predicting a one dimensional function, namely a sinusoid (such as in Fig.\ref{fig:toy_problem_2d} (a)). We can easily change the difficulty of the problem by using different values of frequency. The neural network would perform this mapping: $y = \sin(2 \pi f x), x \in [0, 1]$.

A problem arises however when estimating the number of neighbors because the input space has 2 boundaries at $x=0$ and $x=1$, leading to fewer neighbors when $x$ approaches either of those boundaries. To avoid this problem, we transform the input space to a 2D circle. Namely, the task is now $y = sin(2\pi f \alpha(x)), x \in \{(\cos(2\pi\alpha), \sin(2\pi\alpha)), \alpha \in [0, 1]\}$, with the input space having no boundaries.

The dataset is generated with n=2048 input points. The network used is fully-connected and has 5 hidden layers of 64 neurons trained with the Adam optimizer for 80 epochs with a base learning rate of $1e^{-4}$. An experiment consist of training the network on a dataset generated with a specific frequency f. Each experiment was repeated 5 times, in order to take the median of every result to limit the variance due to the neural network stochastic training.

We can see in Fig.\ref{fig:toy_problem_2d} (b) the proposed soft estimate $k_\theta^C$ for each input point (projected to 1D). As expected we observe that the number of neighbors drops when the curvature is high: the objective changes quickly and the network adjusts to better distinguish inputs in places of higher curvature.

\begin{figure}
	\centering
	\begin{subfigure}[b]{0.45\textwidth}
		\centering
		\includegraphics[width=\linewidth]{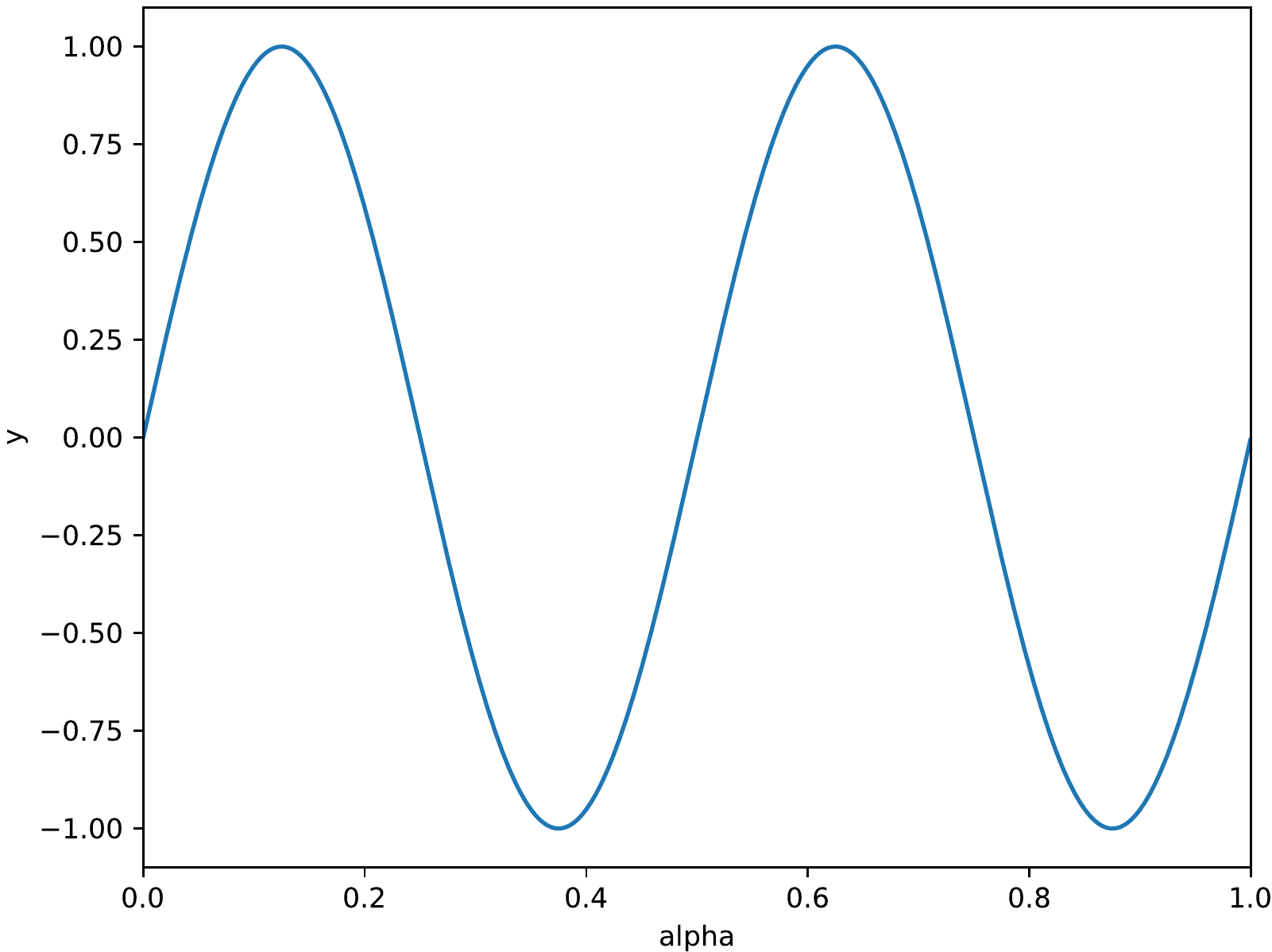}
		\caption{Function to predict.}
	\end{subfigure}
	\begin{subfigure}[b]{0.45\textwidth}
		\centering
		\includegraphics[width=\linewidth]{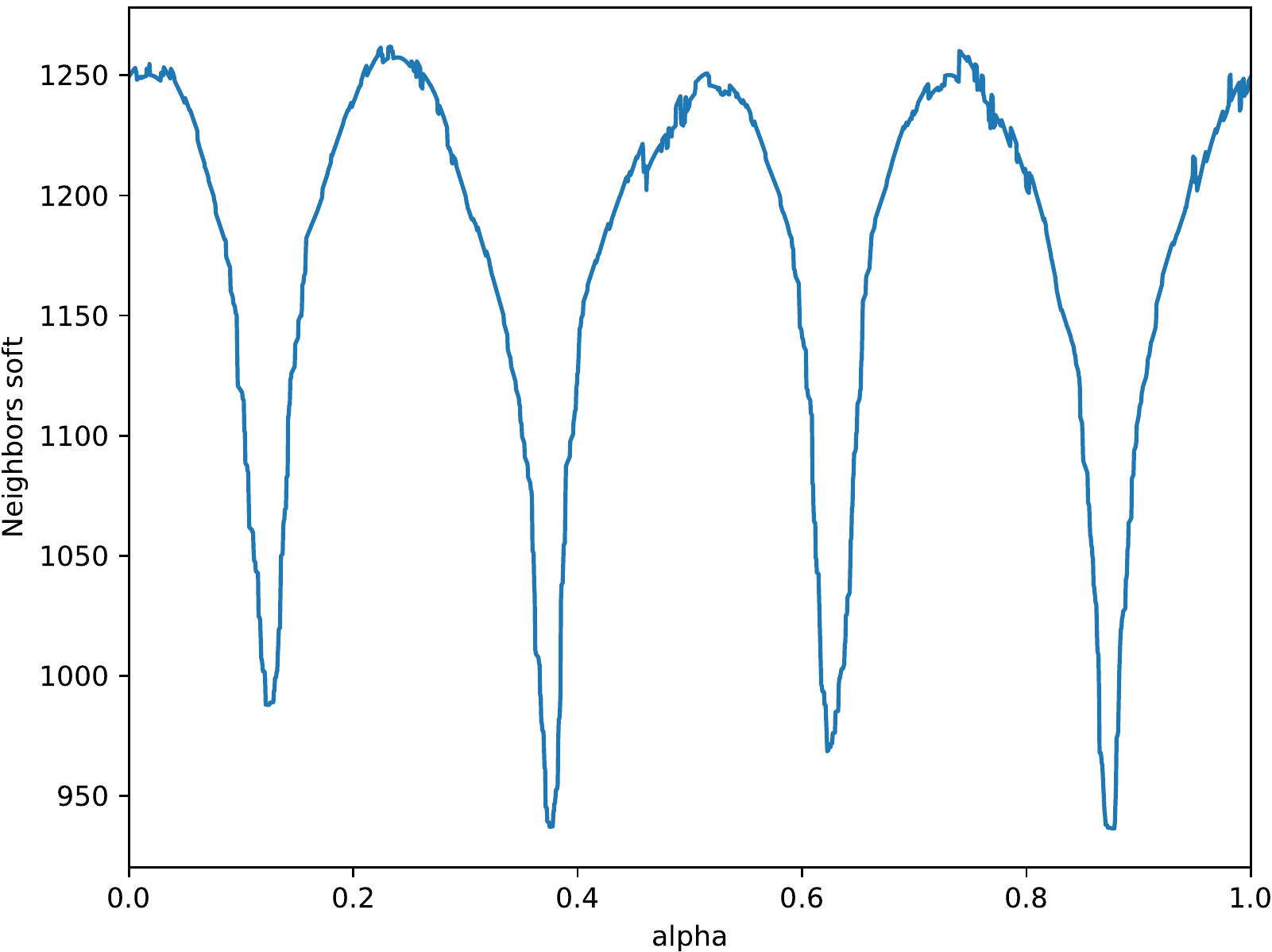}
		\caption{Neighbors soft estimate.}
	\end{subfigure}
	\caption{Toy problem with the frequency f = 2.}
	\label{fig:toy_problem_2d}
\end{figure}

\begin{figure}
	\centering
	\includegraphics[width=0.4\linewidth,trim={100 25 40 60},clip]{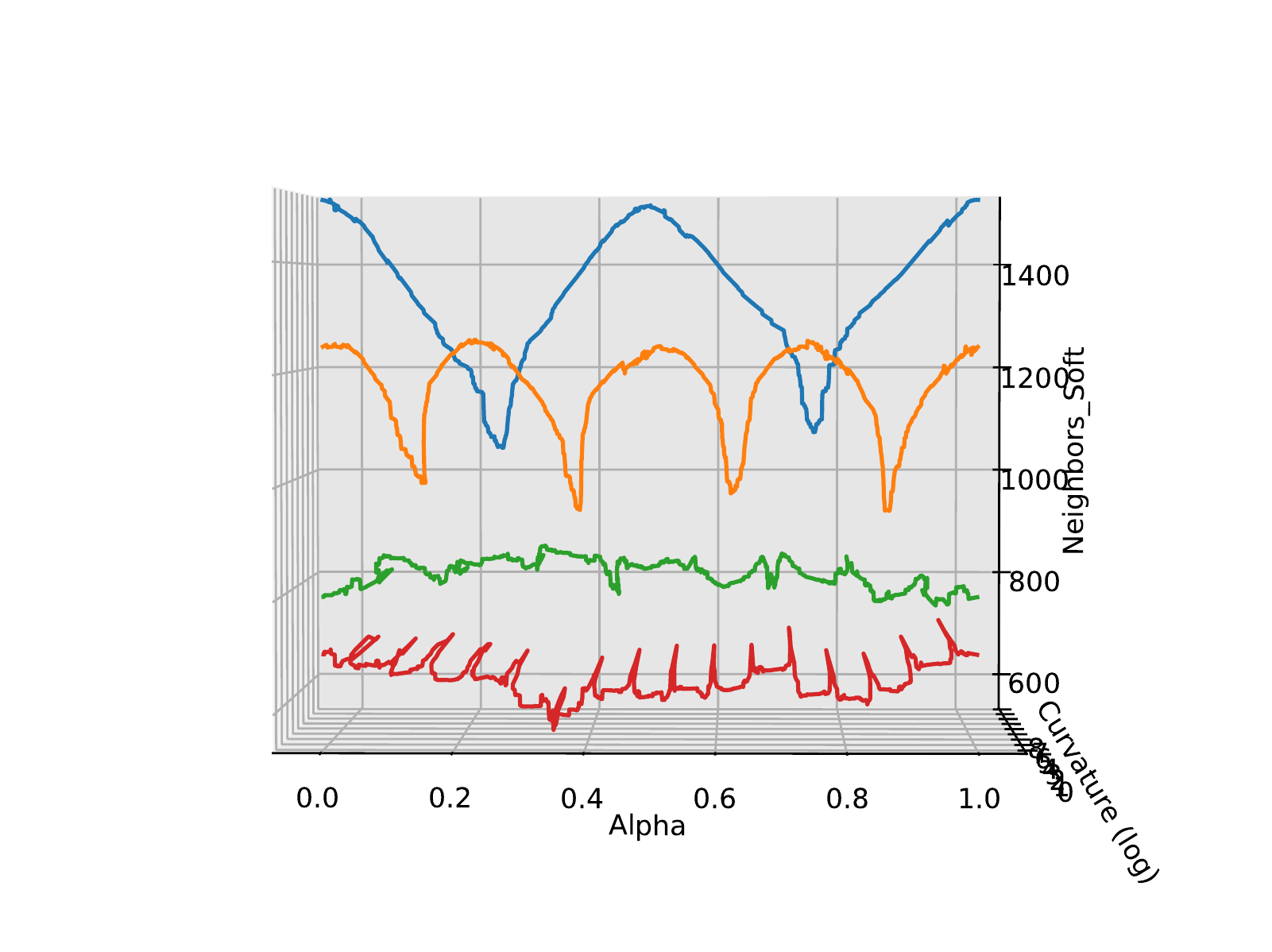}
	\includegraphics[width=0.5\linewidth,trim={50 25 0 0},clip]{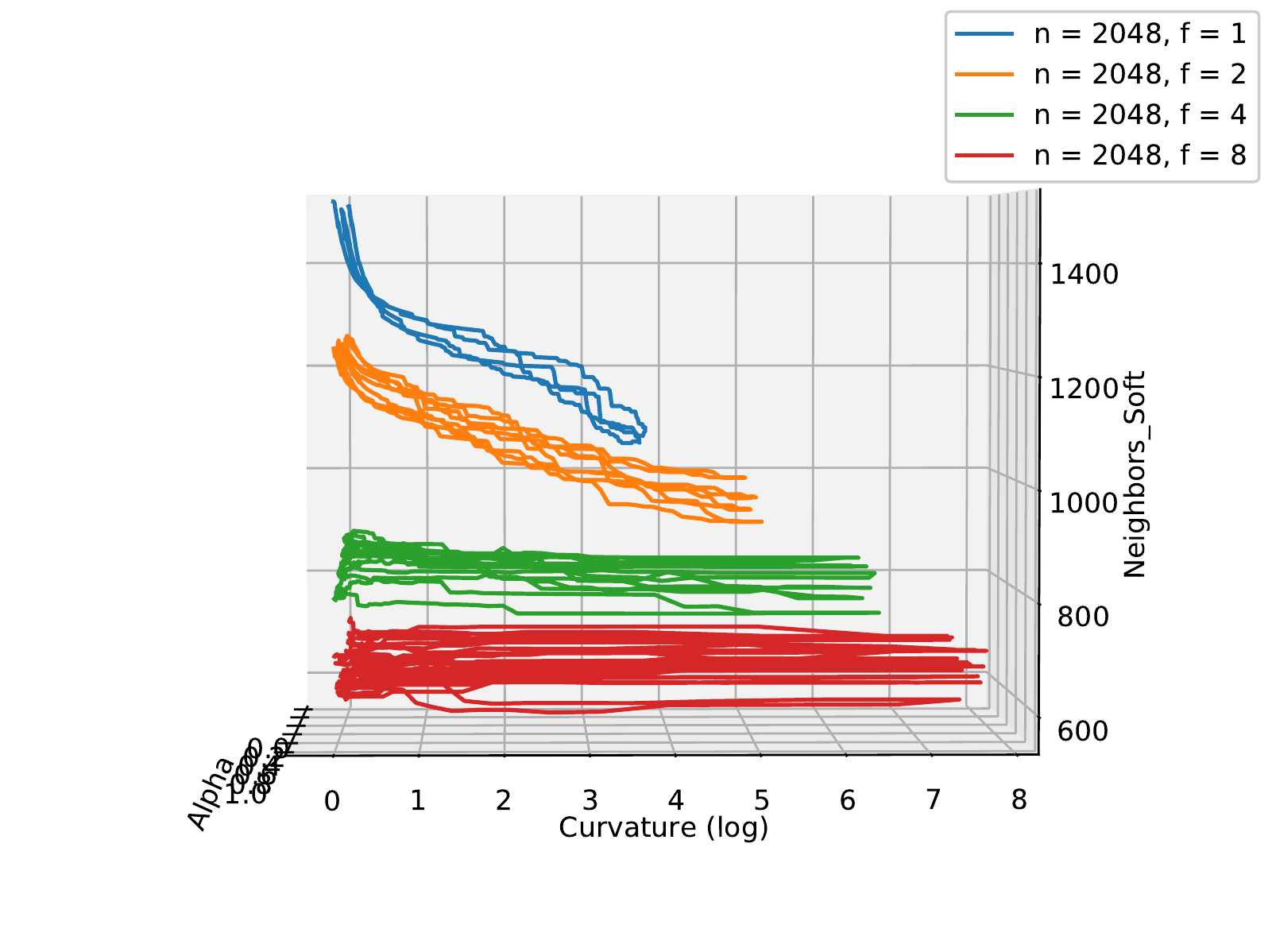}
	\includegraphics[width=0.5\linewidth,trim={50 0 0 60},clip]{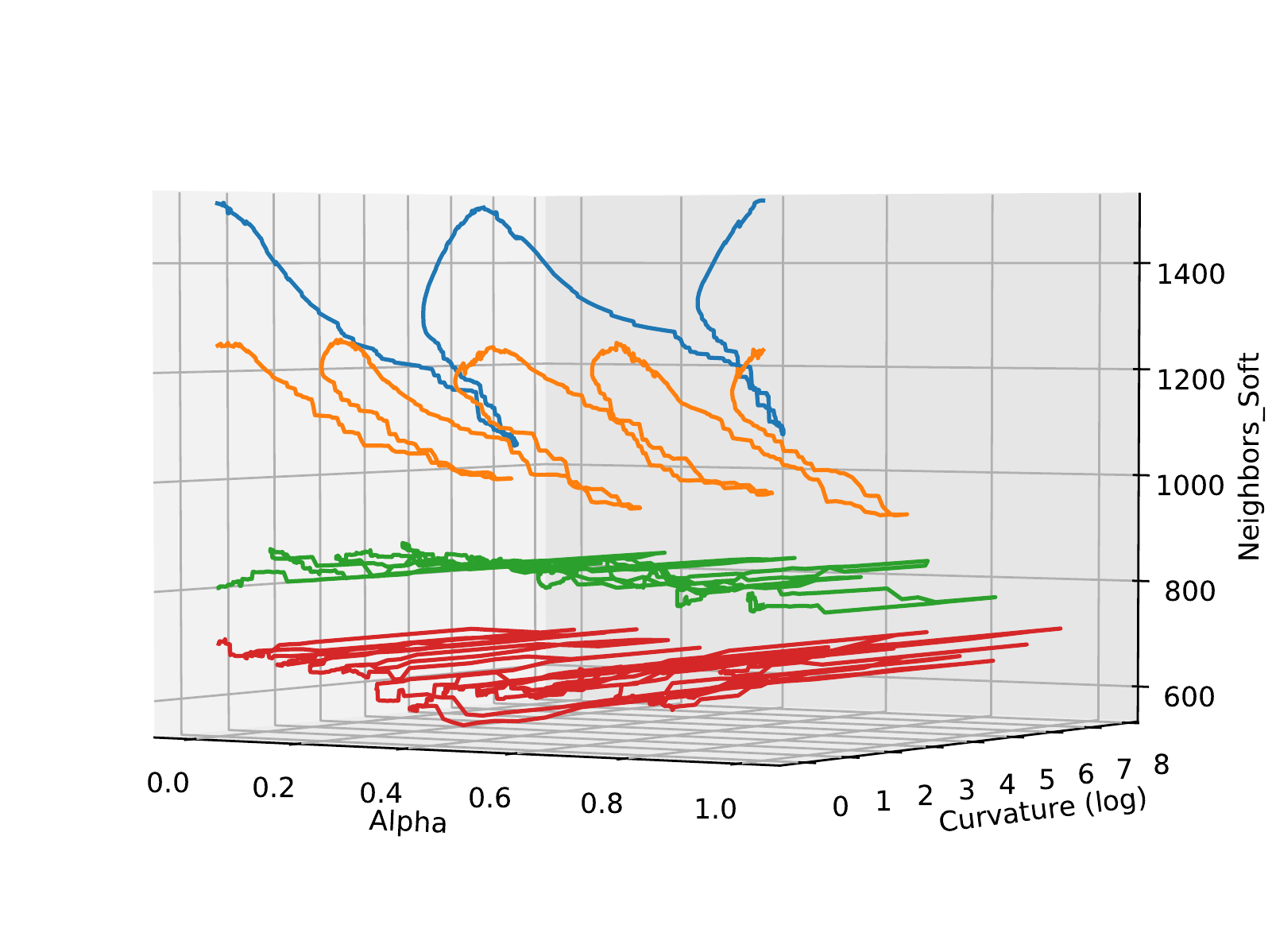}
	\caption{3D plot of neighbors soft with varying frequency. Script and data to plot interactively in attached files. Run the bash script "main\_plot\_exps.paper.sh" to reproduce this exact figure. Alternatively use "main\_plot\_exps.py" with arguments of your choosing to plot different values (run "python main\_plot\_exps.py -h" to see possible arguments).}
	\label{fig:toy_problem_3d}
\end{figure}

\subsection{Other possible uses}

\paragraph{Density homogeneity as an optimization criterion}
The estimations above are meant to be done post-training.
This said, one could control density
explicitly, by computing the number of neighbors for all points, and asking it to be in a reasonable range, or in a reasonable proportion $q$ of the dataset size $\mathcal{D}$, by adding \eg to the loss
$\sum_i \left( \frac{ N_S(\x_i) }{ \mathcal{D} }
- q  \right)^2$.
Online learning could also make use of such tools, to sample first lowly-populated areas, where uncertainty is higher.


\section{Enforcing similarity}

We give here a few more details on the homonymous section of the paper.

\subsection{Complexity}

A gradient descent step on this quantity for a given pair $(\x,\x')$ (in a mini-batch approach, \eg) requires the computation of the gradient $\nabla_\theta k^C_\theta(\x,\x') = \nabla_\theta \left( \nabla_\theta f_\theta(\x) \cdot \nabla_\theta f_\theta(\x') \right)$.
While a naive approach would require the computation of a second derivative, \ie a matrix of size $p \times p$ where $p$ is the number of parameters,
it is actually possible to compute
$\nabla_\theta k^C_\theta(\x,\x') = \nabla_\theta \sum_i \frac{d f_\theta(\x)}{d\theta_i}  \frac{d f_\theta(\x')}{d\theta_i}$ in linear time $O(p)$, taking advantage of the serial structure of the computational graph.
The framework enabling such computations is already available on common deep learning platforms, initially intended for the computation of $\nabla_\x \nabla_\theta f_\theta(\x)$ for some variations on GANs.

\subsection{Group invariance}
\label{sec:group2}

Dataset augmentation is a standard machine learning technique; when augmenting the dataset by a group transformation of the input (\eg, translation, rotation...) or by small intensity noise, new samples are artificially created, to augment the dataset size and hope for invariance to such transformations. One can ask the network to consider orbits of samples as similar with the technique above.

Furthermore, if the group infinitesimal elements are expressible as differential operators $e_k$, one could require directly, for all $\x$, invariance in the tangent plane in the directions of these differential operators:
$$\|  \partial_\x \nabla_\theta f(\x) \cdot e_k(\x) \|^2$$
which is the limit of
$\frac{1}{\epsi^2}\| \nabla_\theta f_\theta(\x) - \nabla_\theta f_\theta\left(\x + \epsi e_k(\x)\right) \|^2$ when $\epsi \to 0$.
For instance, in the case of image translations, the operator is $e: \x \mapsto \nabla_x \x(x)$ where $x$ denotes spatial coordinates in the image $\x$, as $\x(x+\tau) = \x(x) + \tau \cdot \nabla_x \x(x) + O(\tau^2)$.
This is however not recommended, as representing a translation with such a spatially-local operator does not take into account the spatially-irregular nature of image intensities.

Note that to the opposite of standard robustification techniques considering regularizers such as $\sum_{\x} \| \nabla_\theta f_\theta(\x)\|^2$, we ask not gradients to be always small, but to be smooth, and in certain directions only.

\subsection{Dynamics of learning: Experimentation details}

The results in figure 6 show the average and standard deviation over 60 runs for each curve. The x-axis is the number of batches to the network is trained on (with a batch size of 16). The y-axis is the accuracy metric on the whole validation set. The network architecture is made of 2 convolutions layers (with a kernel size of 5), 2 linear layers and uses PReLU non-linearities. We used Adam with a learning rate of 1e-3 and no weight decay.

We tested other architecures on MNIST: one with residual blocks, one deeper (8 convolutions) and one with tanh non-linearities. Similar results were observed on all cases. Additional tests were performed on CIFAR10 with a VGG architecture and only negligible benefits were observed.


\section{Noisy Map Alignment Analysis}
\label{sec:denoise2}

The task here it to align maps in the form of a list of polygons with remote sensing images while using only the available noisy annotations. We analyze the model developed in a previous work \cite{anonymous}. Specifically, the model is trained in a multiple-rounds training scheme to iteratively align the available noisy annotations, which provides a better ground truth used to train a better model in the next round. An open question is why multiple rounds are needed in this noisy supervision setting, and why not all the noise can be removed in a single training step.

More specifically, the model is made out of 4 neural networks. Each is trained on a different resolution (in terms of ground pixel size) and are applied in a multi-resolution pyramidal manner. In all our experiments we only analyzed the networks trained for a ground pixel size of 4 time smaller than the reference ground pixel size which is $0.3m$. We used the already-trained networks for each round, of which there are 3.

The network was trained with small patches of (image, misaligned map) pairs from images of the Inria dataset \cite{maggiori2017dataset} and the Bradbury dataset \cite{bradbury_buildings_roads_height_dataset}. Ideally we would want to compute the similarities of every possible pairs of inputs, with a small patch size of $124$ px. However, given that a typical image of the training dataset is $1250 \times 1250$ px (after rescaling) and there are a few hundred of them (328 from the Inria dataset, only counting images where OSM annotations \cite{osm}), this would result in 32800 patches. The resulting amount of similarities to compute would be around half a billion. As the network has a few million of parameters and the output is 2D, each computation of similarity takes around $0.5$s. To make any computation feasible, we first sample 10 patches per image from the 328 of the Inria dataset. Those patches are chosen at random, as long as there is at least one building lying fully in the patch. As some images have rather sparse buildings, some images give less than 10 patches. We thus obtain 3045 patches representing the dataset. The amount of similarities to compute would be close to 5 million. To study all patches globally, we can use the soft neighbors estimator $k_\theta^C$ which has a linear complexity and allows us to compute the amount of neighbors for all 3045 patches in under an hour. However it is also interesting to go in deeper detail and compute similarities for some input pairs. We thus furthermore reduce the amount of pairs by estimating all similarities only for a very small number of patches, for example 10. This results in a $10\times3045$ similarity matrix.

\subsection{Soft estimate on a sampling of the training dataset}

In this section we present the results of computing the soft neighbors estimator $k_\theta^C$ on the 3045 sampled patches of inputs. We obtain results for the 3 networks of the 3 rounds of the noisy-supervision multi-rounds training scheme. Fig.\ref{fig:overall_hist} shows a histogram of the soft neighbors estimations. It additionally representative input patches for each bin of the histogram. Those representative patches are chosen so that their neighbor count is closest to the right edge of that bin. We especially observe that inputs in round 2 have more neighbors than the other 2 rounds. This particularity of round 2 will be seen throughout the remaining results. It is the round that aligns the most the annotations (see the Fig.2 on accuracy cumulative distributions in the paper). Round 3 does not perform any more alignment, that might be the reason why its results are different from those of round 2.

\subsection{Similarities on pairs of input patches}

In this section are the results for the computation of similarities between pairs of input patches. In a first experiment, for every round we chose the 10 patches shown in Fig.\ref{fig:overall_hist}, and computed their similarities with all the other 3045 patches. In order to visualize this data, we computed the 10-nearest neighbors in terms of similarity for each of those patches, see Fig.\ref{fig:round_0_overall_hist_k_nearest}, \ref{fig:round_1_overall_hist_k_nearest}, \ref{fig:round_2_overall_hist_k_nearest}. We computed the histogram of similarities as well, see Fig.\ref{fig:overall_hist_individual_hist}.

In a second experiment, to better compare between rounds, we used another set of 10 patches, this time the same set for each round. Specifically, we sampled 10 patches from the bloomington22 image of the Inria dataset. As just before we computed the 10-nearest neighbors (Fig.\ref{fig:round_0_bloomington22_k_nearest}, \ref{fig:round_1_bloomington22_k_nearest}, \ref{fig:round_2_bloomington22_k_nearest}) and the histogram of similarities(Fig.\ref{fig:bloomington22_individual_hist}) for a visualization of those measures.

Generally speaking, inputs in round 2 have more neighbors and the 10-nearest ones are closer than in other rounds (see Fig.\ref{fig:round_0_overall_hist_k_nearest}, \ref{fig:round_1_overall_hist_k_nearest}, \ref{fig:round_2_overall_hist_k_nearest} and Fig.\ref{fig:round_0_bloomington22_k_nearest}, \ref{fig:round_1_bloomington22_k_nearest}, \ref{fig:round_2_bloomington22_k_nearest}). For each parch, its closest neighbors generally (for similarity > 0.8) look similar from a human point of view. For example patches with sparse houses and trees have the same kind of neighbors. The same can be said for patches with parking lots and big roads. Another group are patches that are almost empty of buildings, with a lot of low vegetation. Other patch nearest neighbors are more difficult to interpret.
In Fig.\ref{fig:overall_hist_individual_hist} and Fig.\ref{fig:bloomington22_individual_hist} we can see that for round 2, the spread of the similarities of the selected patches is smaller and the peak of the histogram are closer to the right, meaning all patches are closer than in other rounds. Additionally in Fig.\ref{fig:overall_hist_individual_hist} we can observe that the bottom patch has closer neighbors than the top patch, this is because the top patch corresponds to the left patch in \ref{fig:overall_hist} and the bottom one corresponds to the right patch in \ref{fig:overall_hist}.


\section{Proof details of the self-denoising effect quantification}

\subsection{Magnitude of kernel-smoothed i.i.d. noise}

We show here that $\E_k[ \epsi ] \propto \var_\epsi(\E_k[ \epsi ])^{1/2} = \sigma_\epsi \, \| k^{IN}_\theta \|_{L2}$.

Let us denote by $\E_\epsi[\,]$ and $\var_\epsi(\,)$ the expectation and variance with respect to the random variable $\epsi$. As a reminder, by assumptions in the noise definition, $\epsi = (\epsi_i)_i$ is a random, i.i.d.~noise, centered and of variance $\sigma_\epsi$.

This is not to be confused with the symbol $\E_k[\,]$, which was defined as, for any vector field $a$:
$$\E_k [ a ] =\, \sum_j\, a_j\, k^{IN}_\theta(\x_j,\x_i) \; ,$$
\ie as the mean value of $a$ in the neighborhood of $i$, that is, the weighted average of the $a_j$ with weights $k^{IN}_\theta(\x_j,\x_i)$, which are positive and sum up to 1.

Given a network and its associated kernel $k^{IN}_\theta$, we are interested in to knowing the typical values of $\E_k[ \epsi ]$ for random $\epsi$.
First, the expectation over the noise of $\E_k[ \epsi ]$ is:
$$ \E_\epsi\left[ \E_k[ \epsi ] \right] \;=\; \E_\epsi\left[ \sum_j\, \epsi_j\, k^{IN}_\theta(\x_j,\x_i) \right] \; = \; \sum_j \E_\epsi[\epsi_j]\, k^{IN}_\theta(\x_j,\x_i)  \;=\; 0$$
as $\epsi$ is a centered noise.
Thus the random variable $\E_k[ \epsi ]$ is also centered, and therefore its typical values are described by its standard deviation, which is the square root of its variance:
$$\E_k[ \epsi ] \;\propto\; \var_\epsi\left(\E_k[ \epsi ]\right)^{1/2} \; .$$
The variance can be computed as follows:
\begin{align*}
  \var_\epsi\left( \E_k[ \epsi ] \right) &  \;=\; \E_\epsi\left[ \left( \sum_j\, \epsi_j\, k^{IN}_\theta(\x_j,\x_i) \right)^2 \right] \\
  & \;=\; \E_\epsi\left[ \sum_j\, \epsi_j^2\, \left( k^{IN}_\theta(\x_j,\x_i) \right)^2 \right] \;\;\;\;\;\text{as } \epsi \text{ is i.i.d.} \\
  & \;=\; \sigma^2_\epsi \, \sum_j\,  \left( k^{IN}_\theta(\x_j,\x_i) \right)^2 \\
  & \;=\; \sigma^2_\epsi\,  \left\| k^{IN}_\theta(\cdot,\x_i) \right\|^2_{L2} \; . \\
\end{align*}

As the weights $p_j = k^{IN}_\theta(\x_j,\x_i)$, for given $i$ and varying $j$, are positive and sum up to 1, they form a probability distribution. Hence the value of $\left\| k^{IN}_\theta(\cdot,\x_i) \right\|^2_{L2} = \|p\|^2_{L2}$ satisfies:
\begin{itemize}
\item $\|p\|_{L2} \leqslant 1$, $\;\;\;\;$ as $\sum_j p_j^2 \leqslant \sum_j p_j = 1$, with equality only when $p_j = p_j^2\; \forall j$, that is, all $p_j = 0$ except for one $p_{j^*} = 1$, which means $k^I_\theta(\x_j,\x_i) = 0 \;\;$ $\forall j \neq i$, which means that all data samples are fully independent from the network's point of view.
\item $\|p\|_{L2} \geqslant \frac{1}{\sqrt{N}}$ $\;\;\;\;$ as $1 = \sum_j 1 \times p_j \leqslant \| 1 \|_{L2}\; \|p\|_{L2} = \sqrt{N} \, \|p\|_{L2} $ (Cauchy-Bunyakovsky-Schwarz), with equality reached for the uniform distribution: $p_j = \frac{1}{N} \, \forall j$, where $N$ is the number of data samples. This implies that all $k^C_\theta(\x_j,\x_i)$ are equal, for all $i,j$, hence they are all equal to $k^C_\theta(\x_i,\x_i) = 1$. This is the case studied in~\cite{noise2noise}: all input points are identical.
\end{itemize}

The denoising factor $\| k^{IN}_\theta (\cdot,\x_i) \|_{L2}$, which depends on the data point $\x_i$ considered, thus expresses where the neighborhood of $\x_i$ lies, between these two extremes (all $\x_j$ very different from $\x_i$, or all identical).

Note: the results above remain valid when the output is higher-dimensional, under the supplementary assumption that the covariance matrix of the noise is proportional to the Identity matrix (\ie, the noises on the various coefficients of the label vector are independent from each other, and follow the same law, with standard deviation $\sigma_\epsi$). If not, the expression for $\mathrm{co}\!\var_\epsi\left( \E_k[ \epsi ] \right)$ is more complex, as $\Sigma_\epsi$ and $k^{IN}_\theta$ interact. Note that when the output is of dimension $d$, the kernel $k^{IN}_\theta(\x_j,\x_i)$ is a $d \times d$ matrix, thus the denoising factor $\left\| k^{IN}_\theta(\cdot,\x_i) \right\|^2_{L2}$ has to be replaced with the matrix $\sum_j k^{IN}_\theta(\x_j,\x_i)\, k^{IN}_\theta(\x_j,\x_i)^T$, which can be summarized by its trace, which is the $L^2$ norm of the Frobenius norms: $\Big\| \left\| k^{IN}_\theta(\cdot,\x_i) \right\|_F \Big\|^2_{L2}$.

\subsection{The function: gradient $\mapsto$ output is Lipschitz}

Theorem \ref{basicnet} implies that the application: $\frac{\nabla_{\!\theta} f_\theta(\x)}{\|\nabla_{\!\theta} f_\theta(\x)\|} \mapsto f_\theta(\x)$ is well-defined. We show here that this application is also Lipschitz, with a network-dependent constant, under mild hypotheses.

We consider the same assumptions as in Theorem~\ref{basicnet}~: $f_\theta$ is a real-valued network, whose last layer is a linear layer or a standard activation function thereof (such as sigmoid, tanh, ReLU...), without parameter sharing (in that last layer). We will also require that the derivative of the activation function is bounded, which is a safe assumption for all networks meant to be trained by gradient descent. Another, technical property (bounded input space) will be assumed in order to imply bounded gradients. A side note indicates how to rewrite the desired property if the input space is not bounded.

\newcommand{\uu}{\mathbf{u}}

Let $\x$ and $\x'$ be any two inputs.
We want to bound $\left|f_\theta(\x) - f_\theta(\x')\right|$ by $\|\uu - \uu'\|_2$ times some constant, where $\uu = \frac{\nabla_{\!\theta} f_\theta(\x)}{\|\nabla_{\!\theta} f_\theta(\x)\|}$ and $\uu' = \frac{\nabla_{\!\theta} f_\theta(\x')}{\|\nabla_{\!\theta} f_\theta(\x')\|}$.

Let us denote the non-normalized gradients by $\vv = \nabla_{\!\theta} f_\theta(\x)$ and $\vv' = \nabla_{\!\theta} f_\theta(\x')$. We have $\uu = \frac{\vv}{\|\vv\|}$ and $\uu' = \frac{\vv'}{\|\vv'\|}$.

We will proceed in two steps: bounding $\left|f_\theta(\x) - f_\theta(\x')\right|$ by $\|\vv - \vv'\|_2$, and then $\|\vv - \vv'\|_2$ by $\|\uu - \uu'\|_2$. The first step is easy and actually sufficient to bound with a non-normalized similarity kernel $k_\theta = \vv \cdot \vv'$  the shift from the average prediction in the neighborhood. The second step provides a more elegant bound, in that it makes use of the normalized similarity kernel $k^C_\theta = \uu \cdot \uu'$, but that bound is a priori not as tight and requires more assumptions.

\medskip

\textbf{Case where the last layer is linear} 

The output of the network is of the form $$f_\theta(\x) = \sum_i w_i a_i(\x) + b \;, $$ where $w_i$ and $b$ are parameters in $\R$ and $a_i(\x)$ activities from previous layers. Thus:
\begin{align*}
  \left|f_\theta(\x) - f_\theta(\x')\right| & = \; \left| \sum_i w_i \, (a_i(\x) - a_i(\x'))\right| \\
& \leqslant \; \| \mathbf{w} \|_2 \, \| \mathbf{a}(\x) - \mathbf{a}(\x') \|_2\\
& \leqslant \; \| \mathbf{w} \|_2 \, \sqrt{ \sum_i ( v_i - v'_i )^2 }
  \end{align*}
where the sum is taken over parameters $i$ in the last layer only, using the fact that activities $a_i$ in the last layer are equal to some of the coefficients of the gradient:  $v_i := \frac{\partial f_\theta(\x)}{\partial w_i} = a_i(\x)$.

Note that the derivative with respect to the shift $b$ is $v_b := \frac{\partial f_\theta(\x)}{\partial b} = 1$, which ensures that the norm of $\vv$ is at least 1. This implies:
$$ \| \uu - \uu' \|_2 \; \geqslant \; \left| u_b - u'_b \right| \; = \; \left| \frac{1}{\|\vv\|} - \frac{1}{\|\vv'\|} \right| $$
which, combined with:
$$ \left|v_i - v'_i\right| \;\;=\;\; \|\vv'\|\; \left|\frac{1}{\|\vv'\|} v_i - \frac{v'_i}{\|\vv'\|}\right| \;\;=\;\; \|\vv'\|\; \left|
\, u_i - u'_i + \left( \frac{1}{\|\vv'\|} -  \frac{1}{\|\vv\|} \right) v_i \, \right| $$
yields:
$$ \left|v_i - v'_i\right| \;\; \leqslant \;\;  \|\vv'\| \,\Big( \left| \, u_i - u'_i \,\right| \,+\,  \| \uu - \uu' \|_2 \,|v_i| \Big) \;\; \leqslant \;\; \|\vv'\|  \,  \| \uu - \uu' \|_2 \, (1+|v_i|)
$$ 
from which we finally obtain:
$$  \left|f_\theta(\x) - f_\theta(\x')\right| \; \leqslant \; \left[ \| \mathbf{w} \|_2 \, \|\vv'\|  \,  \sqrt{ \sum_i (1+|v_i|)^2 } \right] \,  \| \uu - \uu' \|_2 $$
which is the bound we were searching for. For the term between brackets to be bounded by a network-dependent constant, one can suppose for instance 
that the derivative of the activation functions is bounded (which is usually the case for networks meant to be trained by gradient descent),
and that the input space is bounded as well; in such cases indeed
all coefficients of the gradient vector $\vv$ or $\vv'$ are bounded, as derivatives of a function composed of constant linear applications (except for the first layer which is a linear application whose factors are bounded inputs, when seen as an application defined on parameters) and of bounded-derivatives activation functions.

\textbf{Note for unbounded input spaces: } If the input space is not bounded, the gradients are not bounded absolutely, as for instance the gradient with respect to a weight in the first layer is the input itself (times a chain product). In that case the application $\x \mapsto \vv$ still satisfy a bound of the form $\|\vv\| \leqslant (1+\|\x\|)\,A$, with $A$ a network-dependent constant (product of determiners of layer weight matrices and of the bound on activation function derivatives to the power: network depth), and thus the application $\uu \mapsto f_\theta(\x)$ still satisfies a bound of the form, for any $\x$, $\x'$: 
$$  \left|f_\theta(\x) - f_\theta(\x')\right| \;\; \leqslant \;\; B \;(1+\|\x\|)\; (1+\|\x'\|)  \;  \| \uu - \uu' \|_2 \; .$$

The last statement in the paper then becomes
$$\big|\,\E_k[ \yh_i - \yh ]\,\big| \;\;\leqslant\;\; \sqrt{2}\, B \;(1+\|\x_i\|)\; \max_j (1+\|\x_j\|) \; \E_k\!\left[ \sqrt{1 - k_\theta^C(\x_i,\cdot)}\,\right]  $$
which in practice rewrites as the original formulation:
$$\big|\,\E_k[ \yh_i - \yh ]\,\big| \;\;\leqslant\;\; \sqrt{2}\, C\, \E_k\!\left[ \sqrt{1 - k_\theta^C(\x_i,\cdot)}\,\right]  $$
by taking $C = B \max_j \left (1+\|\x_j\| \right)^2$, considering the actual diameter of the given dataset.

\medskip

\textbf{Case where the last layer is an activation function of a linear layer} 

The output of the network is of the form
$$f_\theta(\x) = \sigma\left( \sum_i w_i a_i(\x) + b \right)\;, $$
and, as the derivative of $\sigma$ is assumed to be bounded, and as the weights $w_i$ are fixed, $f_\theta(\x)$ is a Lipschitz function of the last layer activities $a_i(\x)$. Therefore:
\begin{align*}
  \left|f_\theta(\x) - f_\theta(\x')\right|
 & \leqslant \; K \, \| \mathbf{a}(\x) - \mathbf{a}(\x') \|_2 \; .
  \end{align*}
We will denote by $\alpha$ and $\alpha'$ the derivatives with respect to the shift $b$, which are this time:
$$\left. \alpha := \vv_b := \frac{\partial f_\theta(\x)}{\partial b} = \sigma'\right|_{\sum_i w_i a_i(\x) + b}  \;\;\;\;\;\;\;\; \text{and} \;\;\;\;\;\;\;\; \left. \alpha' := \vv'_b := \frac{\partial f_\theta(\x')}{\partial b} = \sigma'\right|_{\sum_i w_i a_i(\x') + b} \; .$$

We proceed as previously:
$$ \| \uu - \uu' \|_2 \; \geqslant \; \left| u_b - u'_b \right| \; = \; \left| \frac{\alpha}{\|\vv\|} - \frac{\alpha'}{\|\vv'\|} \right| $$
which, combined with:
$$ \left|a_i - a'_i\right| \;\;=\;\; \left|\frac{v_i}{\alpha} - \frac{v'_i}{\alpha'}\right| \;\;=\;\; \frac{\|\vv'\|}{\alpha'}\; \left|\frac{\alpha'}{\alpha \|\vv'\|} v_i - \frac{v'_i}{\|\vv'\|}\right| \;\;=\;\; \frac{\|\vv'\|}{\alpha'}\; \left|
\, u_i - u'_i + \frac{v_i}{\alpha}\left( \frac{\alpha'}{\|\vv'\|} -  \frac{\alpha}{\|\vv\|} \right) \, \right| $$
yields:
$$ \left|a_i - a'_i\right| \;\; \leqslant \;\;  \frac{\|\vv'\|}{\alpha'} \,\Big( \left| \, u_i - u'_i \,\right| \,+\,  \| \uu - \uu' \|_2 \,|a_i| \Big) \;\; \leqslant \;\; \frac{\|\vv'\|}{\alpha'} \, (1+|a_i|) \,  \| \uu - \uu' \|_2 
$$ 
from which we finally obtain:
$$  \left|f_\theta(\x) - f_\theta(\x')\right| \; \leqslant \; \left[ K  \, \frac{\|\vv'\|}{\alpha'}  \,  \sqrt{ \sum_i (1+|a_i|)^2 } \right] \,  \| \uu - \uu' \|_2 \; . $$
Note that $\alpha'$ is actually a factor of each coefficient of $\vv'$, as the derivative of $f_\theta(\x')$ with respect to any parameter is a chain rule starting with $\left.\frac{\partial f_\theta(\x')}{\partial b} = \sigma'\right|_{\sum_i w_i a_i(\x') + b} = \alpha'$.
To bound the term between brackets, the same assumptions as previously are sufficient. One can assume that $\alpha$ and $\alpha'$ are not 0, as, if they are, the problem is of little interest ($\uu$ or $\uu'$ being then not defined).

\subsection{Additional proof detail}

The kernel $k_\theta^C(\x,\x')$, by definition, is the $L^2$ inner product between two unit vectors:
$$k_\theta^C(\x,\x') = \frac{\nabla_{\!\theta} f_\theta(\x)}{\|\nabla_{\!\theta} f_\theta(\x)\|} \cdot \frac{\nabla_{\!\theta} f_\theta(\x')}{\|\nabla_{\!\theta} f_\theta(\x')\|} \, .$$

As, for any two unit vectors $a$ and $b$:
$$\| a-b \|^2 \;=\; a^2 + b^2 - 2\, a\cdot b \;=\; 2 \,(1 - a \cdot b) \; ,$$

we get:

$$ \left\| \frac{\nabla_{\!\theta} f_\theta(\x)}{\|\nabla_{\!\theta} f_\theta(\x)\|} - \frac{\nabla_{\!\theta} f_\theta(\x')}{\|\nabla_{\!\theta} f_\theta(\x')\|}  \right\| = \sqrt{2} \sqrt{1 - k_\theta^C(\x,\x')} \; .$$


\subsection{Data augmentation as a label denoising technique}
Data augmentation can be seen as label denoising, as it multiplies the number of neighbors. Indeed, in the infinite sampling limit, where the dataset becomes a probability distribution over all possible images, adding a transformed copy $\x' = T_\phi\, \x$ of a given point $\x$ (\eg rotating it with an angle $\alpha_\phi$ and adding small noise $\epsi_\phi$) means adding $(\x', l(\x))$ to the dataset, where $l(\x)$ is the desired label for $\x$. But if $(\x', l(\x'))$ was already in the dataset, this amounts to enriching the possible labels for $\x'$. Supposing $T_\phi$ is an invertible transformation parameterized by $\phi$, full data augmentation (\ie for all possible $\phi$, applied on all points $\x$) enriches $\x'$ with all labels $l( T_\phi^{-1}(\x') )$. In case of i.i.d.~label noise, data augmentation will thus reduce this noise by a factor $\sqrt{\text{number of copies}}$.




\begin{figure}
	\centering
	\begin{subfigure}[b]{\textwidth}
		\includegraphics[width=\linewidth]{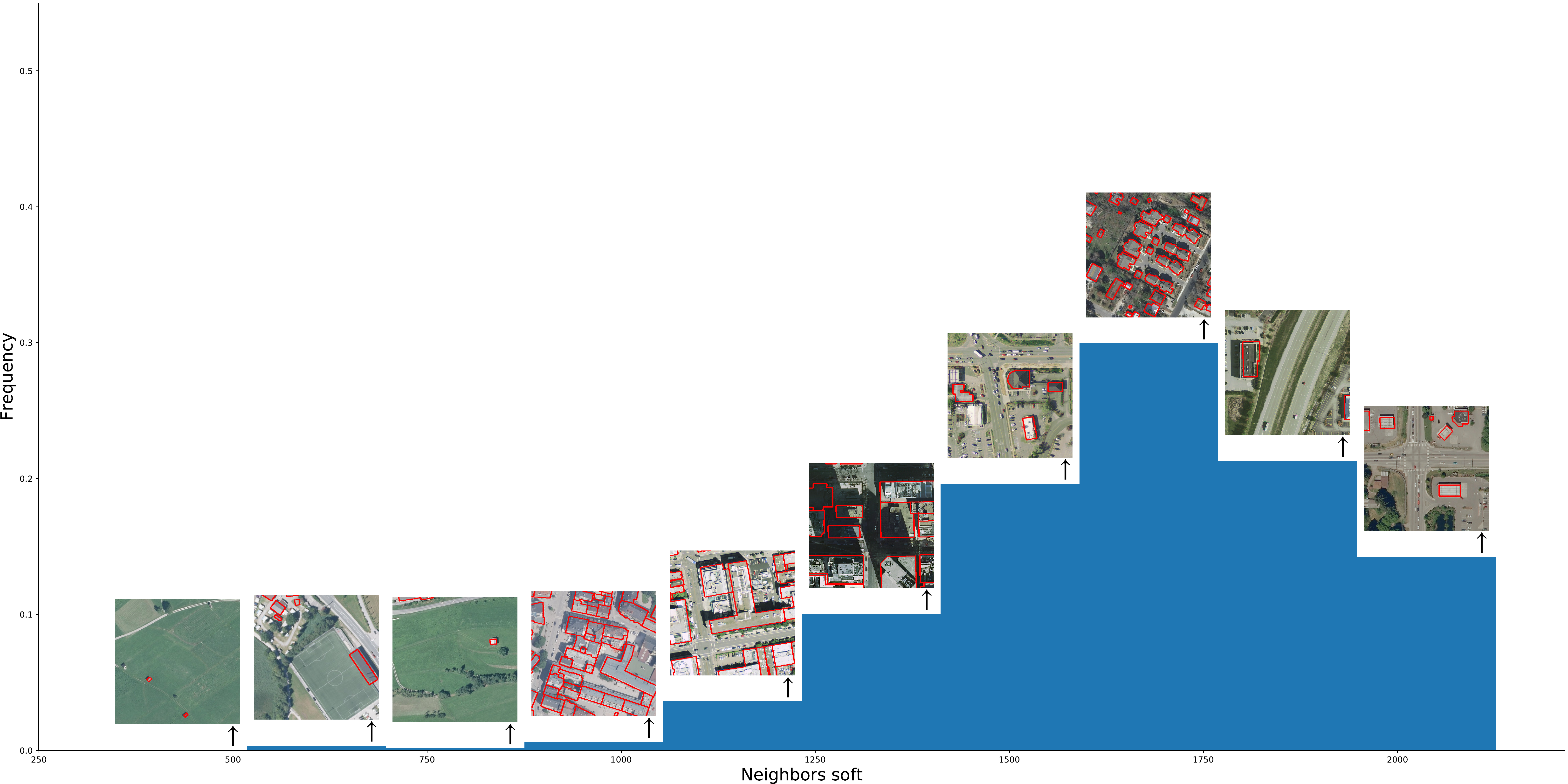}
		\caption{Round 1}
	\end{subfigure}
	\begin{subfigure}[b]{\textwidth}
		\includegraphics[width=\linewidth]{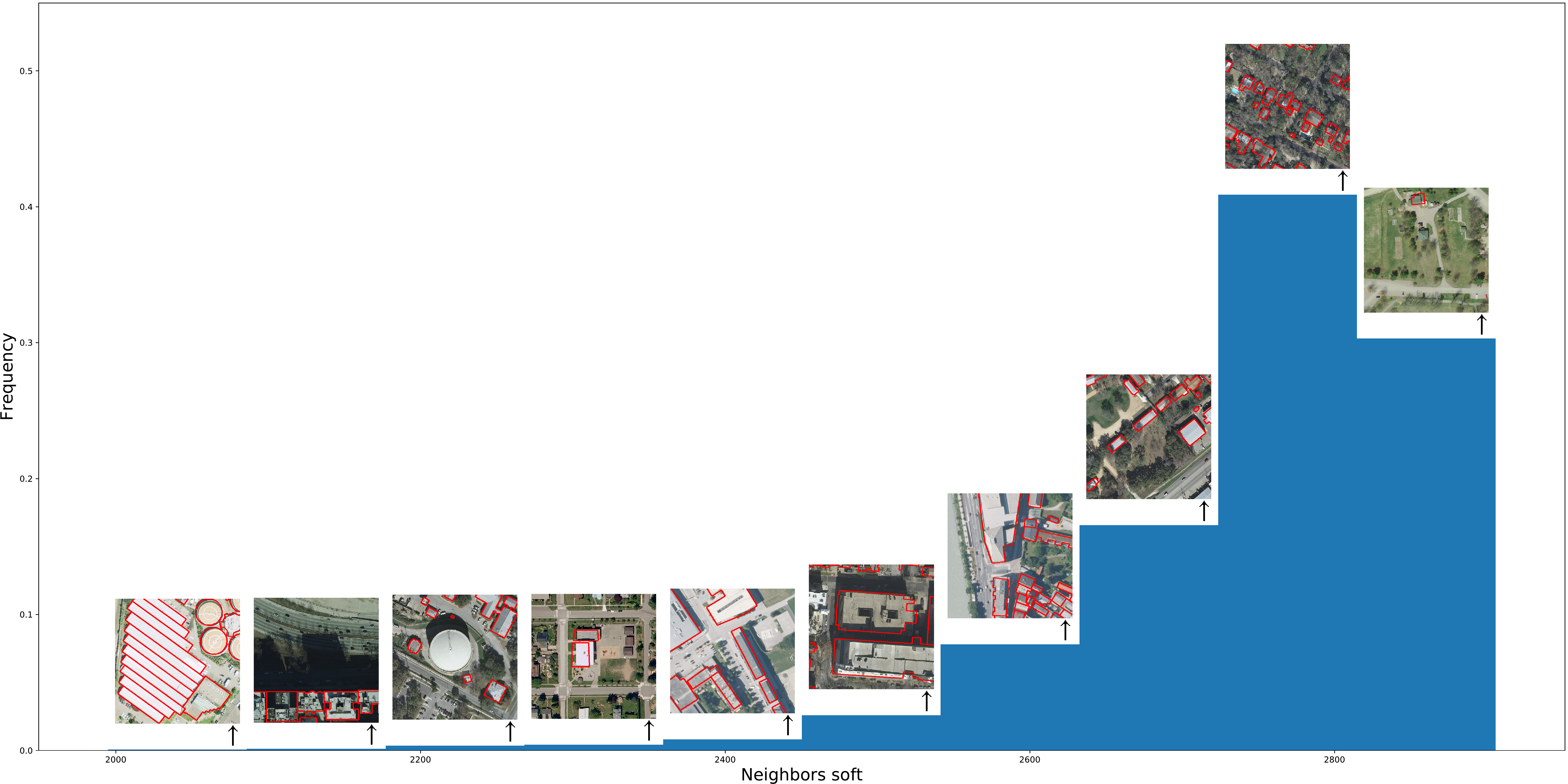}
		\caption{Round 2}
	\end{subfigure}
	\begin{subfigure}[b]{\textwidth}
		\includegraphics[width=\linewidth]{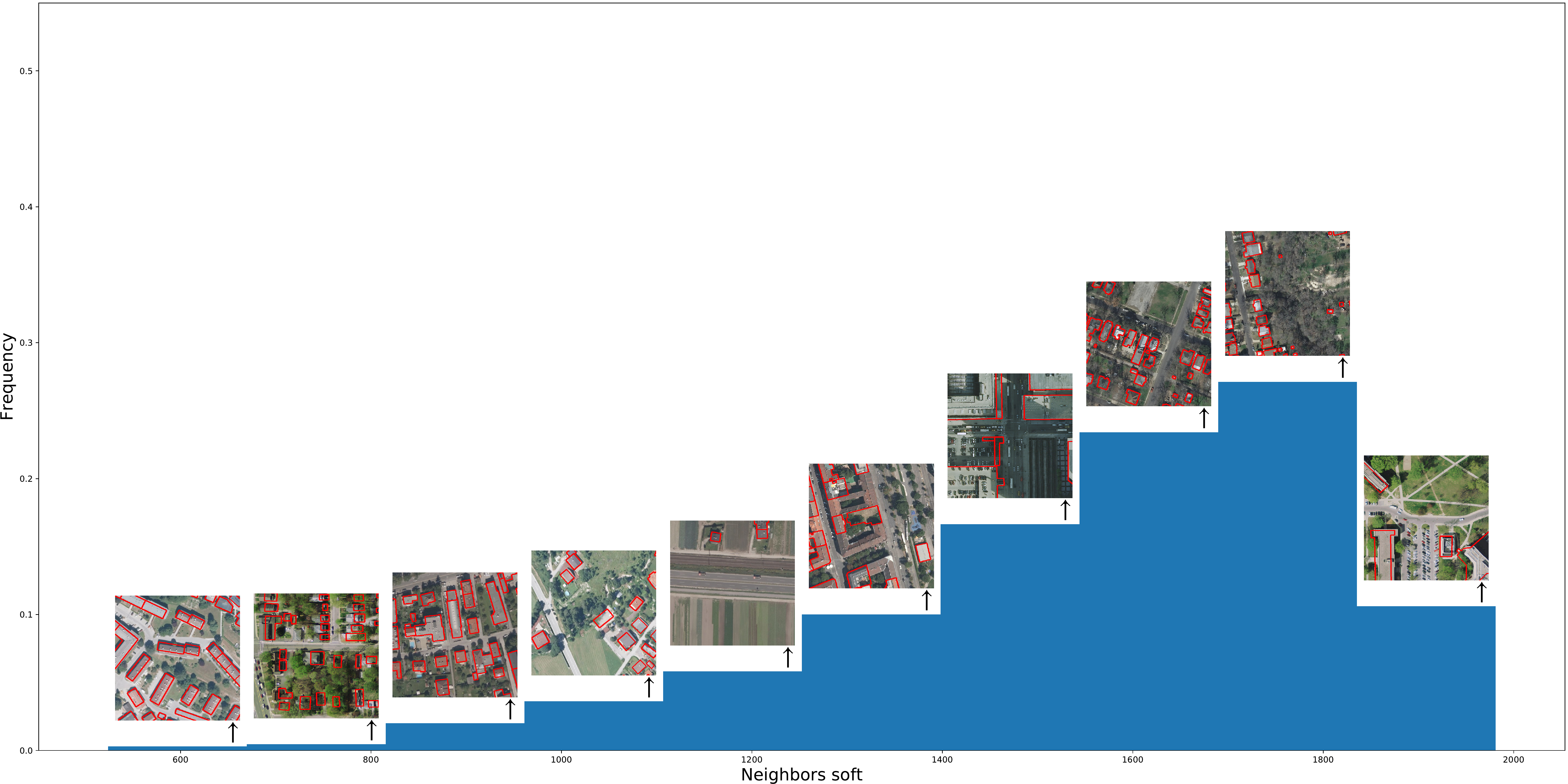}
		\caption{Round 3}
	\end{subfigure}
	\caption{Histogram of the soft estimate of neighbors on 3045 patches. Horizontal scale is different for each.}
	\label{fig:overall_hist}
\end{figure}


\begin{figure}
	\centering
	\includegraphics[width=\linewidth]{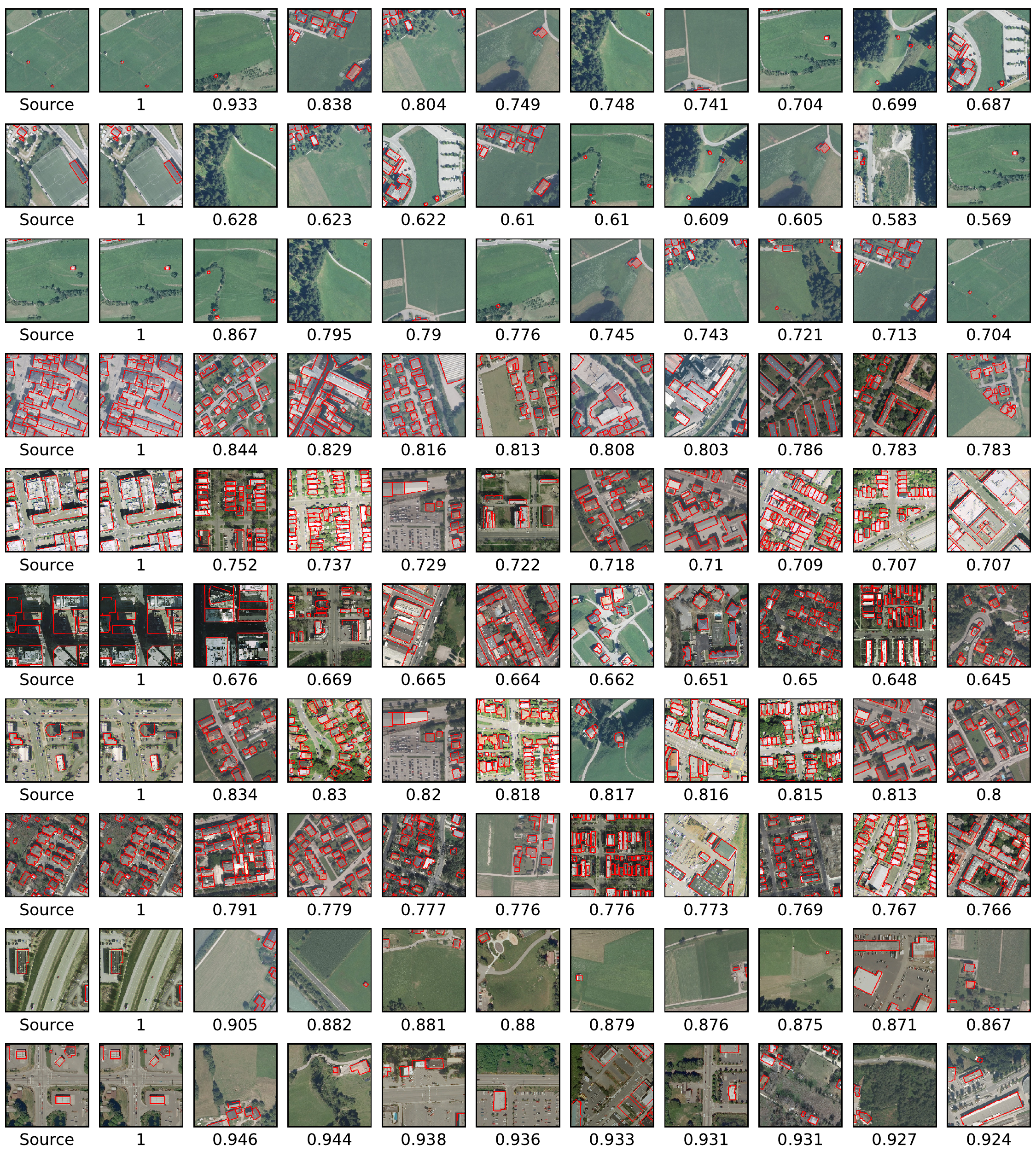}
	\caption{\textbf{Round 1}: k-nearest neighbors with k=10. The 10 patches selected correspond to the 10 patches of Fig.\ref{fig:overall_hist} for that round.}
	\label{fig:round_0_overall_hist_k_nearest}
\end{figure}
\begin{figure}
	\centering
	\includegraphics[width=\linewidth]{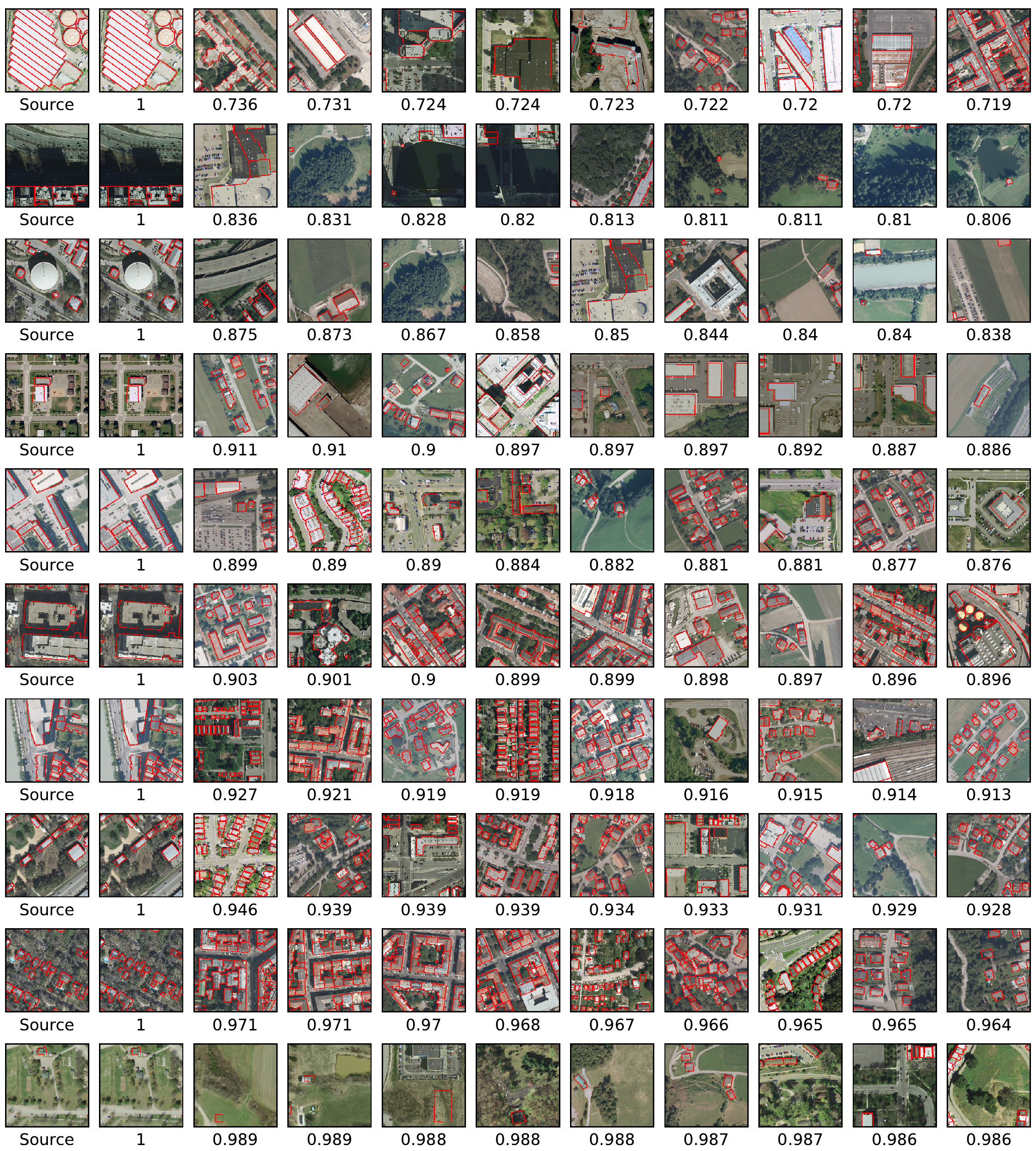}
	\caption{\textbf{Round 2}: k-nearest neighbors with k=10. The 10 patches selected correspond to the 10 patches of Fig.\ref{fig:overall_hist} for that round.}
	\label{fig:round_1_overall_hist_k_nearest}
\end{figure}
\begin{figure}
	\centering
	\includegraphics[width=\linewidth]{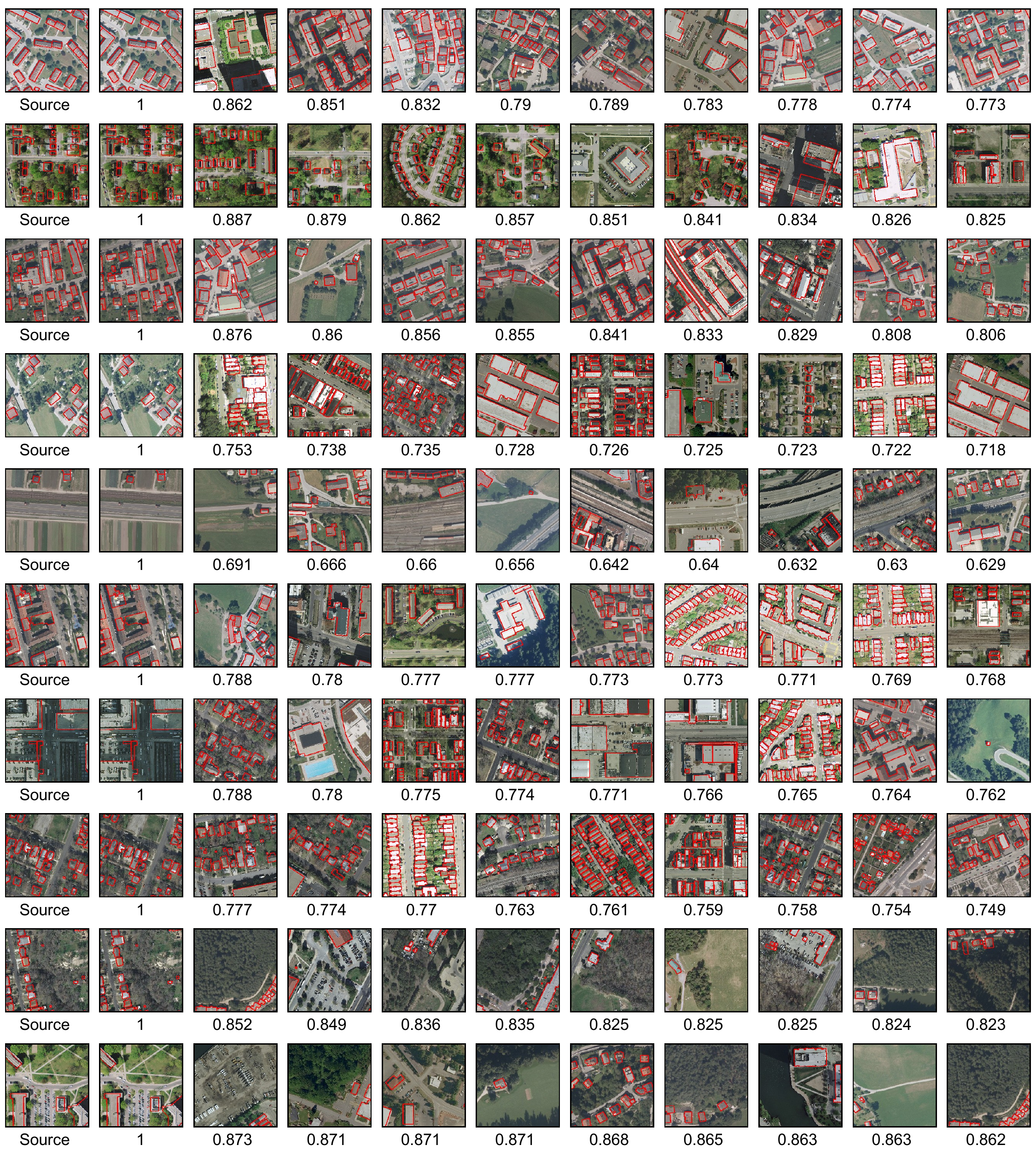}
	\caption{\textbf{Round 3}: k-nearest neighbors with k=10. The 10 patches selected correspond to the 10 patches of Fig.\ref{fig:overall_hist} for that round.}
	\label{fig:round_2_overall_hist_k_nearest}
\end{figure}

\begin{figure}
	\centering
	\begin{subfigure}[b]{0.3\textwidth}
		\centering
		\caption{Round 1}
		\includegraphics[width=0.7\linewidth]{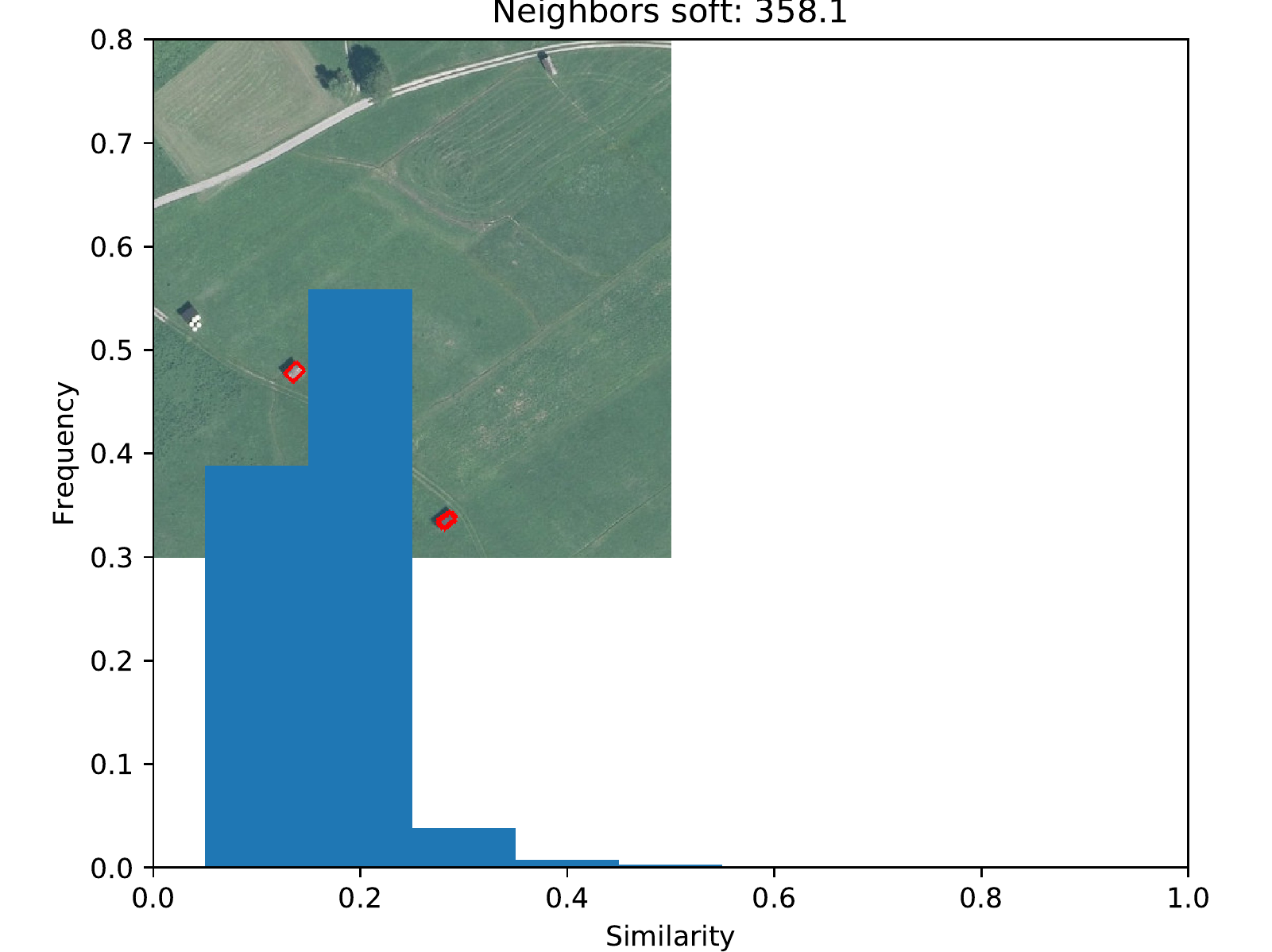}
		\includegraphics[width=0.7\linewidth]{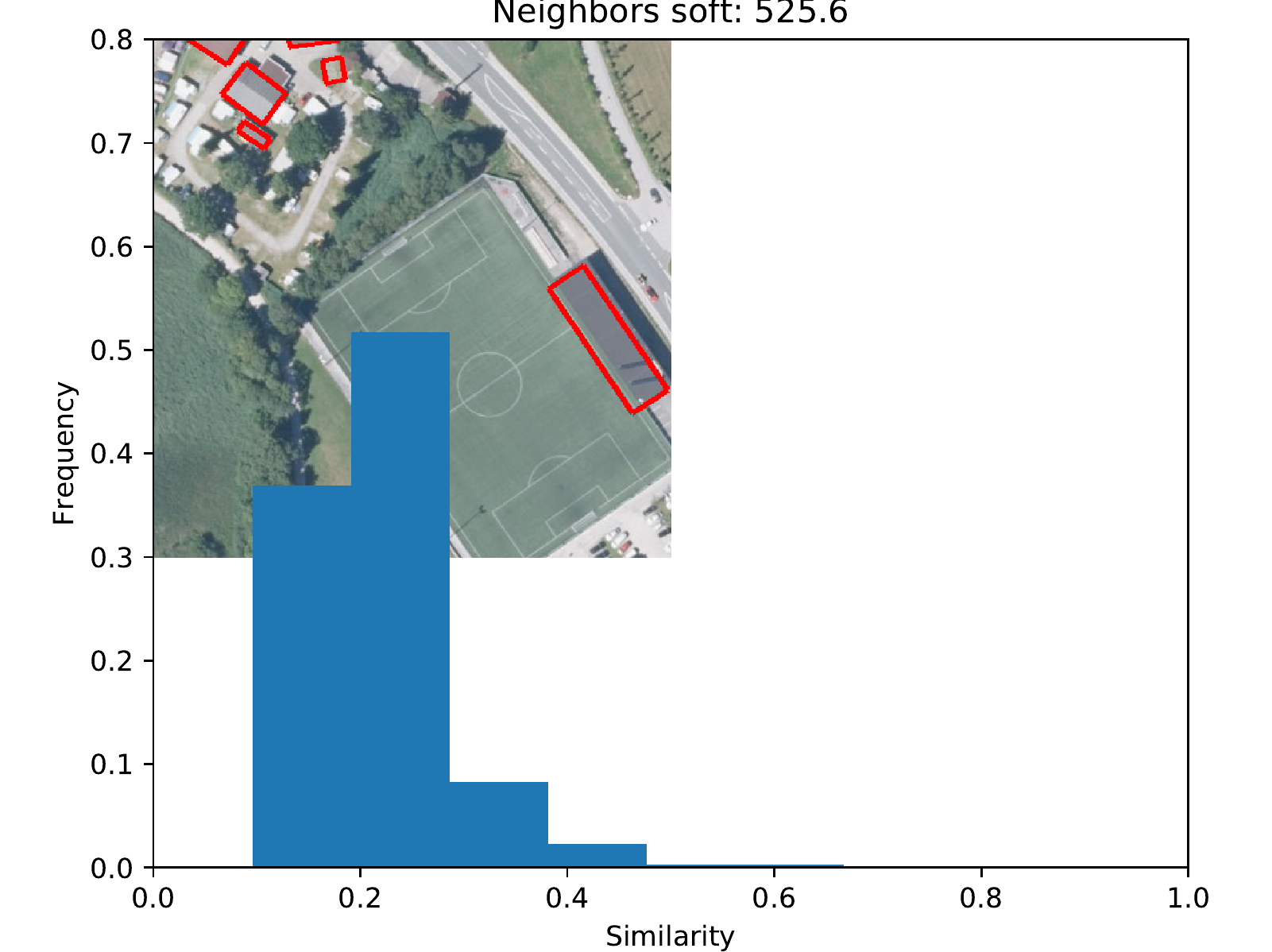}
		\includegraphics[width=0.7\linewidth]{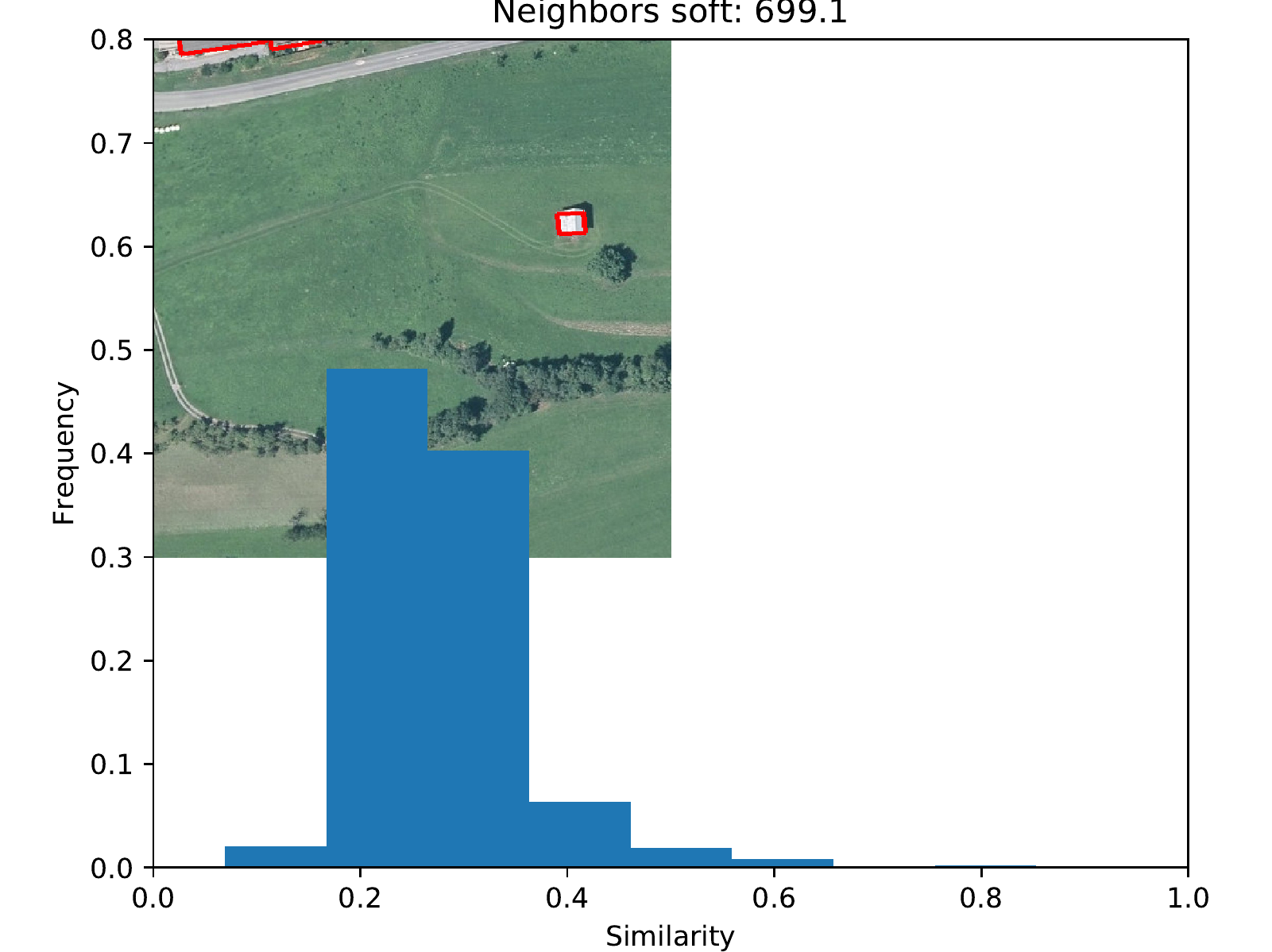}
		\includegraphics[width=0.7\linewidth]{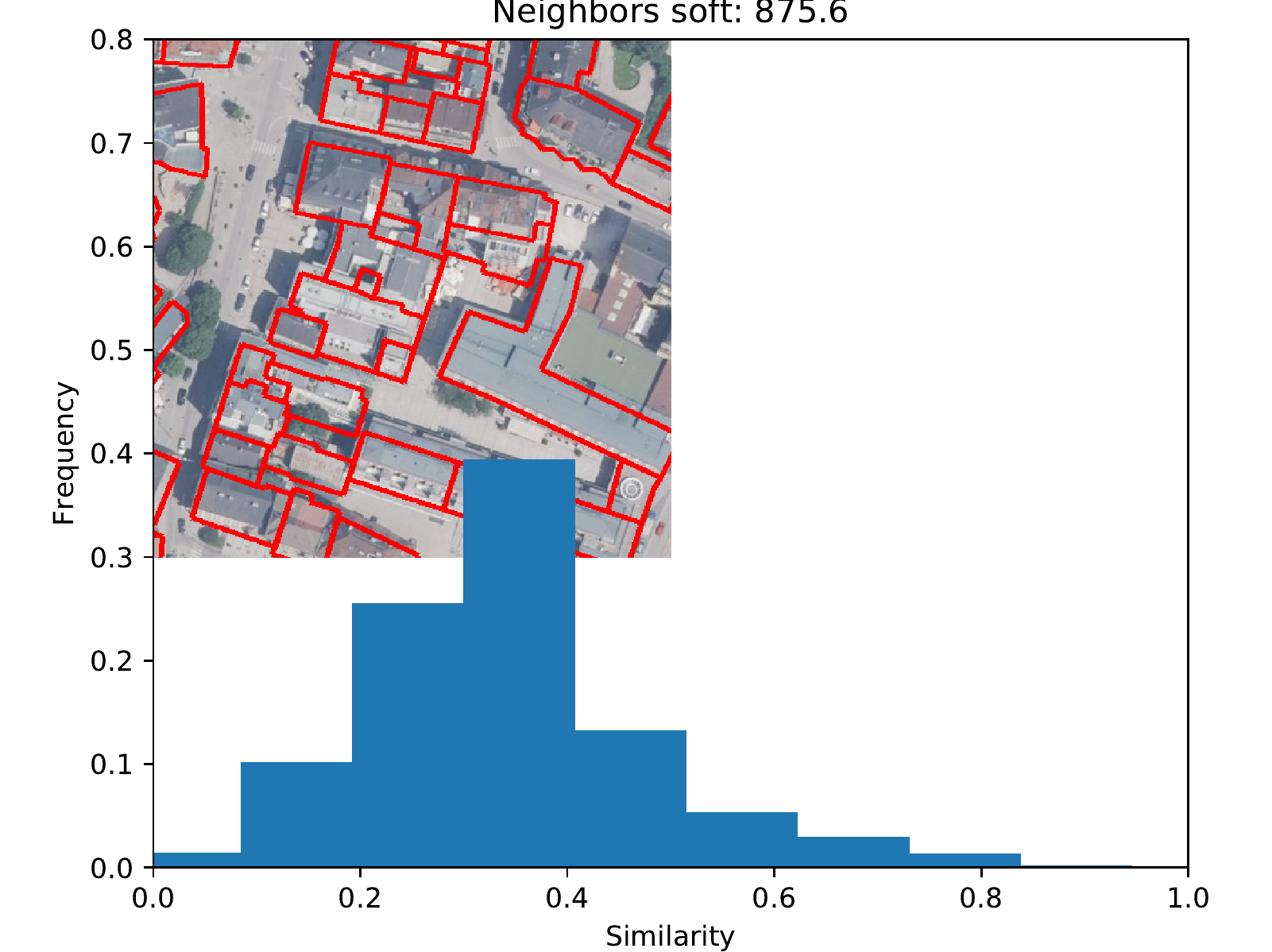}
		\includegraphics[width=0.7\linewidth]{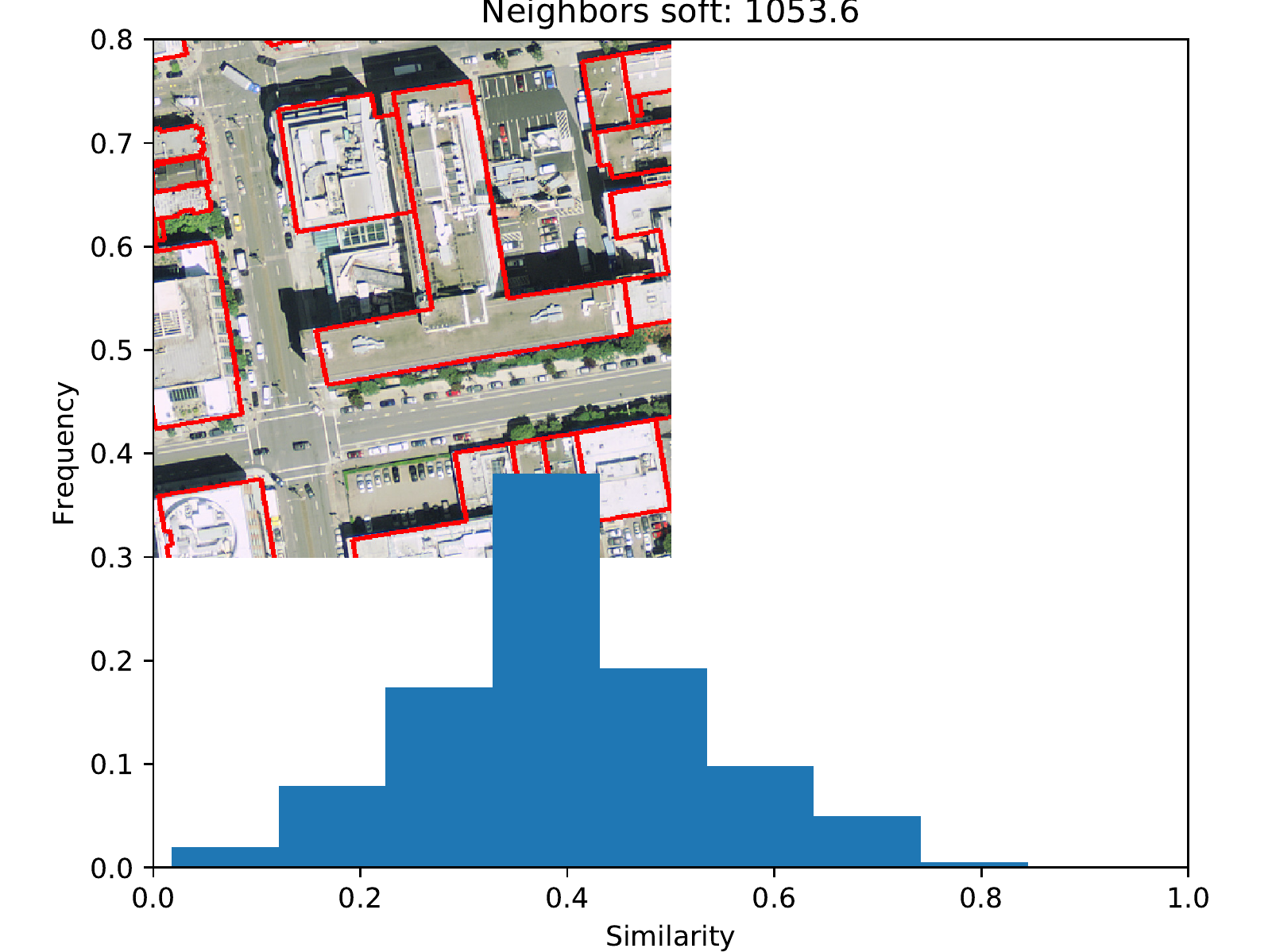}
		\includegraphics[width=0.7\linewidth]{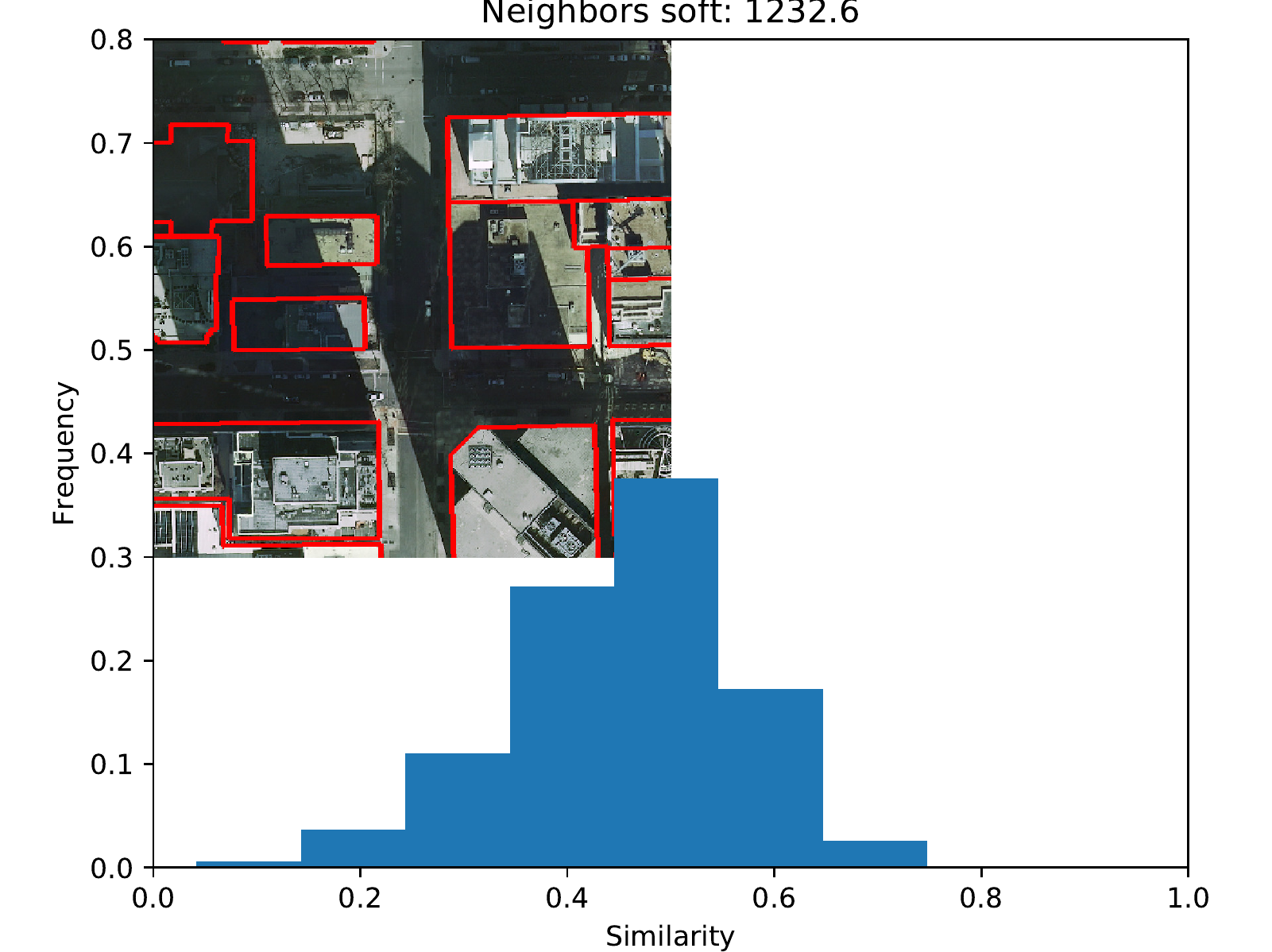}
		\includegraphics[width=0.7\linewidth]{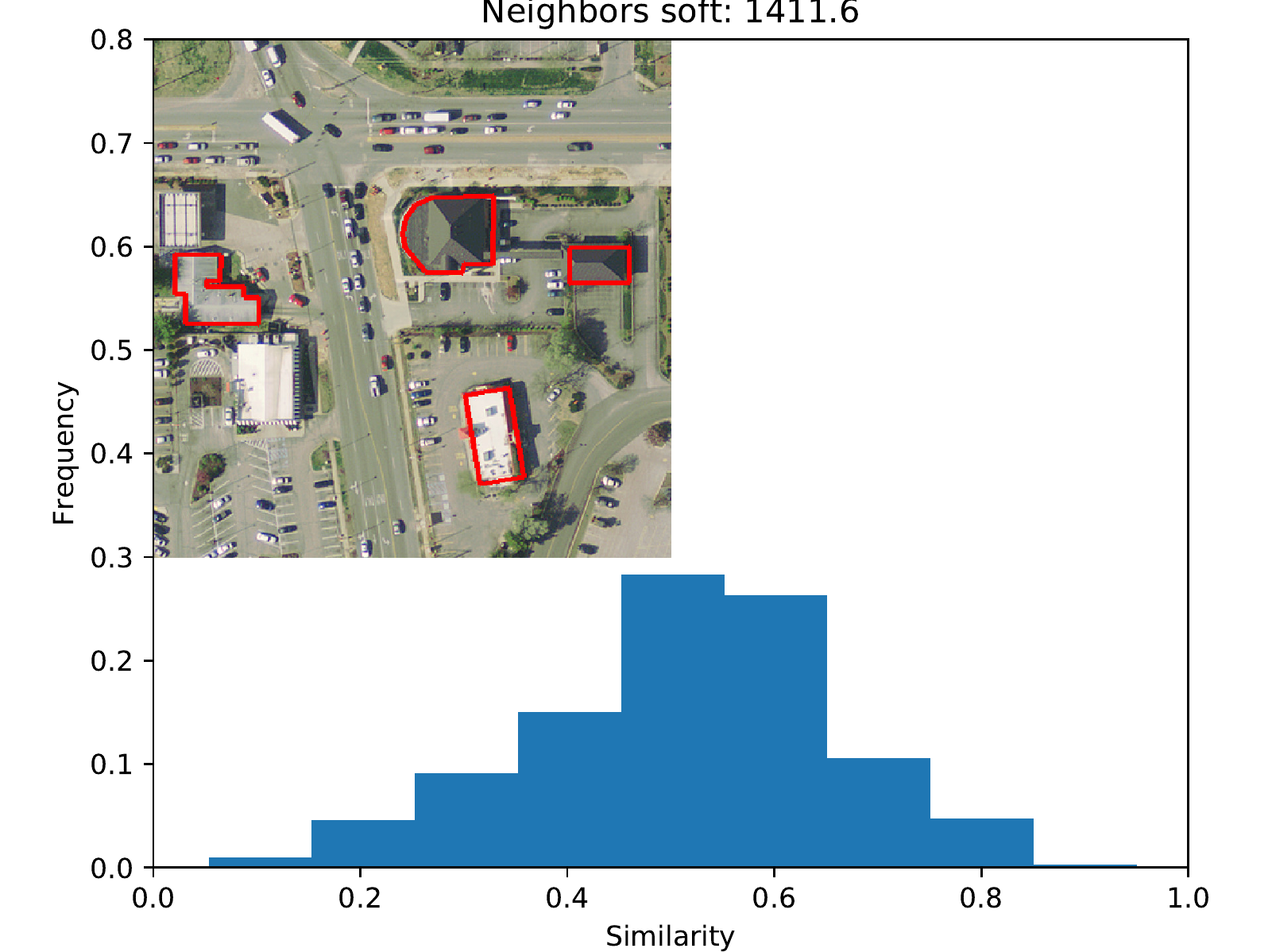}
		\includegraphics[width=0.7\linewidth]{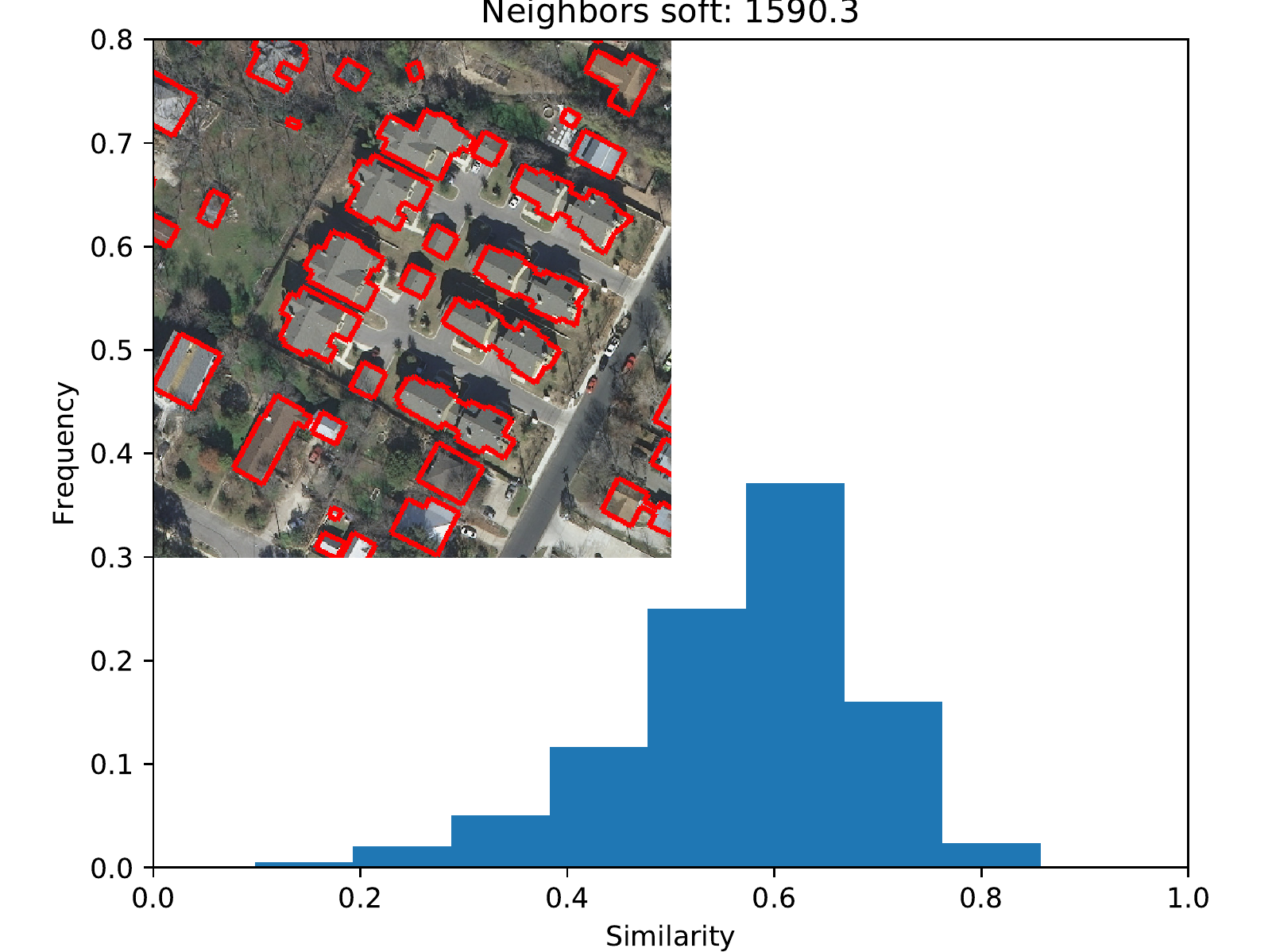}
		\includegraphics[width=0.7\linewidth]{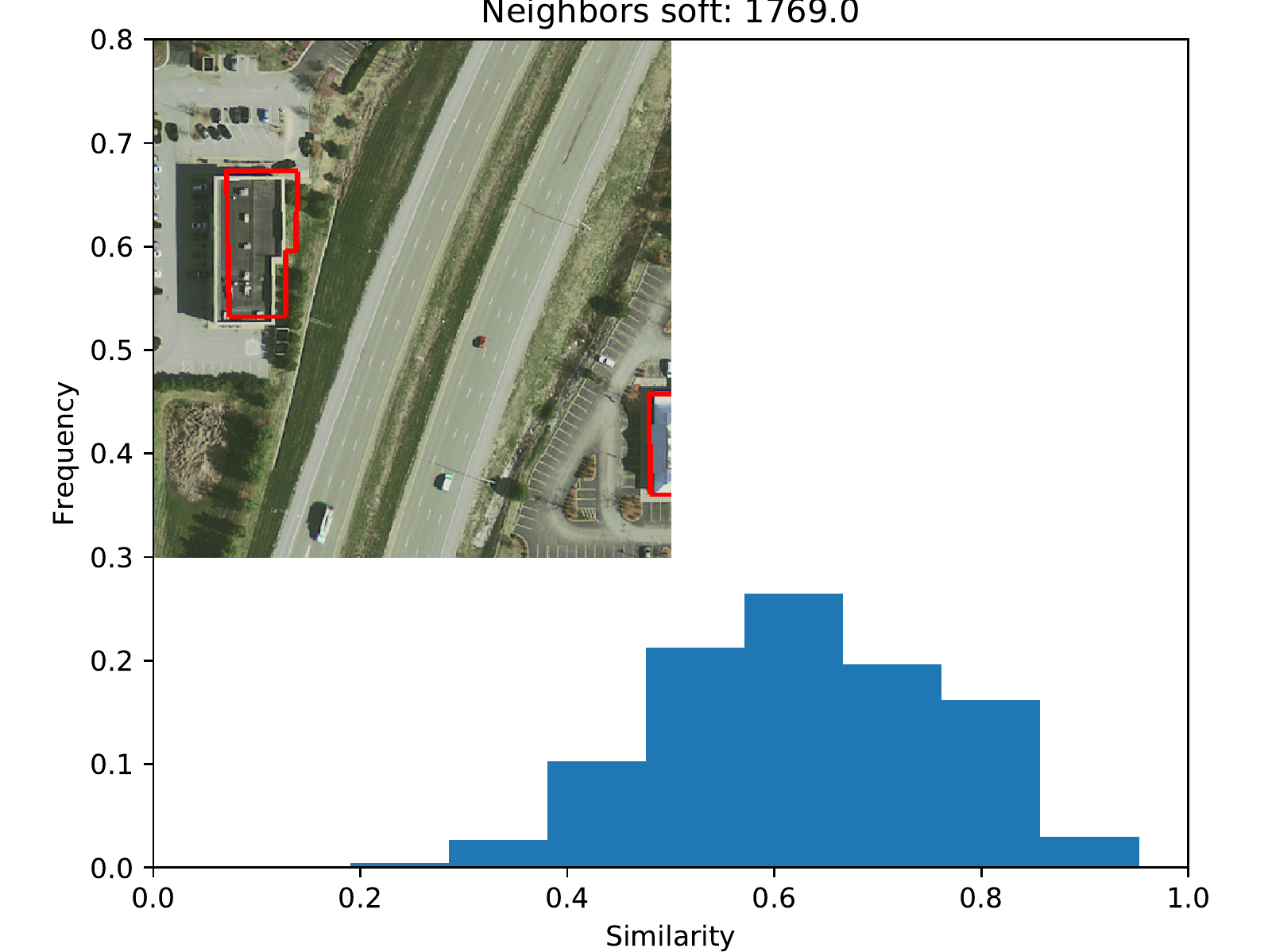}
		\includegraphics[width=0.7\linewidth]{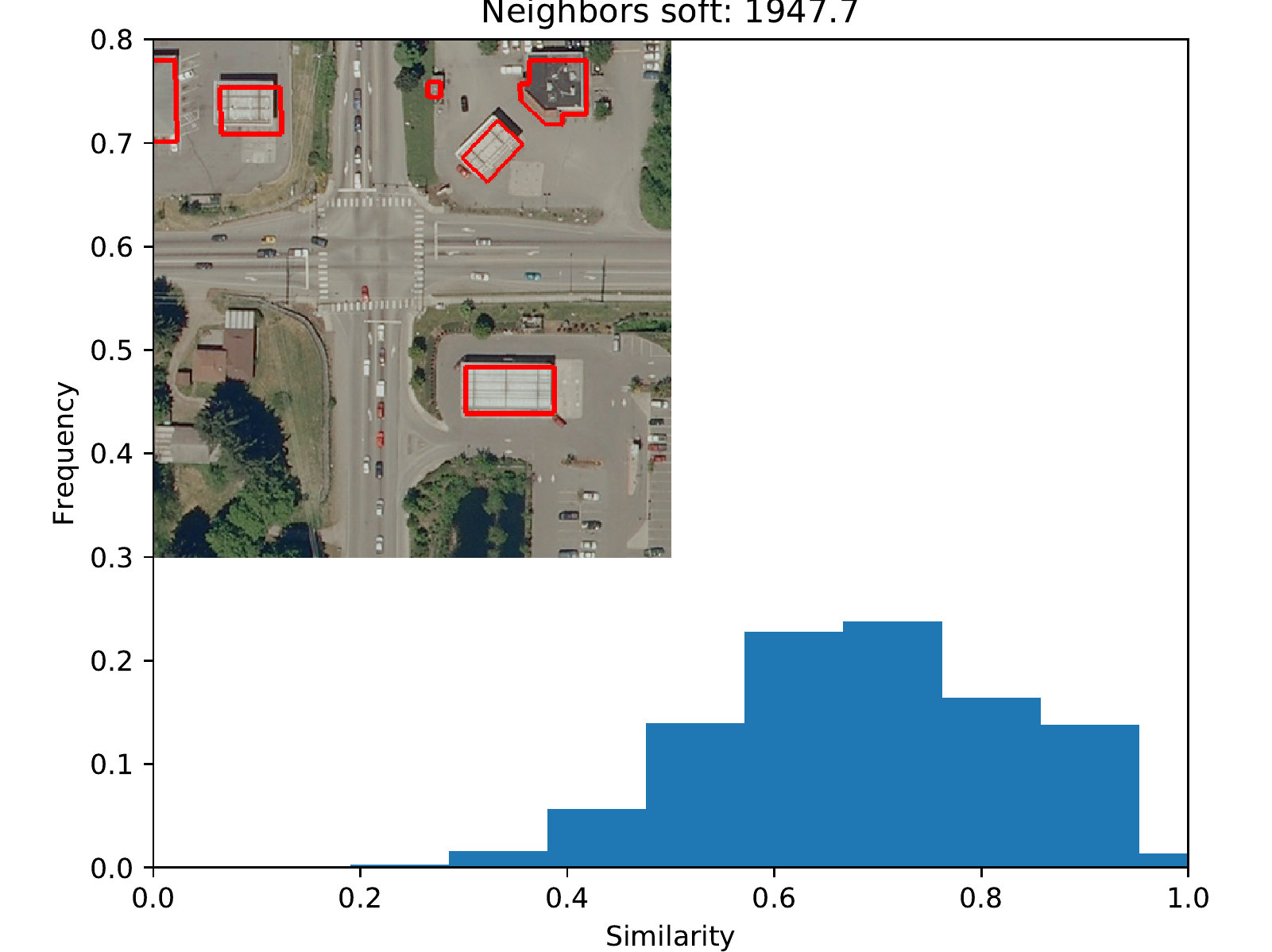}
	\end{subfigure}
	\begin{subfigure}[b]{0.3\textwidth}
		\centering
		\caption{Round 2}
		\includegraphics[width=0.7\linewidth]{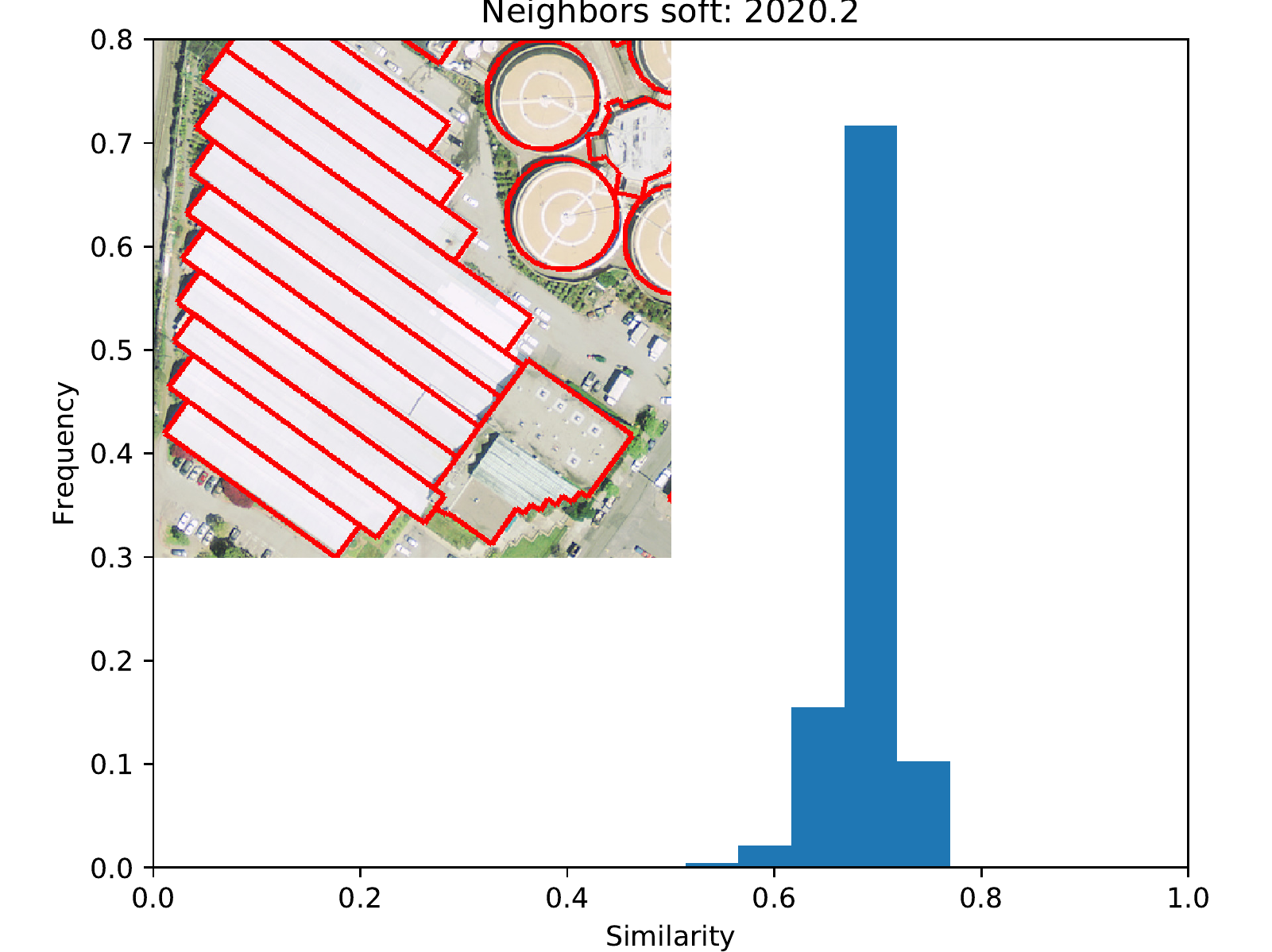}
		\includegraphics[width=0.7\linewidth]{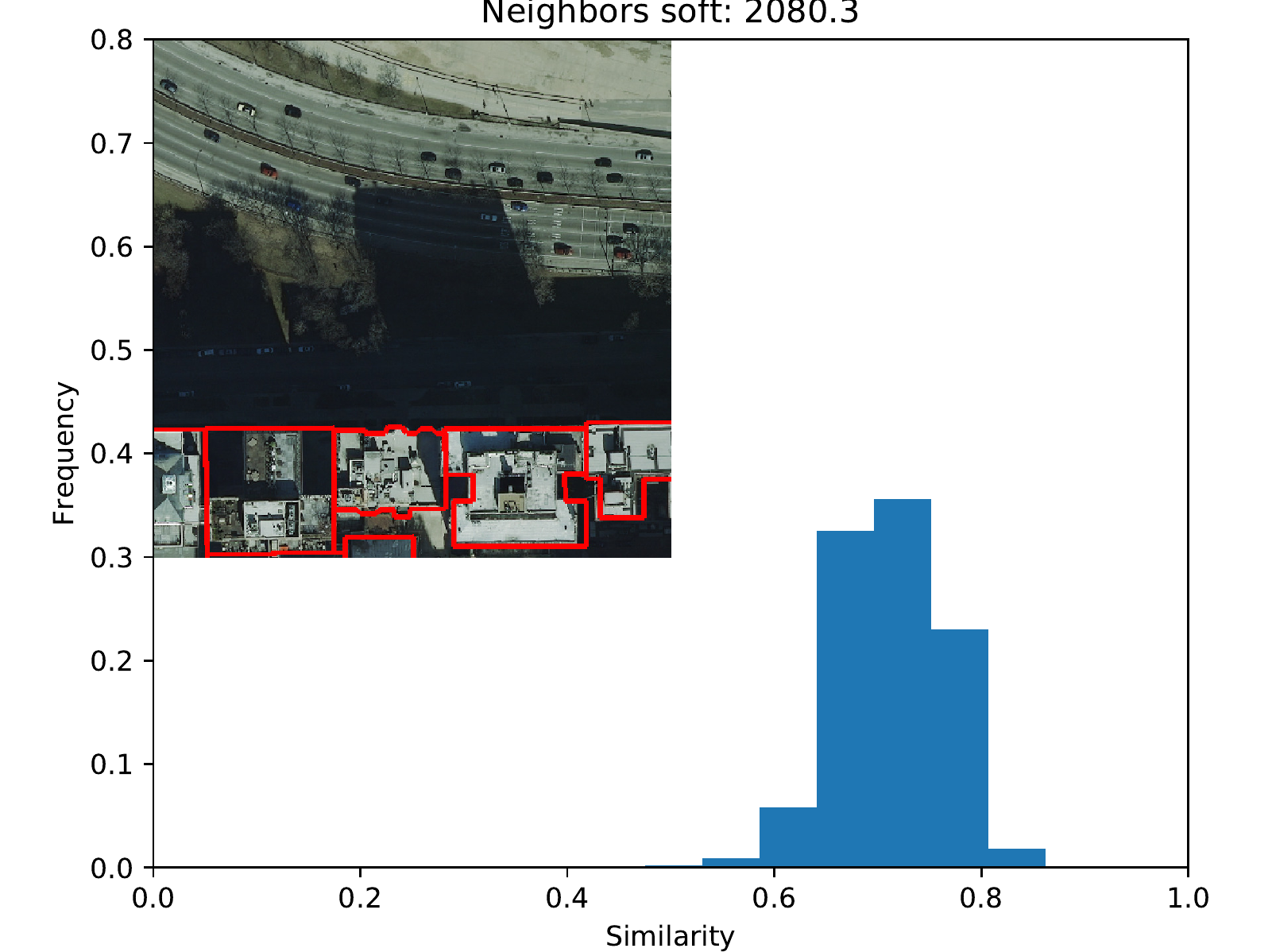}
		\includegraphics[width=0.7\linewidth]{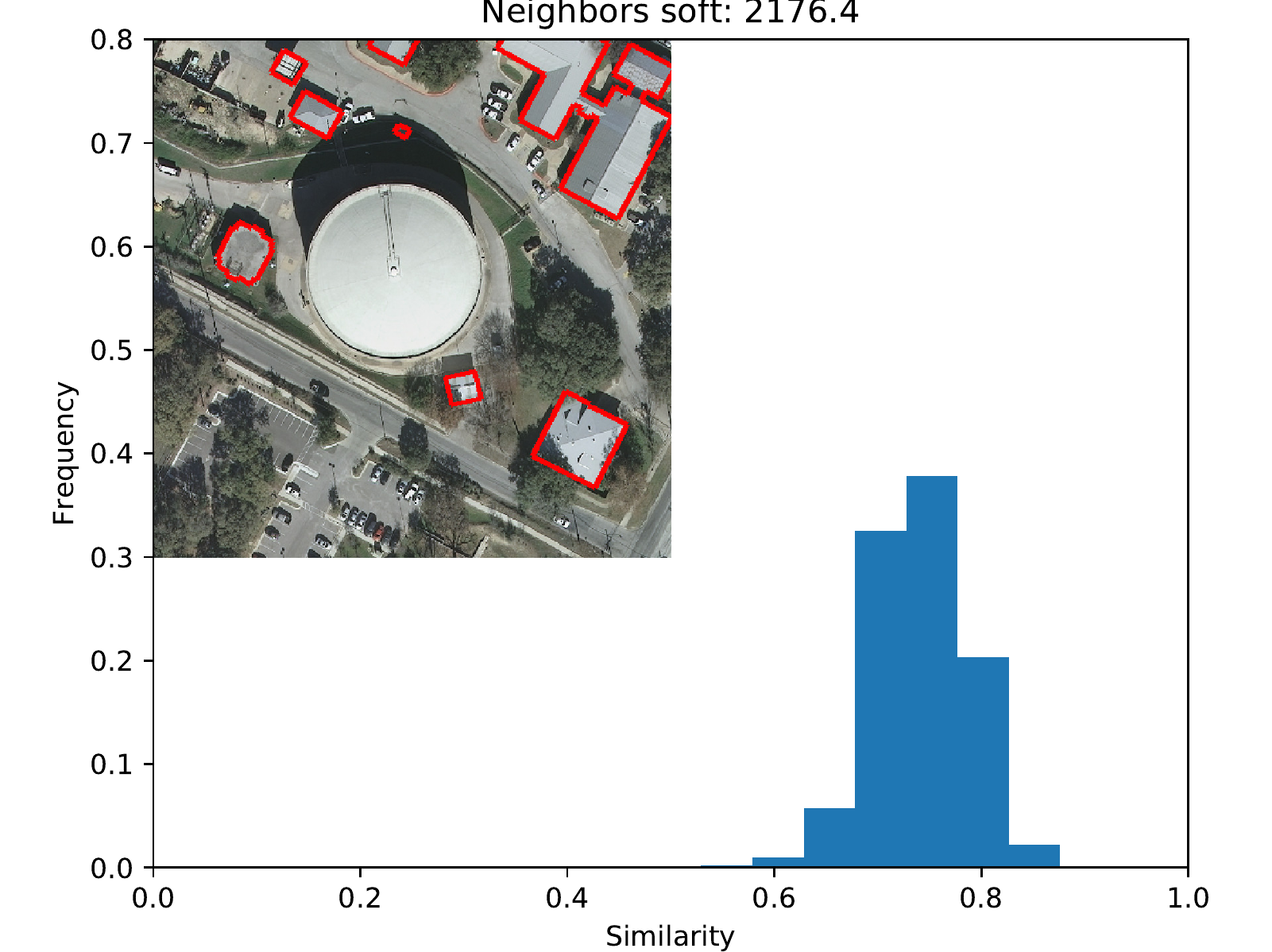}
		\includegraphics[width=0.7\linewidth]{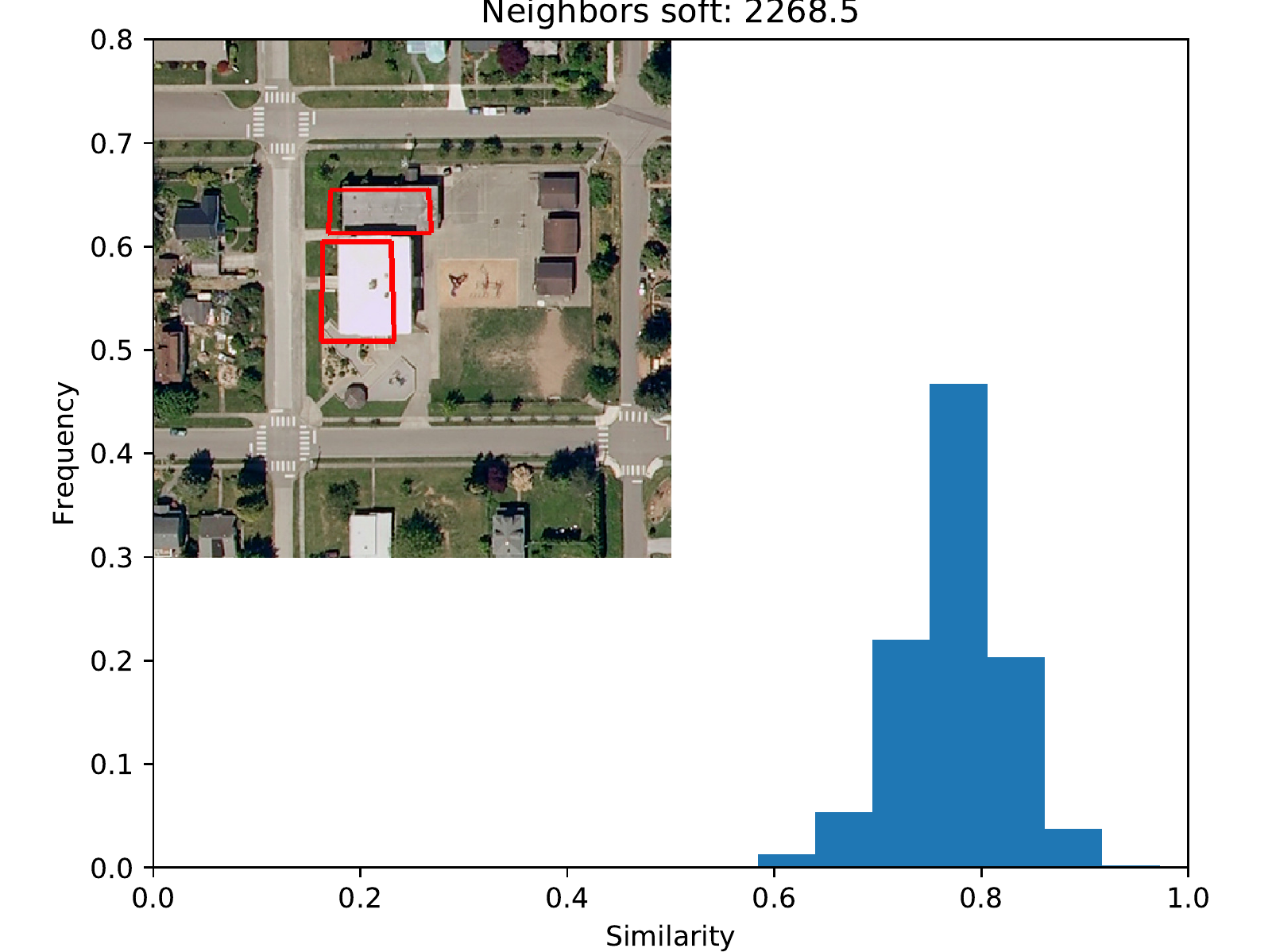}
		\includegraphics[width=0.7\linewidth]{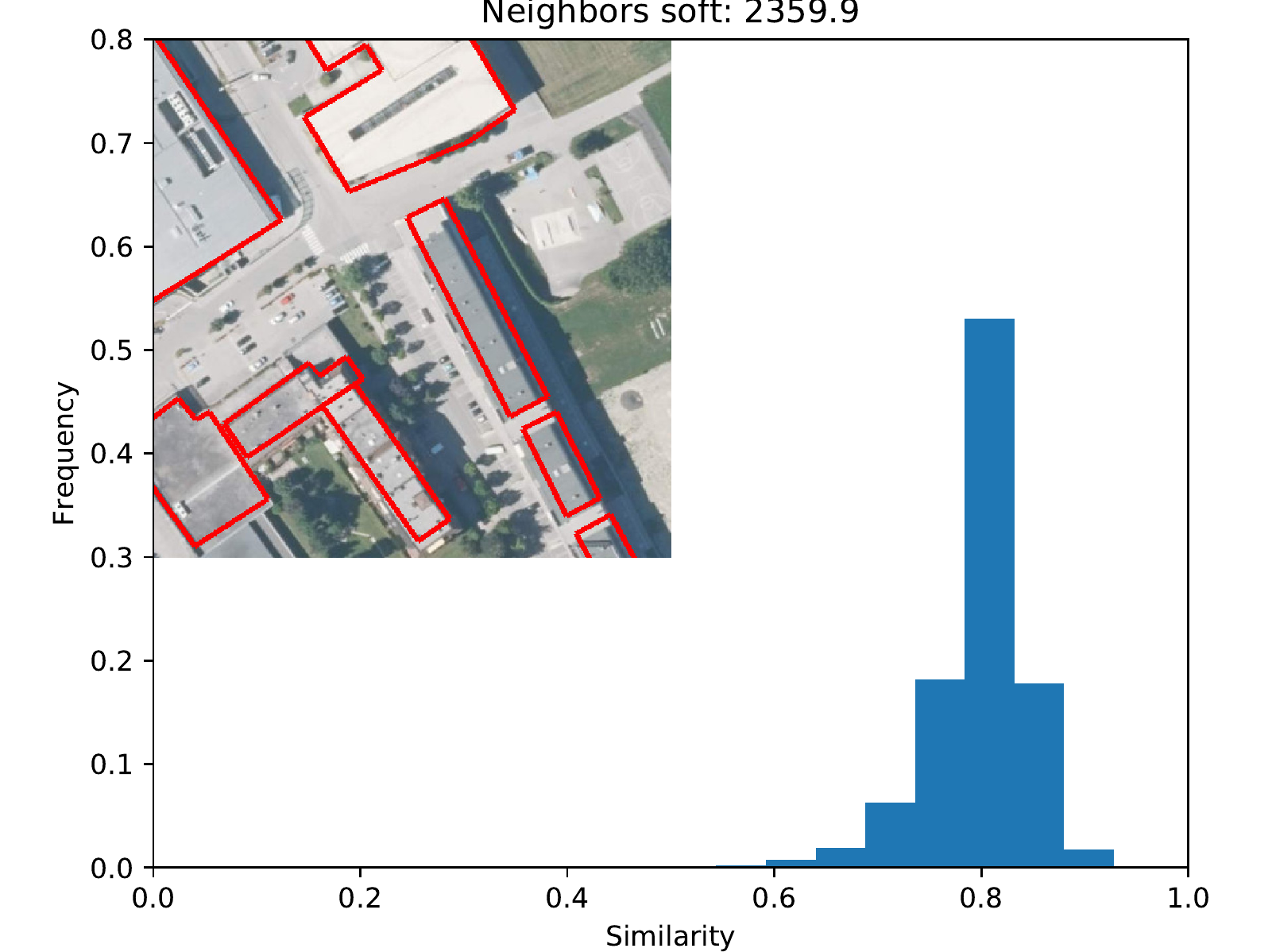}
		\includegraphics[width=0.7\linewidth]{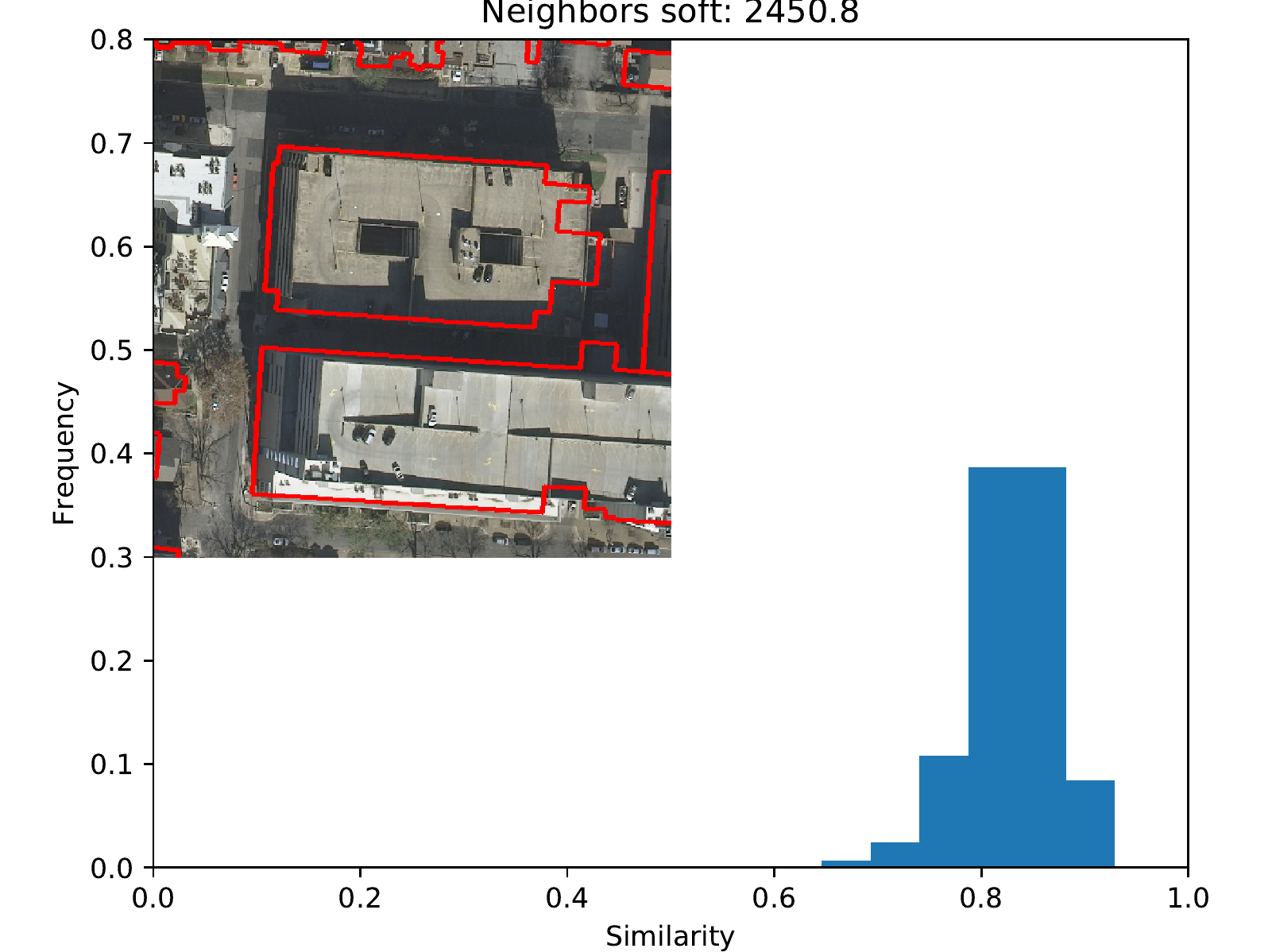}
		\includegraphics[width=0.7\linewidth]{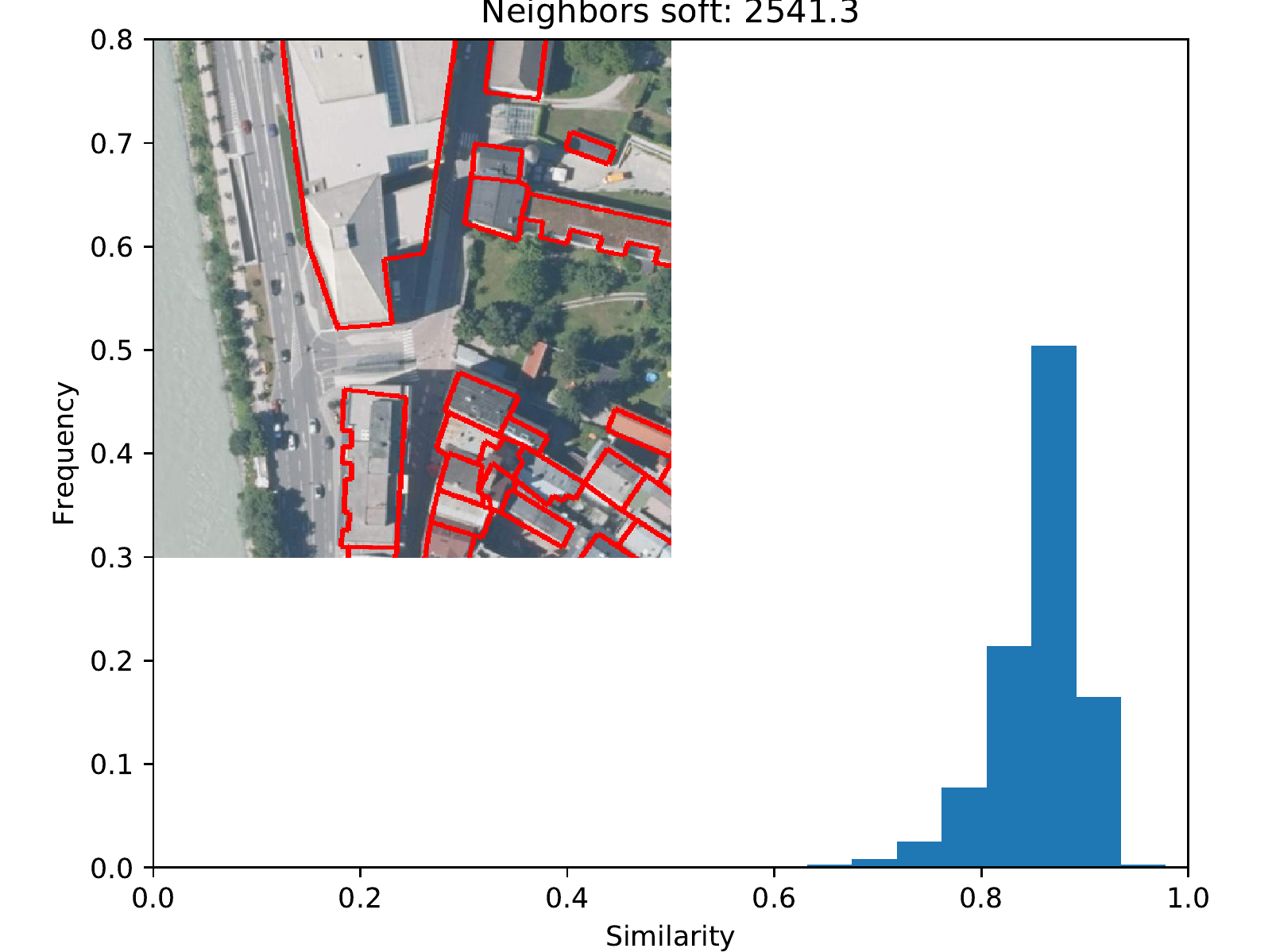}
		\includegraphics[width=0.7\linewidth]{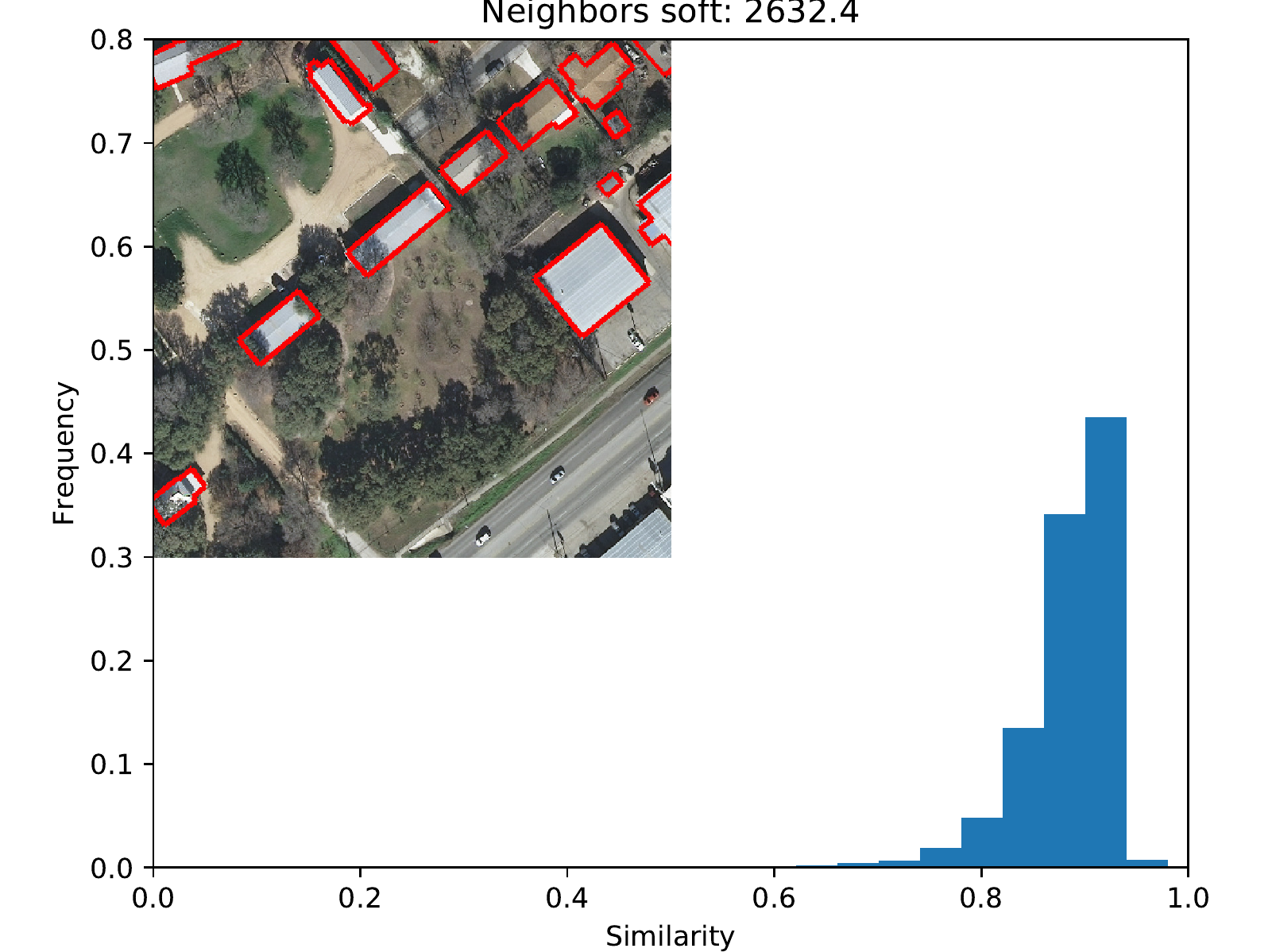}
		\includegraphics[width=0.7\linewidth]{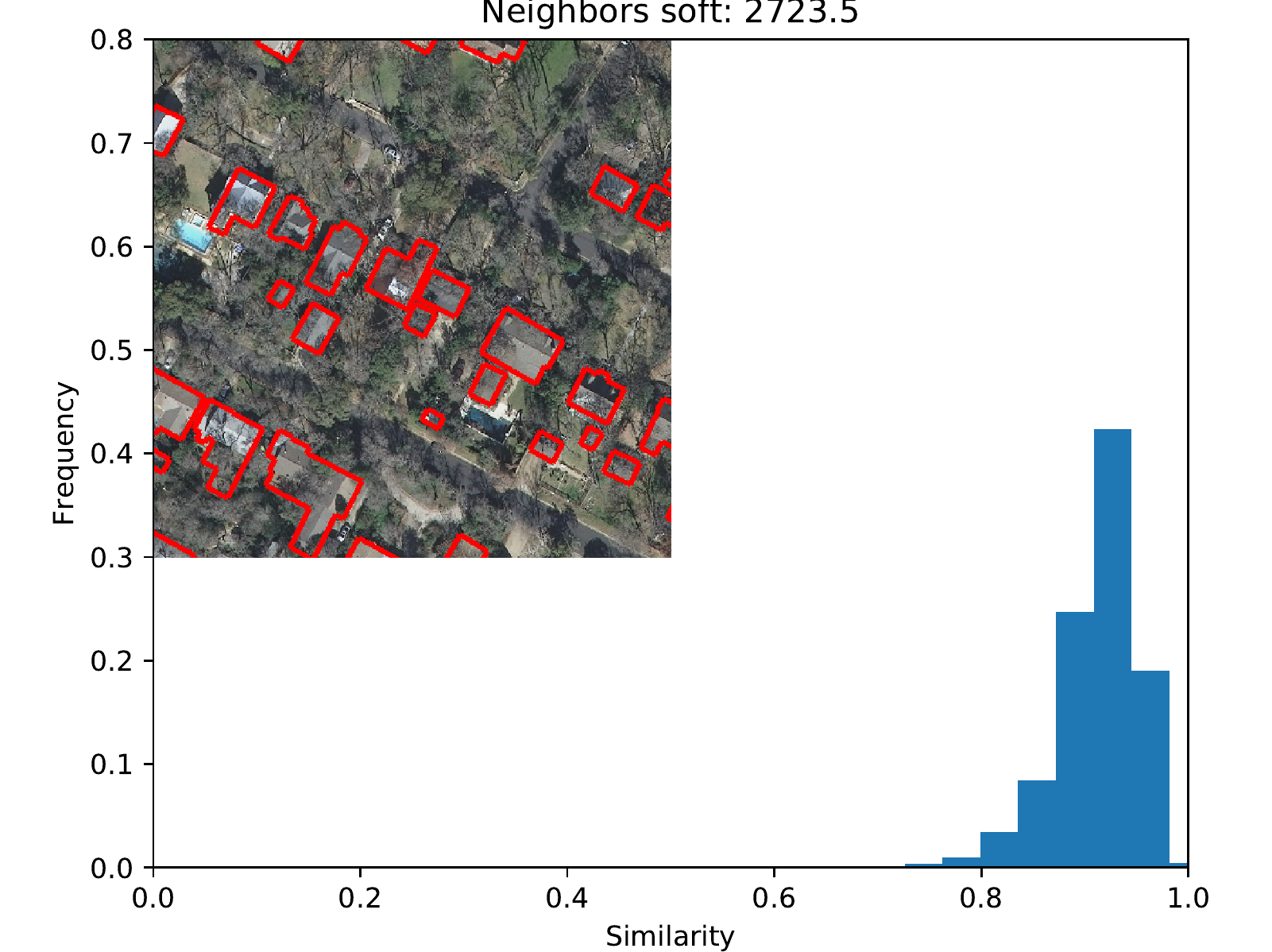}
		\includegraphics[width=0.7\linewidth]{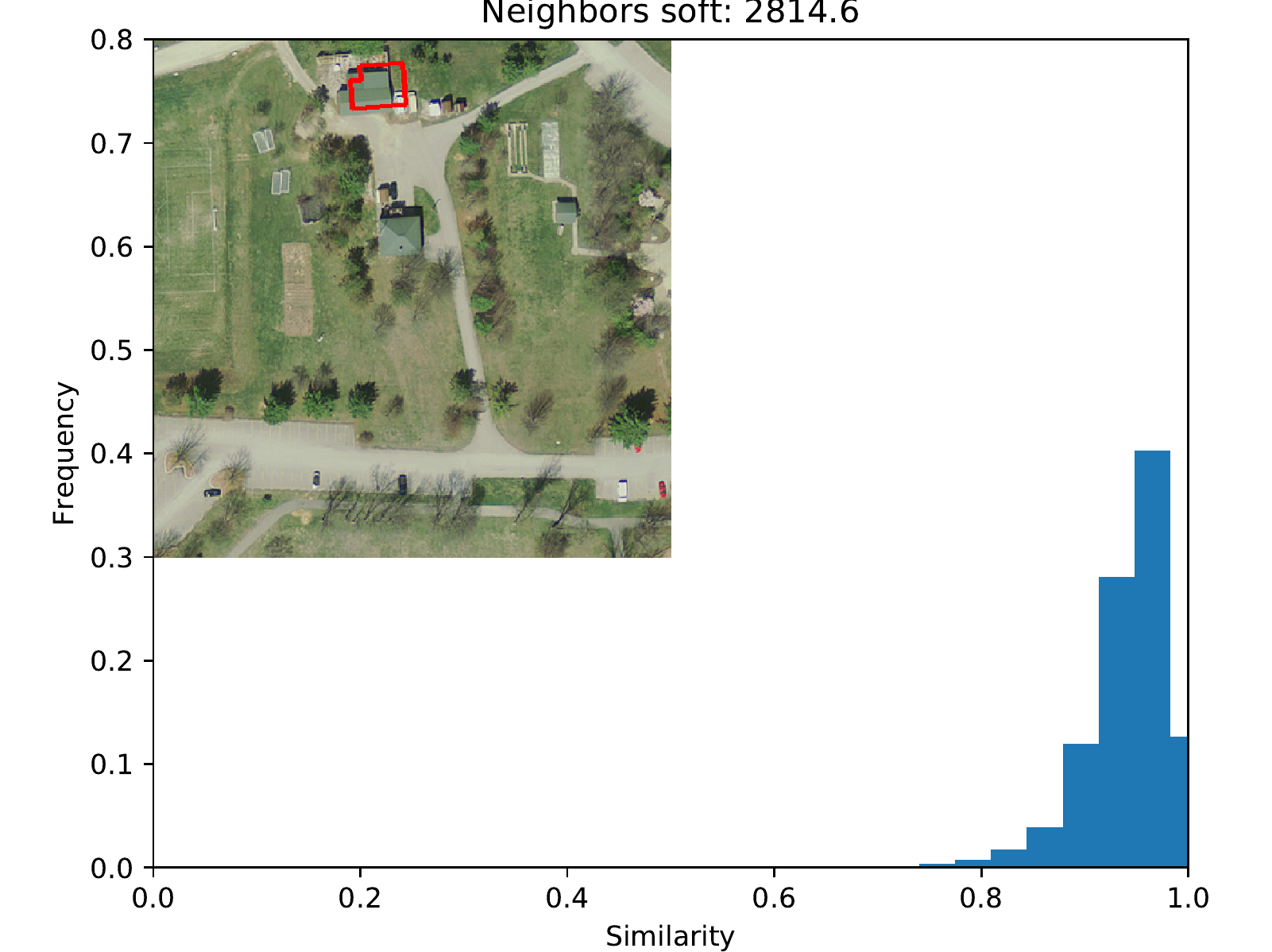}
	\end{subfigure}
	\begin{subfigure}[b]{0.3\textwidth}
		\centering
		\caption{Round 3}
		\includegraphics[width=0.7\linewidth]{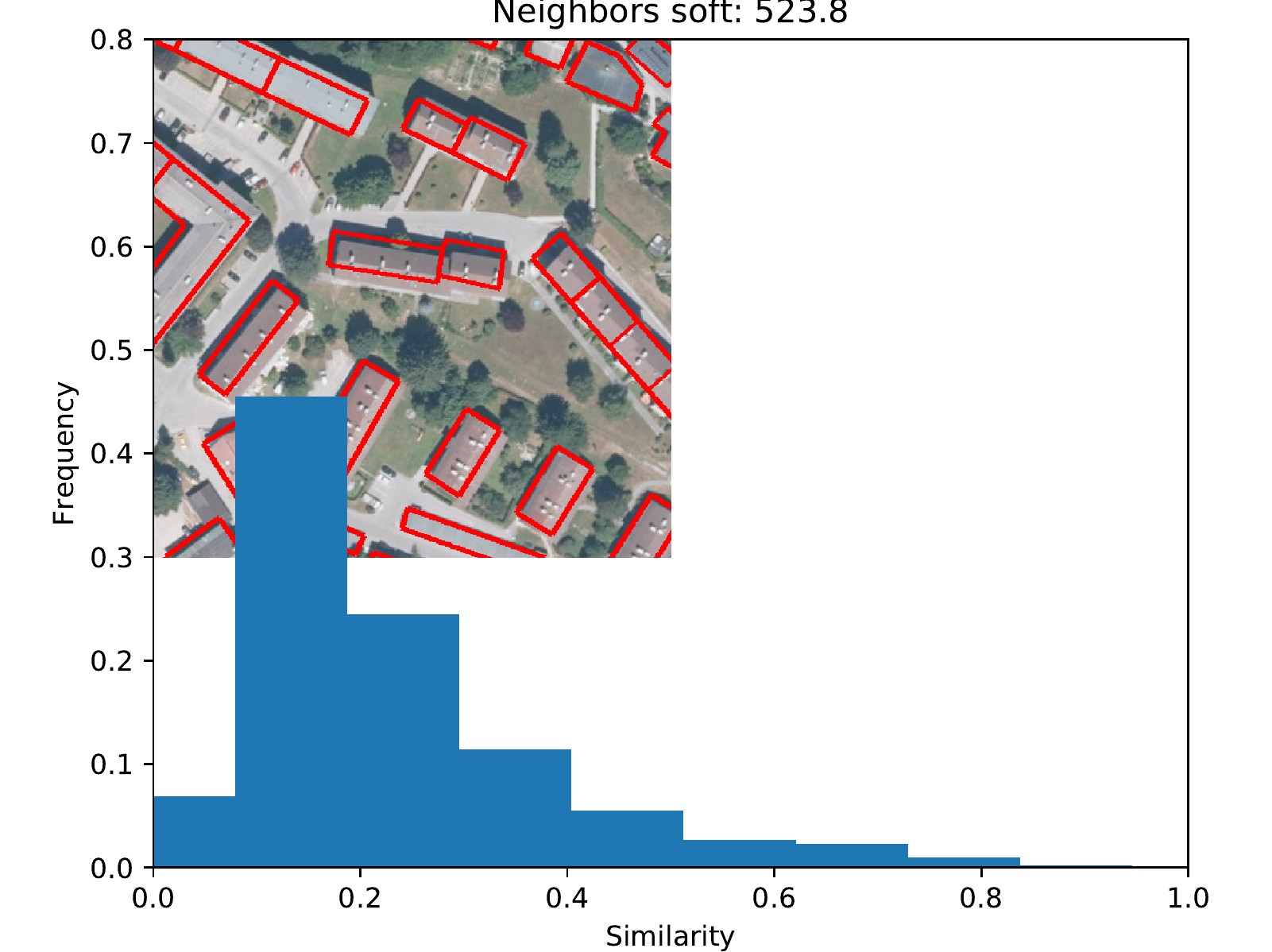}
		\includegraphics[width=0.7\linewidth]{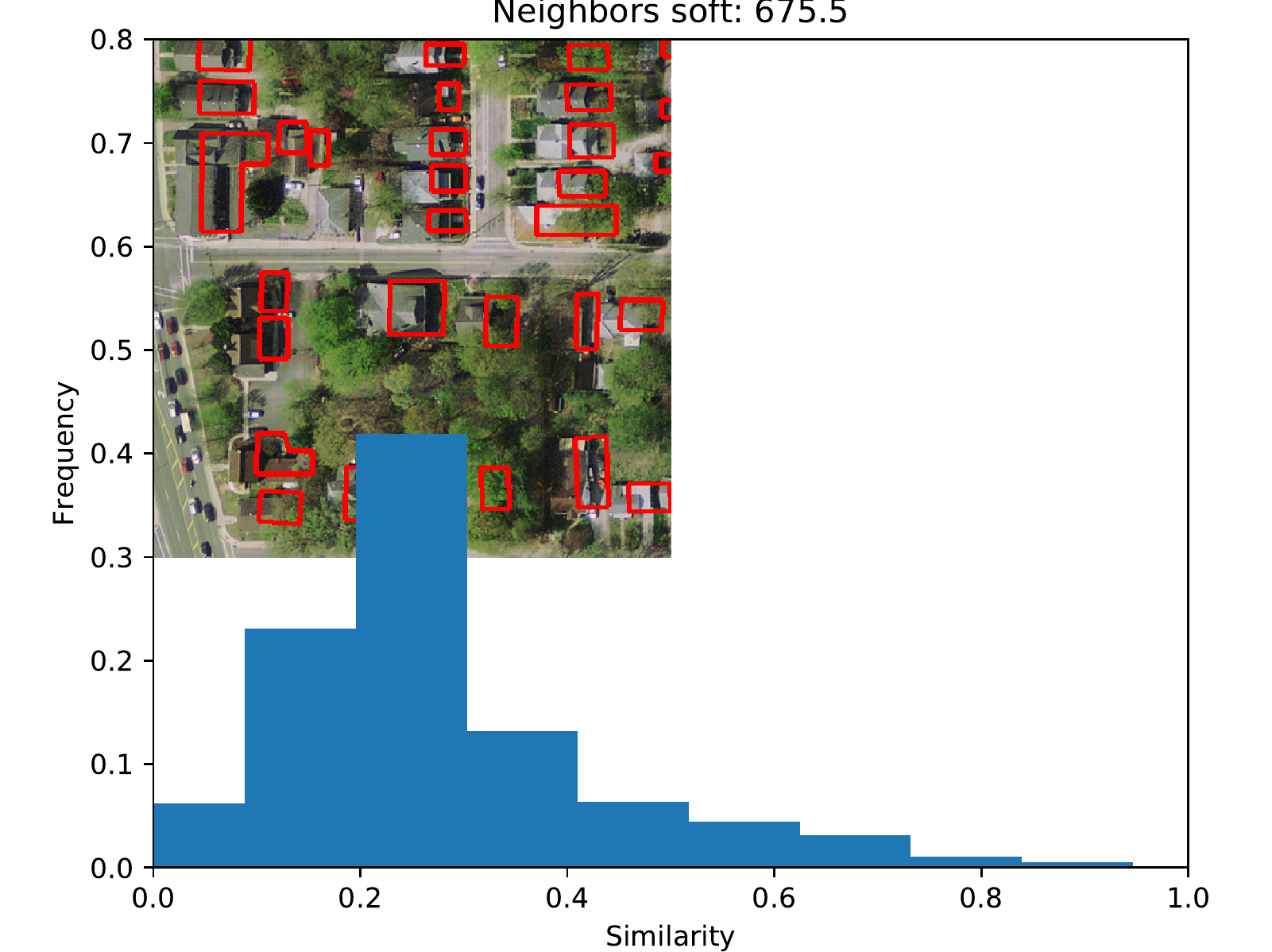}
		\includegraphics[width=0.7\linewidth]{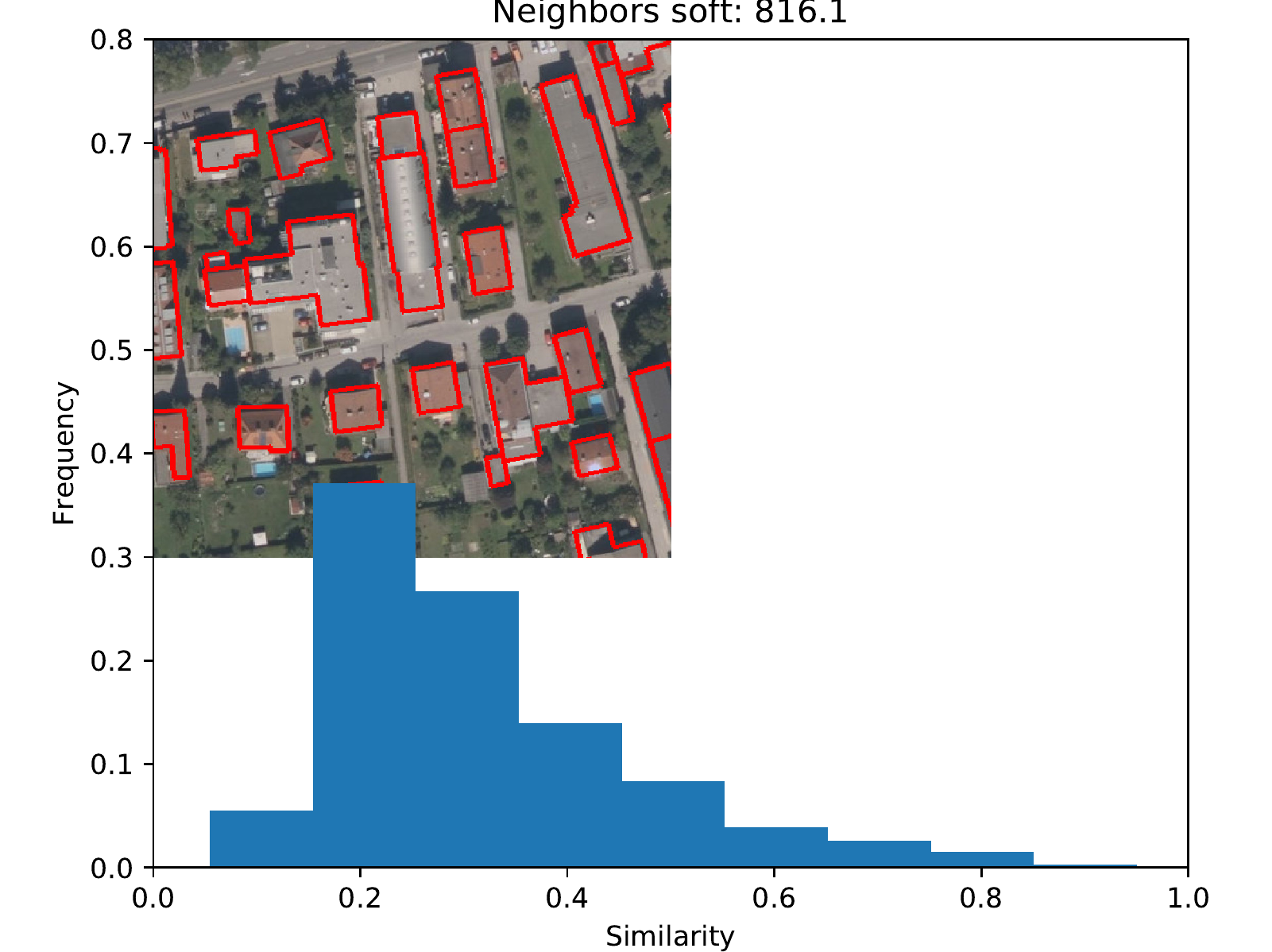}
		\includegraphics[width=0.7\linewidth]{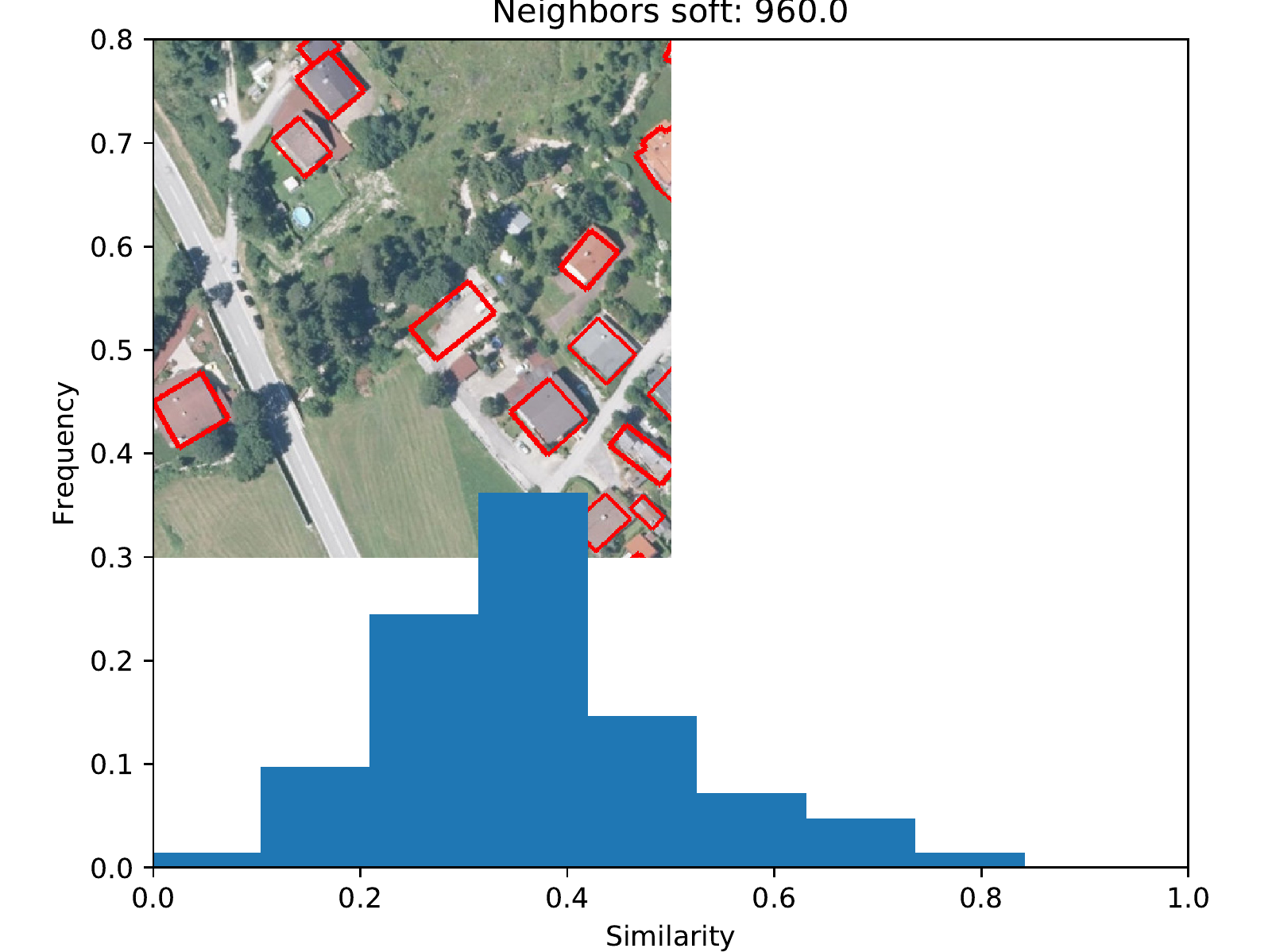}
		\includegraphics[width=0.7\linewidth]{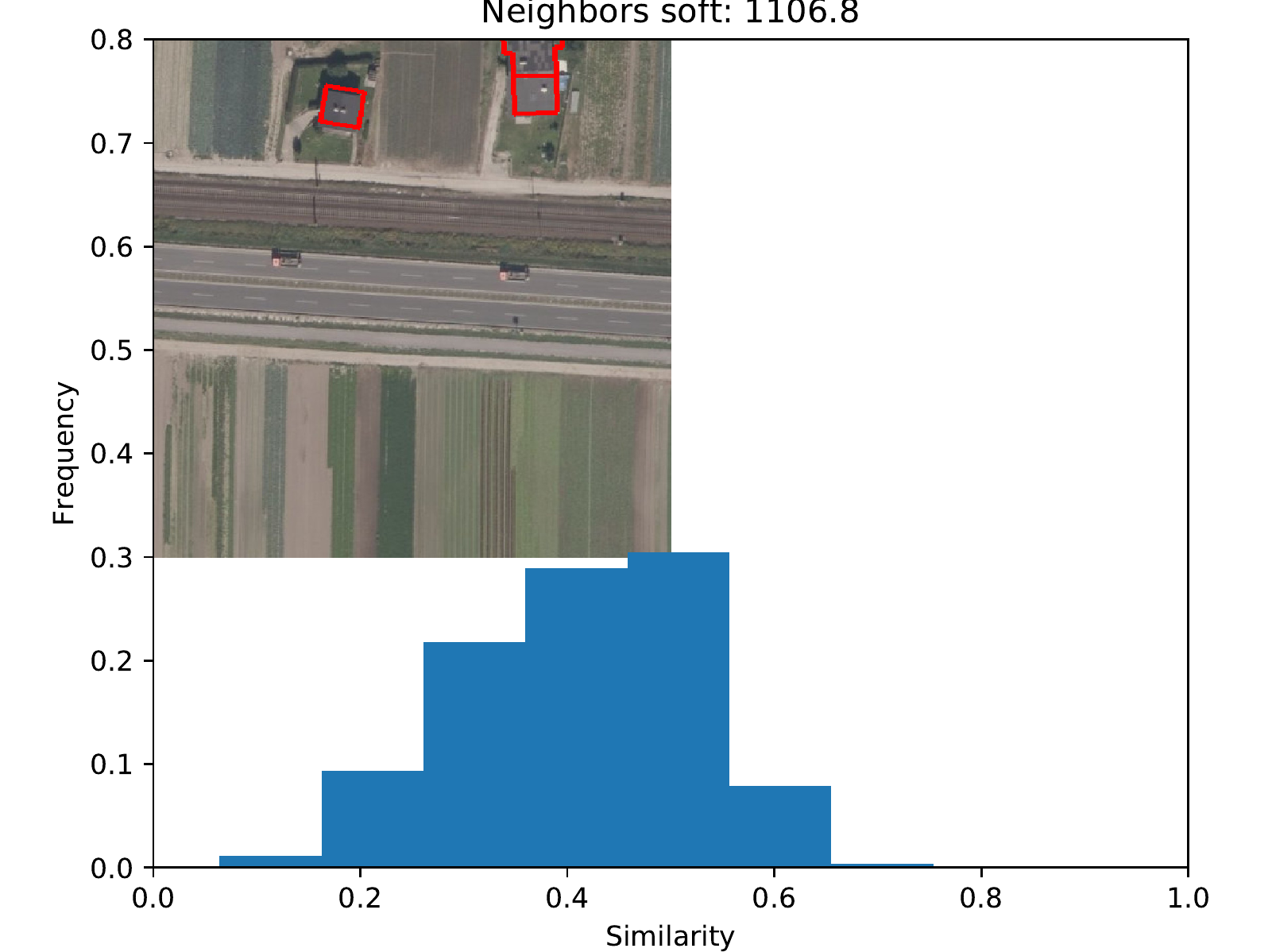}
		\includegraphics[width=0.7\linewidth]{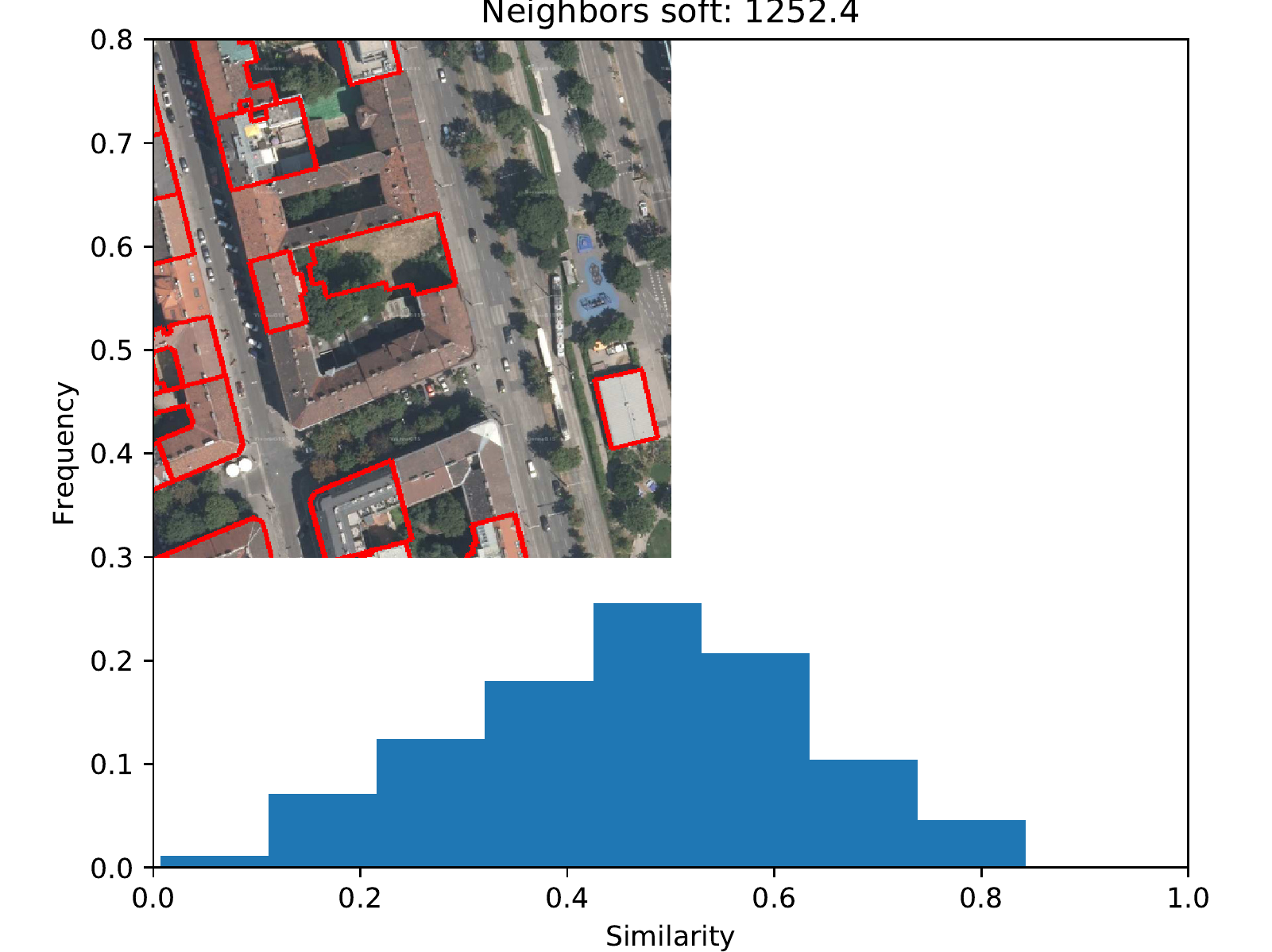}
		\includegraphics[width=0.7\linewidth]{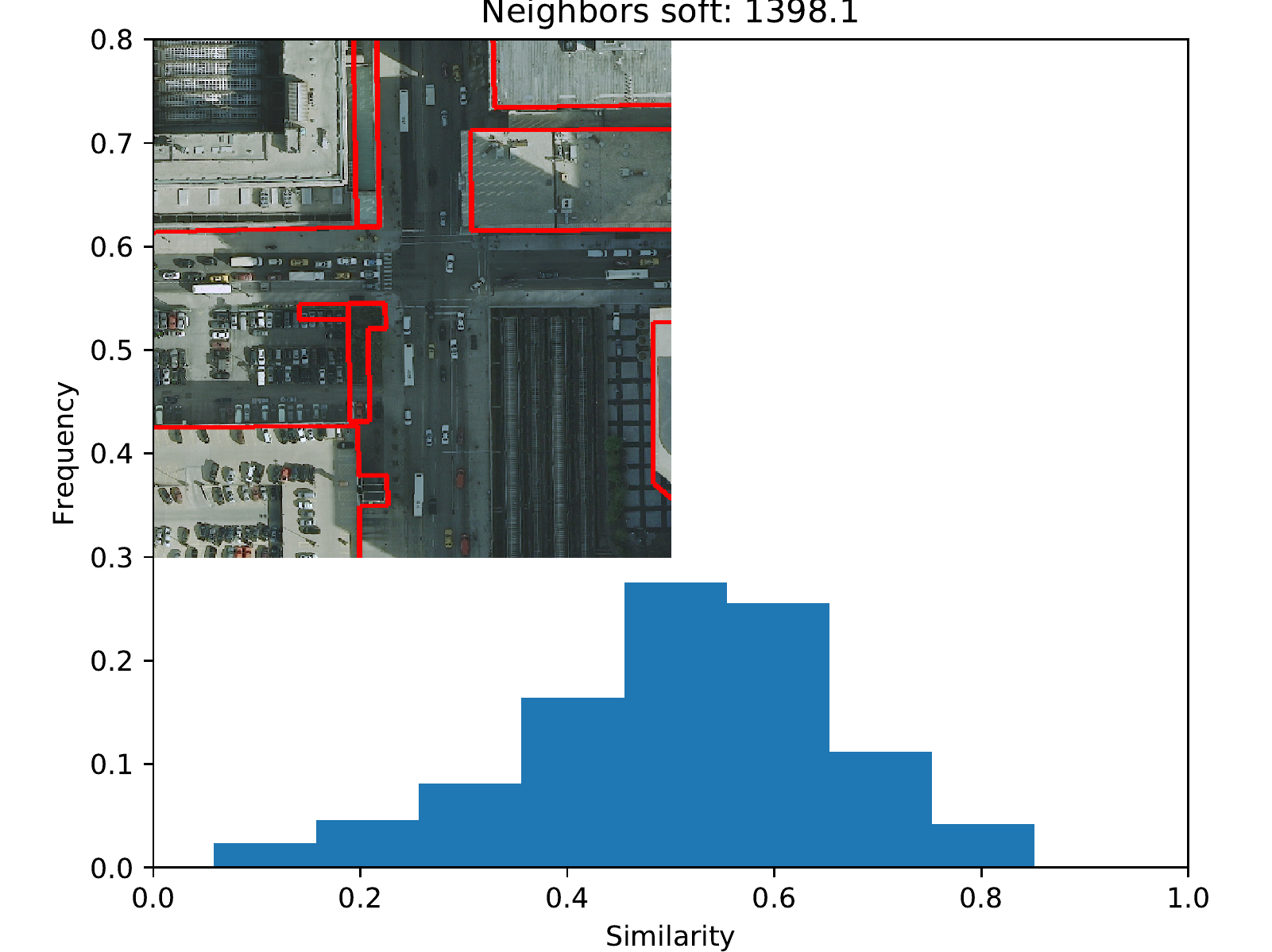}
		\includegraphics[width=0.7\linewidth]{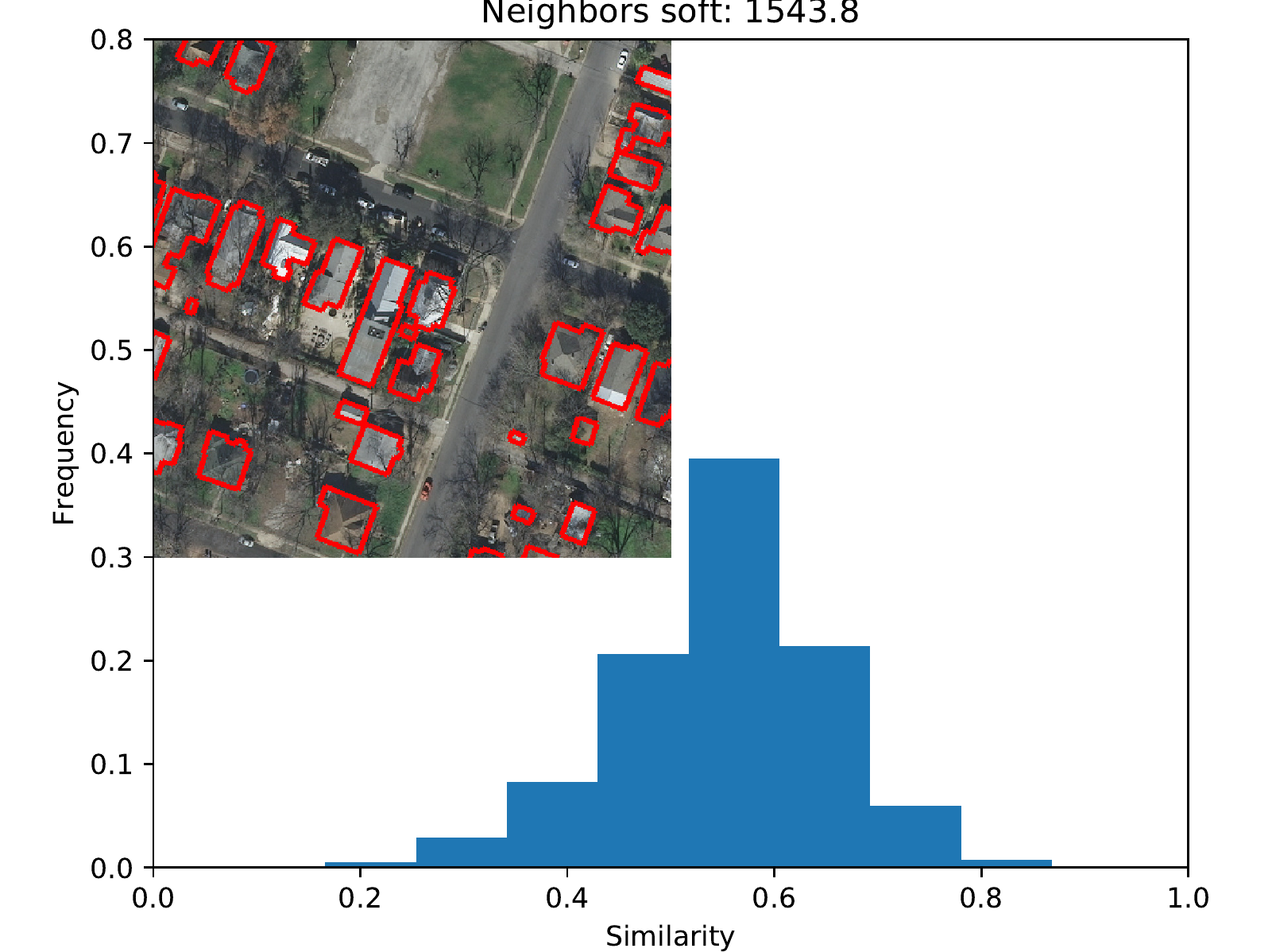}
		\includegraphics[width=0.7\linewidth]{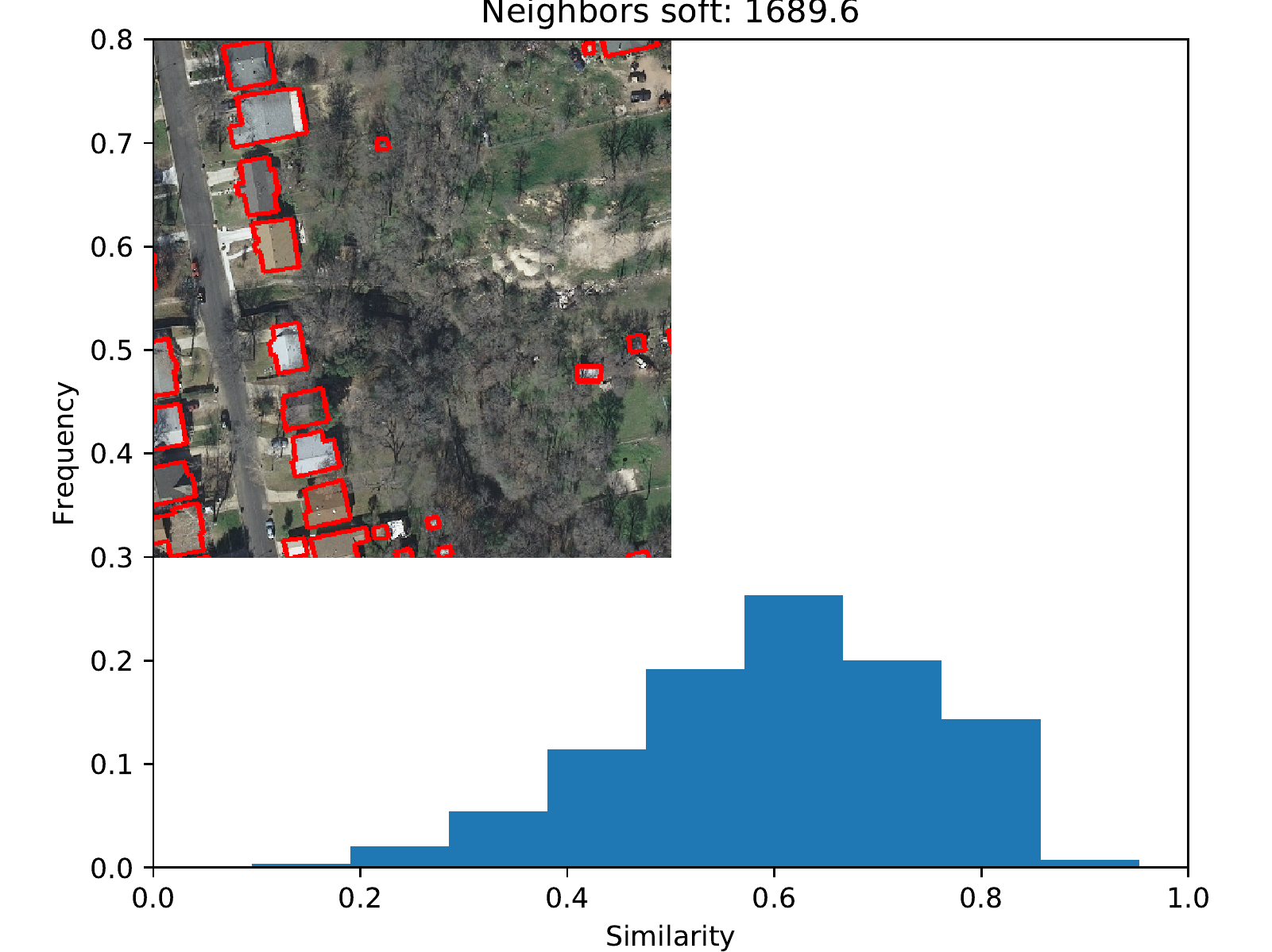}
		\includegraphics[width=0.7\linewidth]{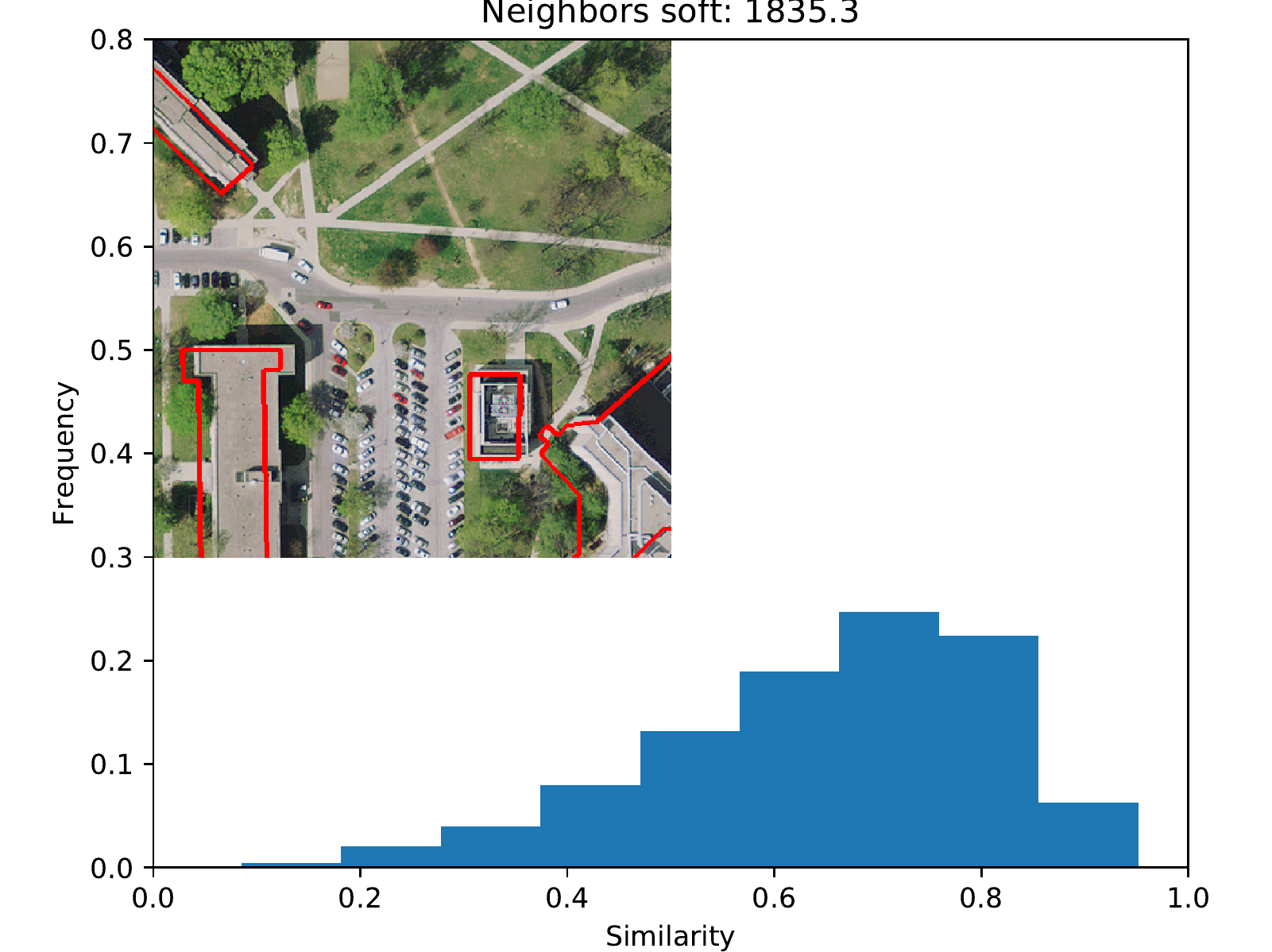}
	\end{subfigure}
	\caption{Histograms of similarities shown for the same 10 patches as Fig.\ref{fig:overall_hist} and Fig.\ref{fig:round_0_overall_hist_k_nearest}, \ref{fig:round_1_overall_hist_k_nearest}, \ref{fig:round_2_overall_hist_k_nearest}.}
	\label{fig:overall_hist_individual_hist}
\end{figure}

\begin{figure}
	\centering
	\includegraphics[width=\linewidth]{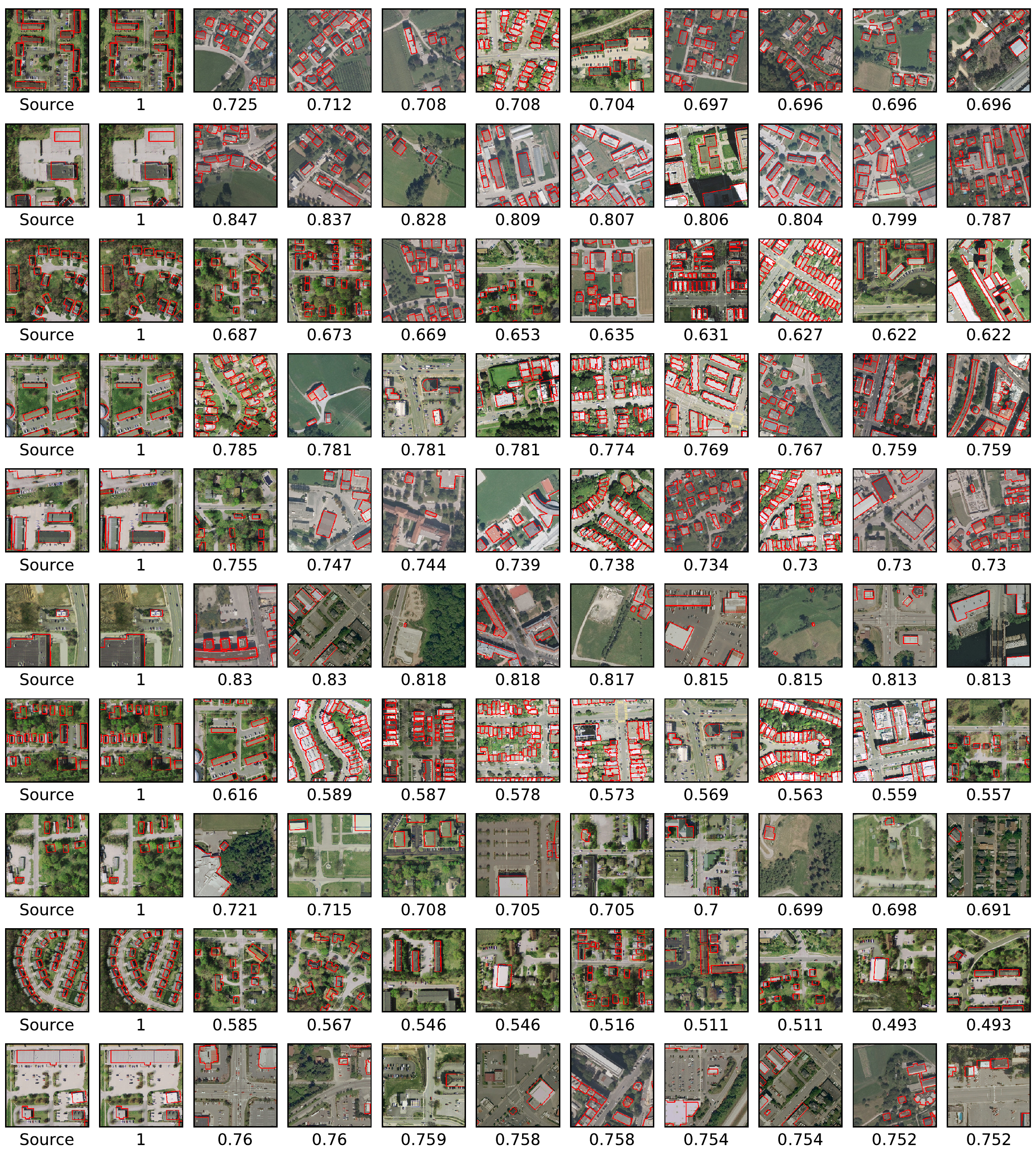}
	\caption{\textbf{Round 1}: k-nearest neighbors with k=10. The 10 patches are from from the bloomington22 image. Same patch selection across rounds.}
	\label{fig:round_0_bloomington22_k_nearest}
\end{figure}
\begin{figure}
	\centering
	\includegraphics[width=\linewidth]{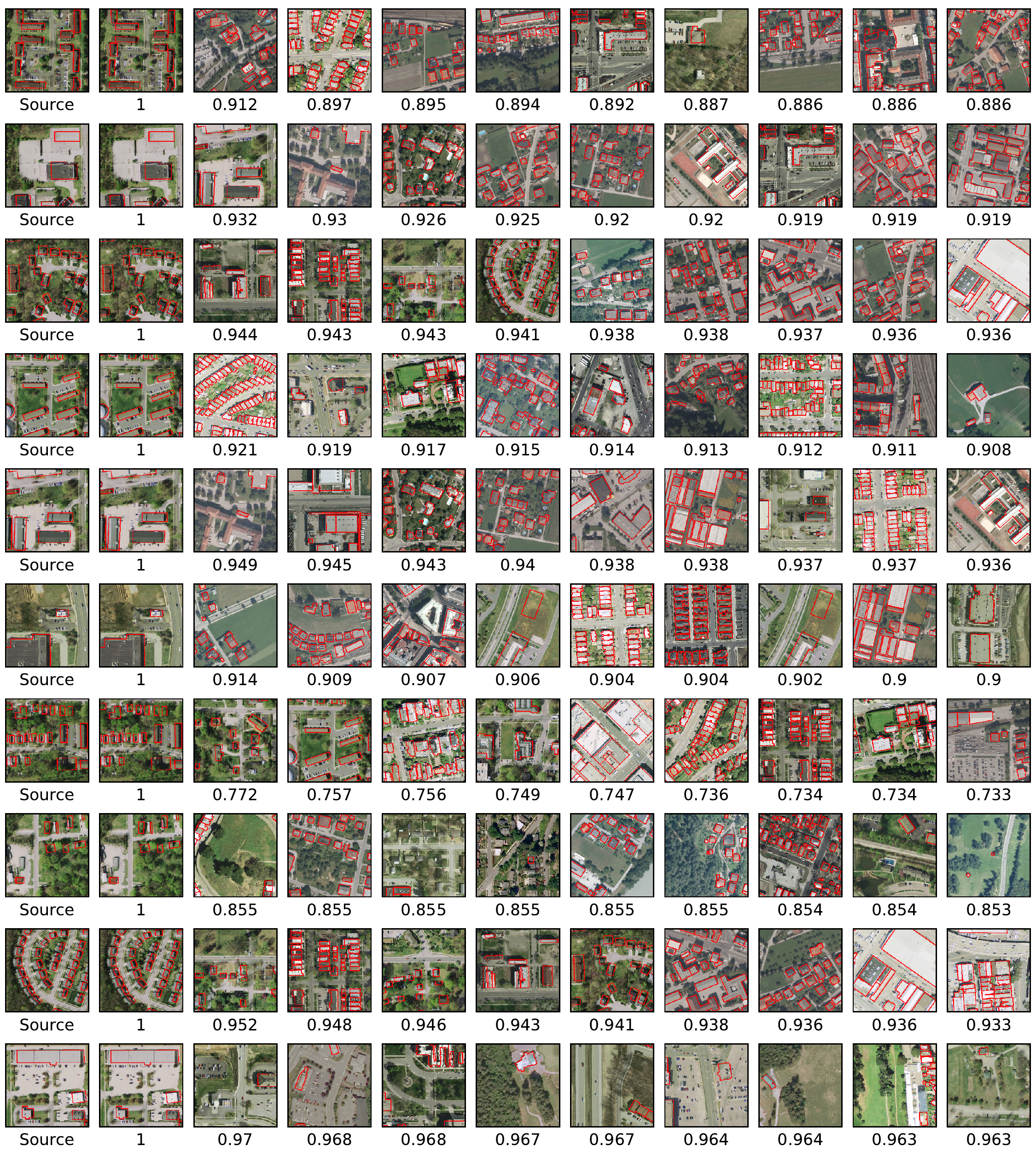}
	\caption{\textbf{Round 2}:k-nearest neighbors with k=10. The 10 patches are from from the bloomington22 image. Same patch selection across rounds.}
	\label{fig:round_1_bloomington22_k_nearest}
\end{figure}
\begin{figure}
	\centering
	\includegraphics[width=\linewidth]{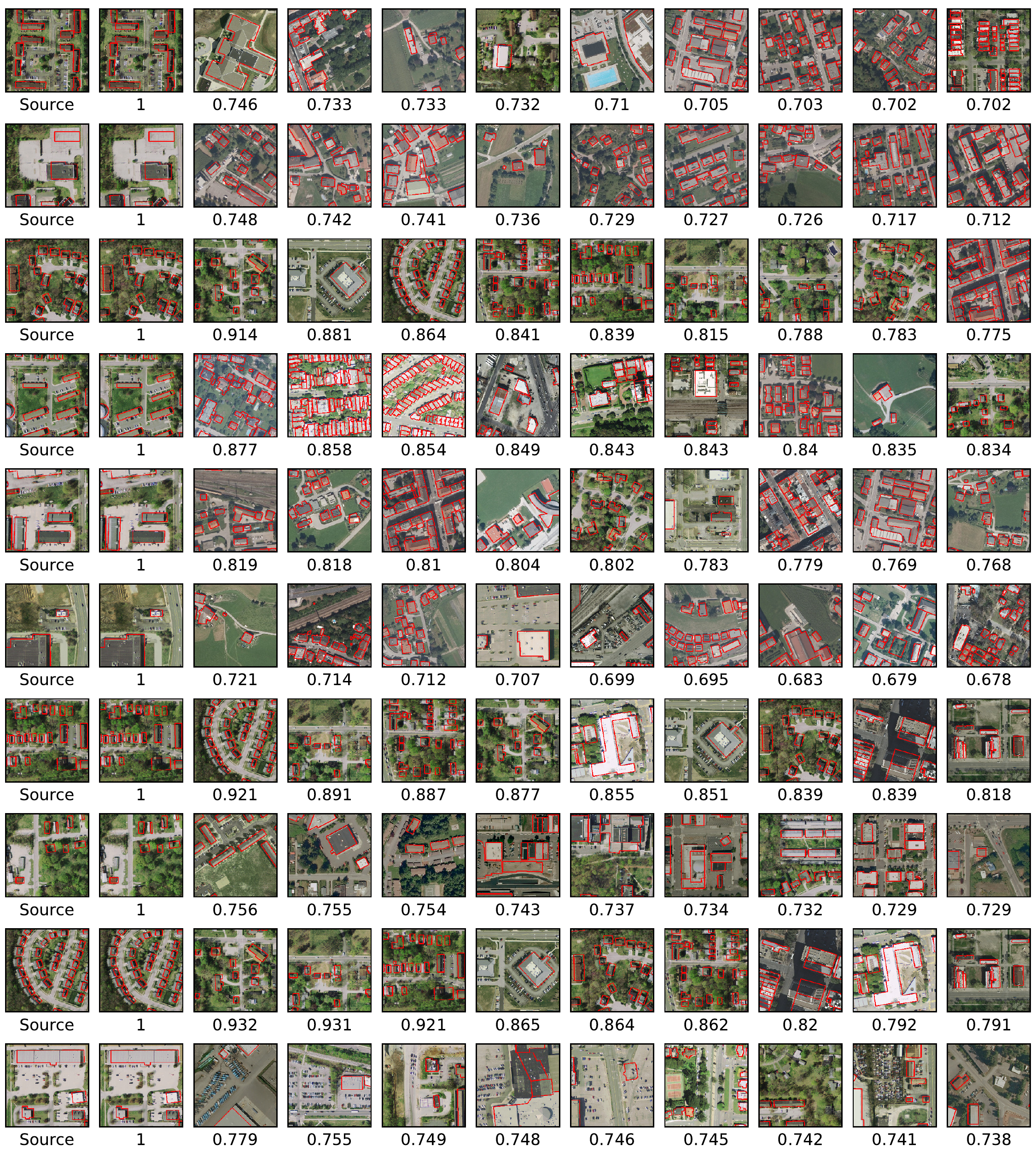}
	\caption{\textbf{Round 3}: k-nearest neighbors with k=10. The 10 patches are from from the bloomington22 image. Same patch selection across rounds.}
	\label{fig:round_2_bloomington22_k_nearest}
\end{figure}

\begin{figure}
	\centering
	\begin{subfigure}[b]{0.3\textwidth}
		\centering
		\caption{Round 1}
		\includegraphics[width=0.7\linewidth]{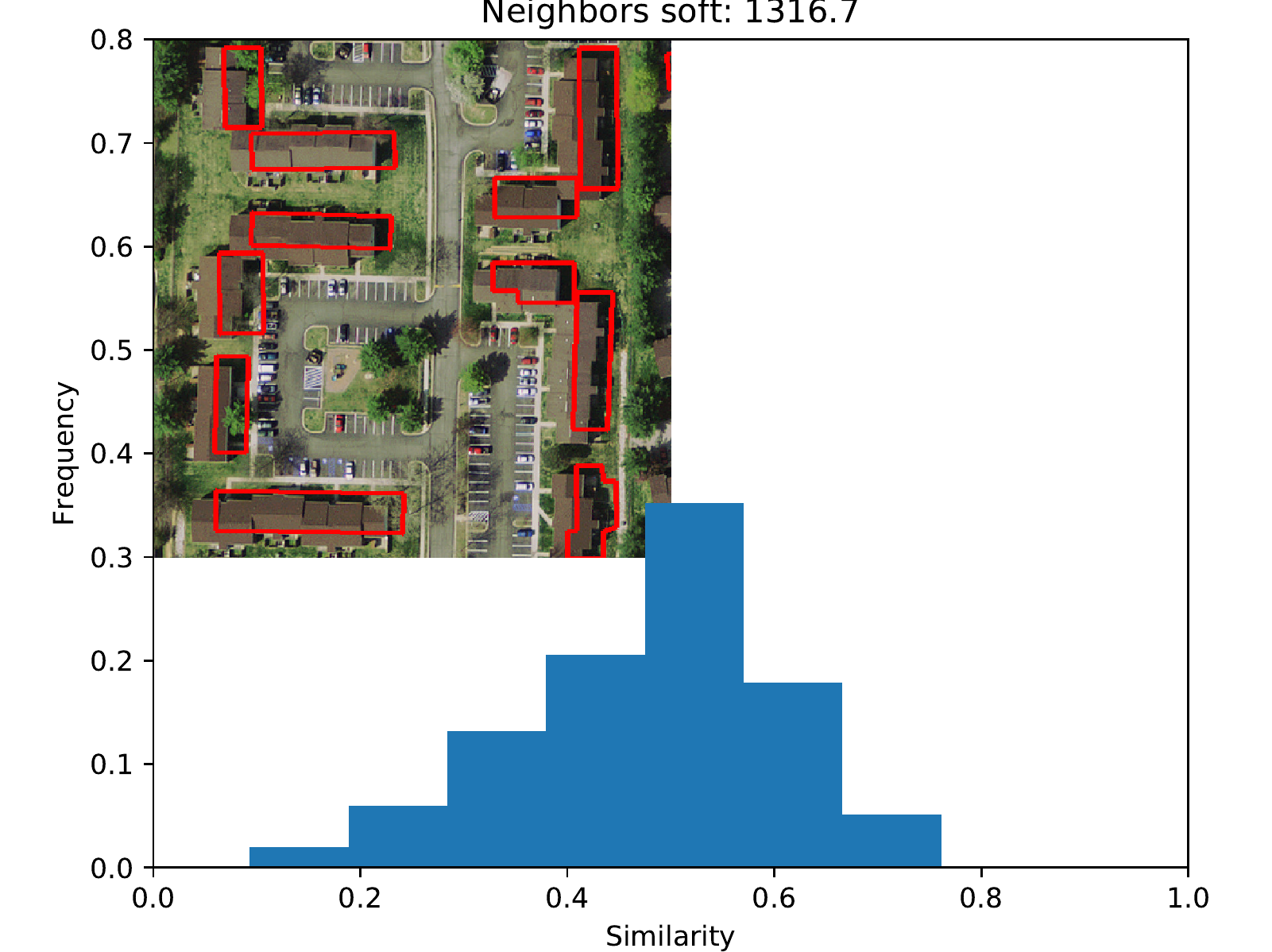}
		\includegraphics[width=0.7\linewidth]{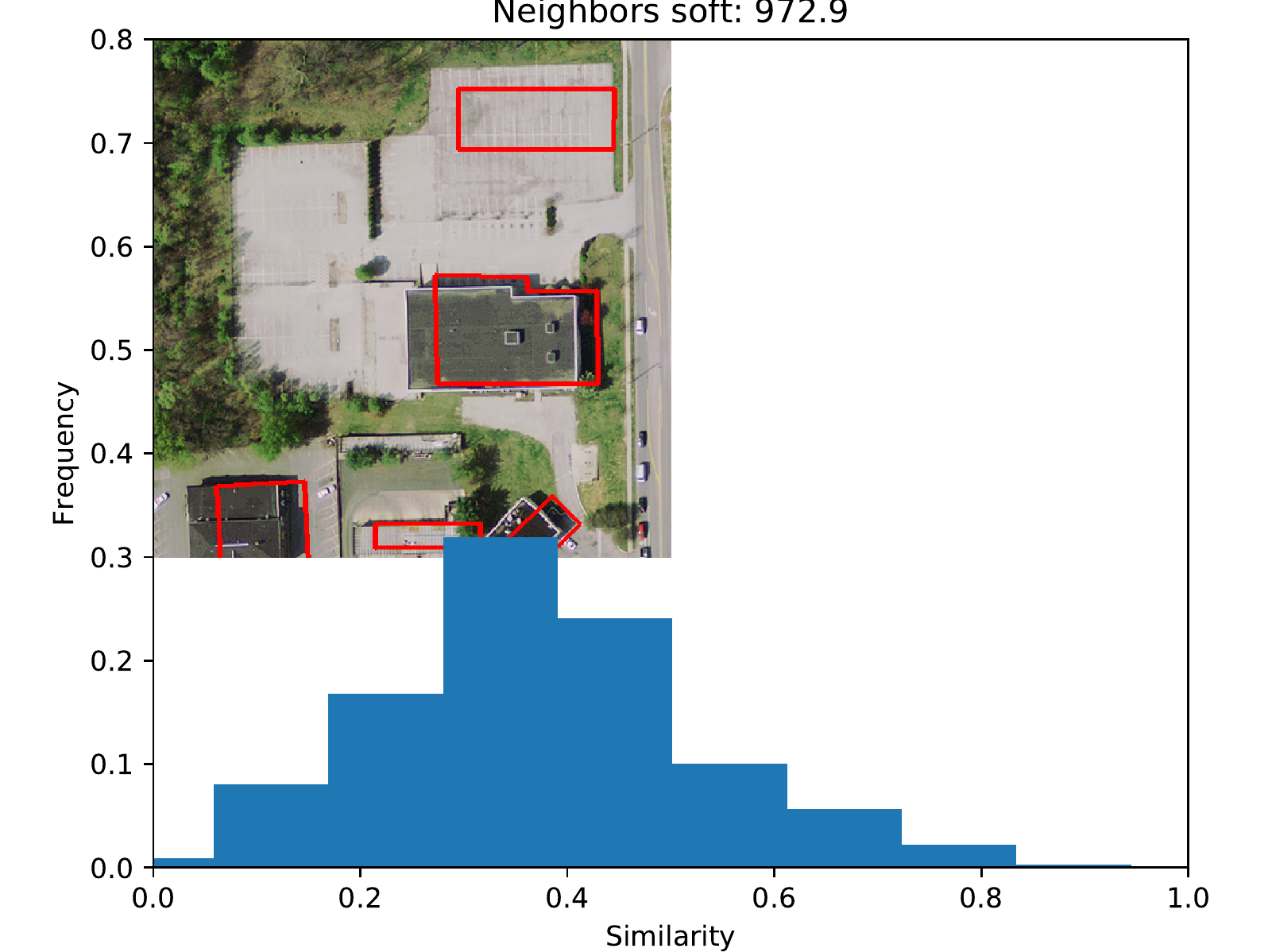}
		\includegraphics[width=0.7\linewidth]{netsimilarity_ds_fac_4_round_0_bloomington22_individual_hist_02-eps-converted-to.pdf}
		\includegraphics[width=0.7\linewidth]{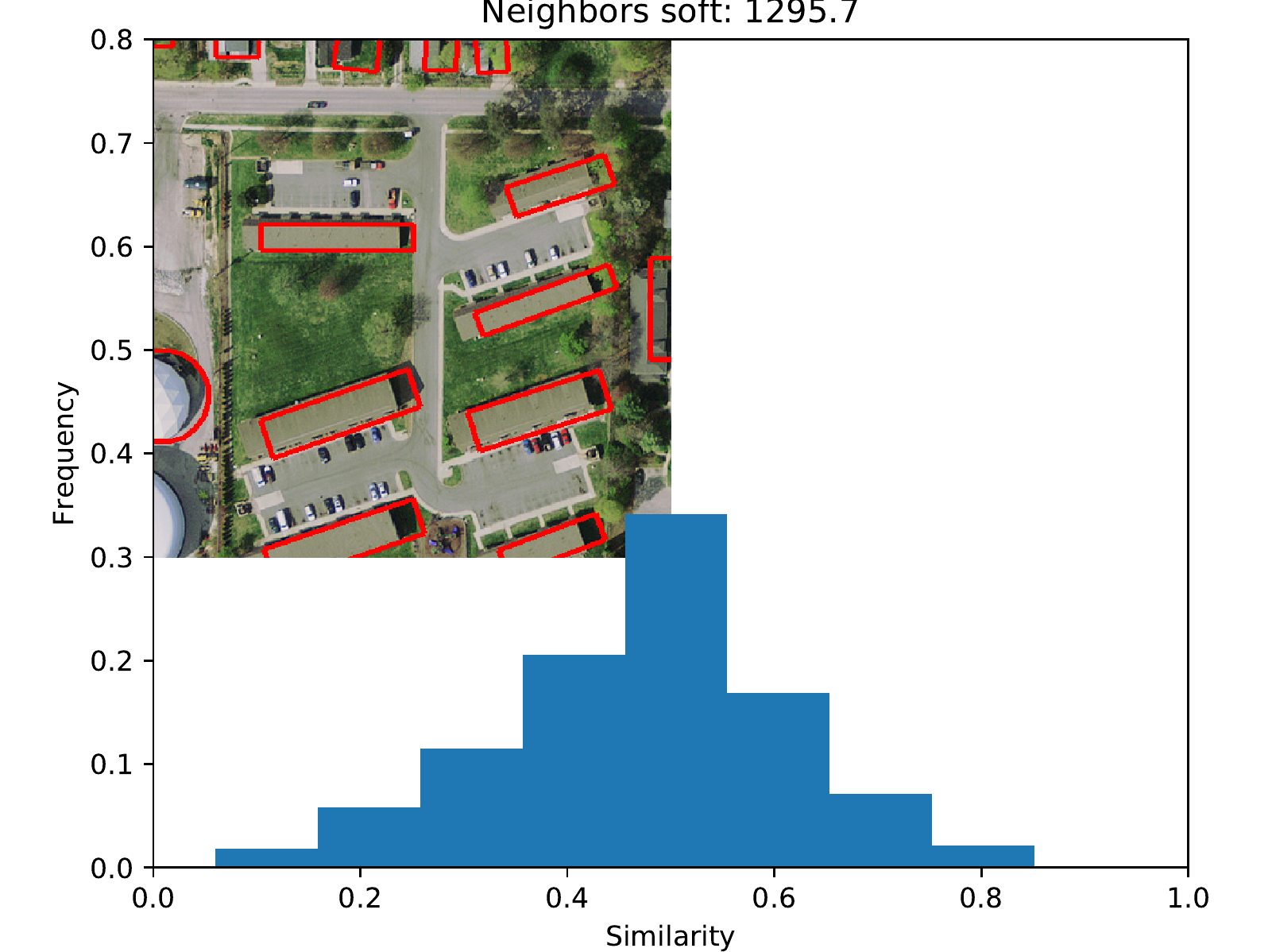}
		\includegraphics[width=0.7\linewidth]{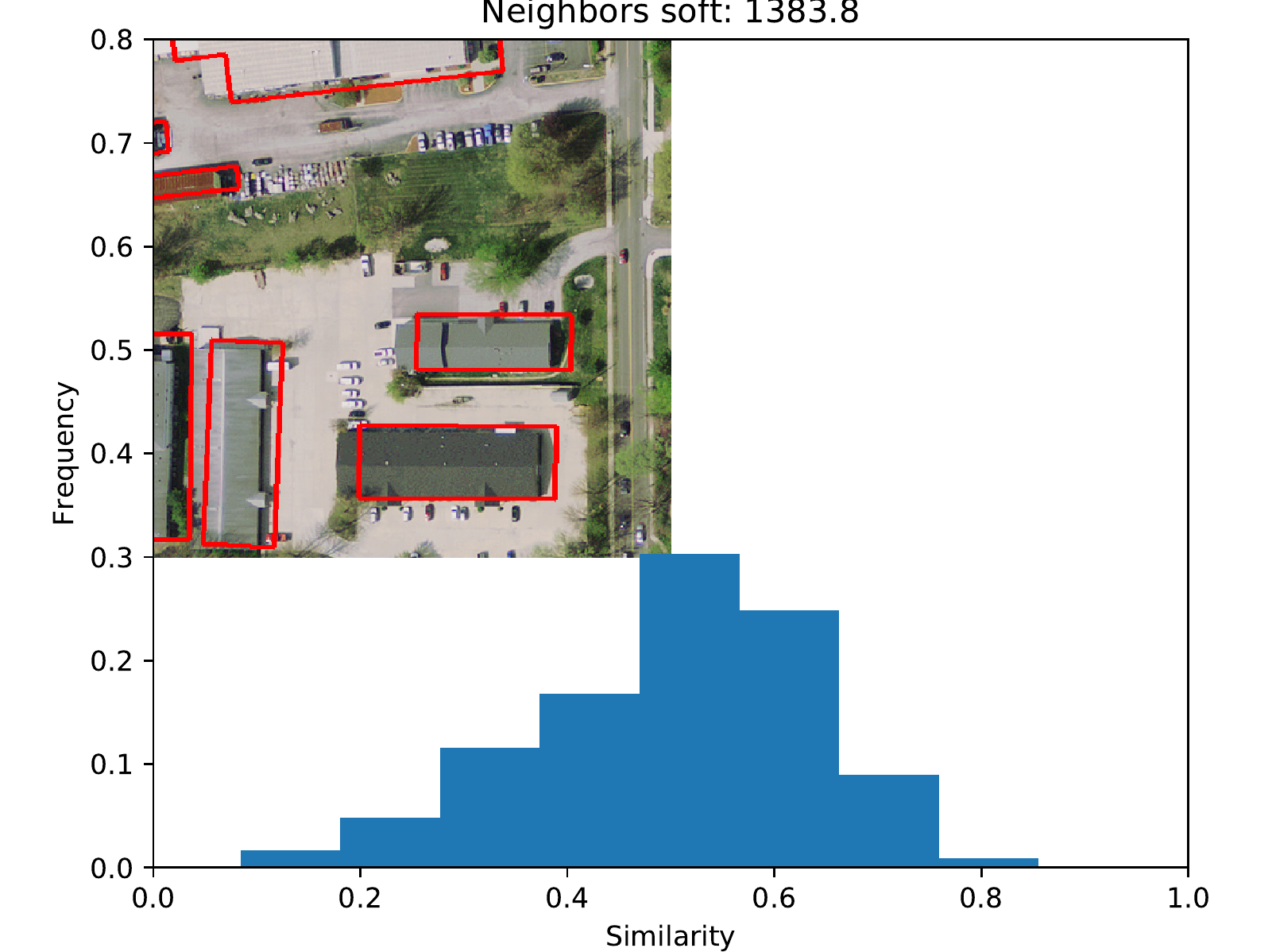}
		\includegraphics[width=0.7\linewidth]{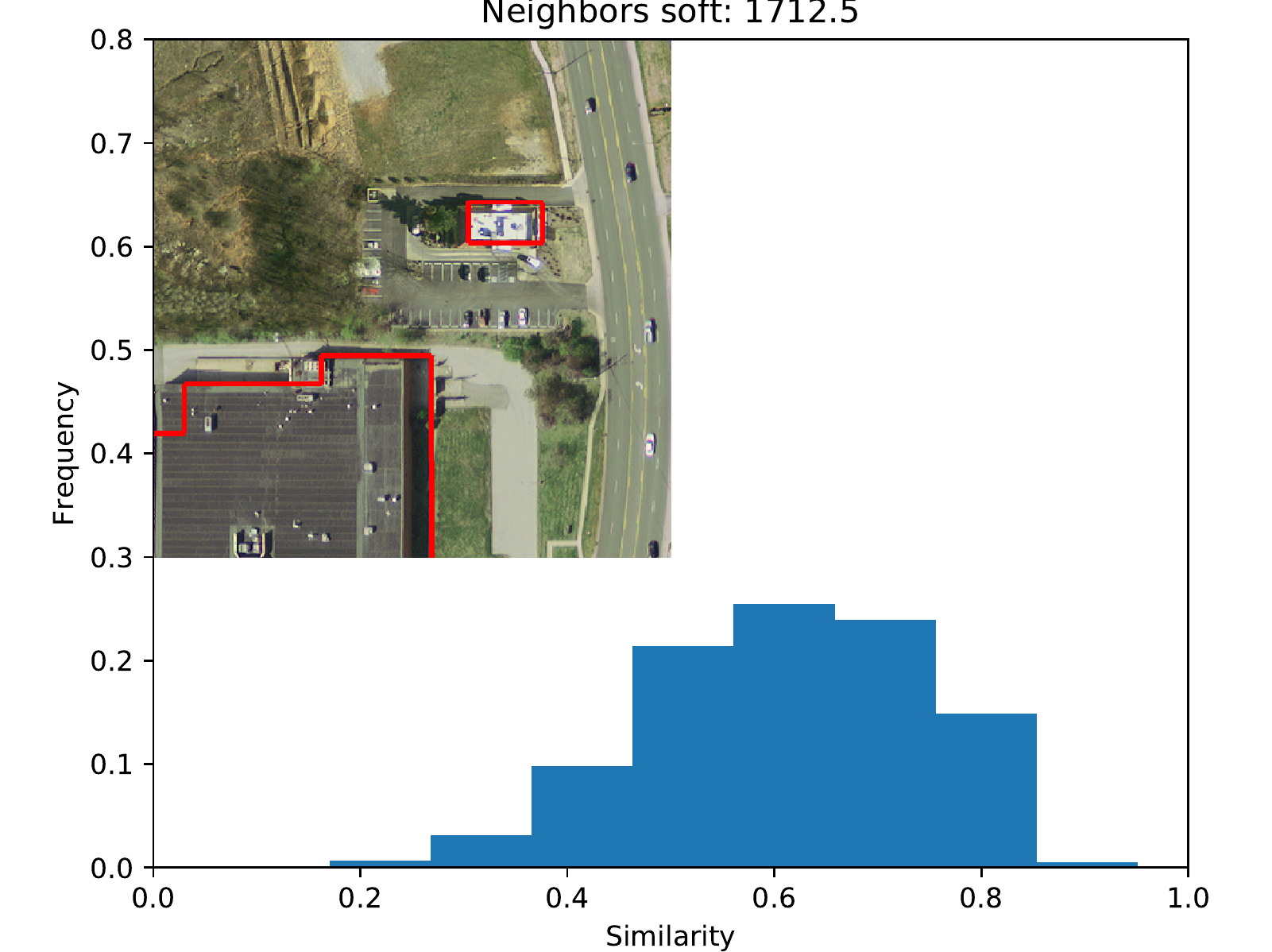}
		\includegraphics[width=0.7\linewidth]{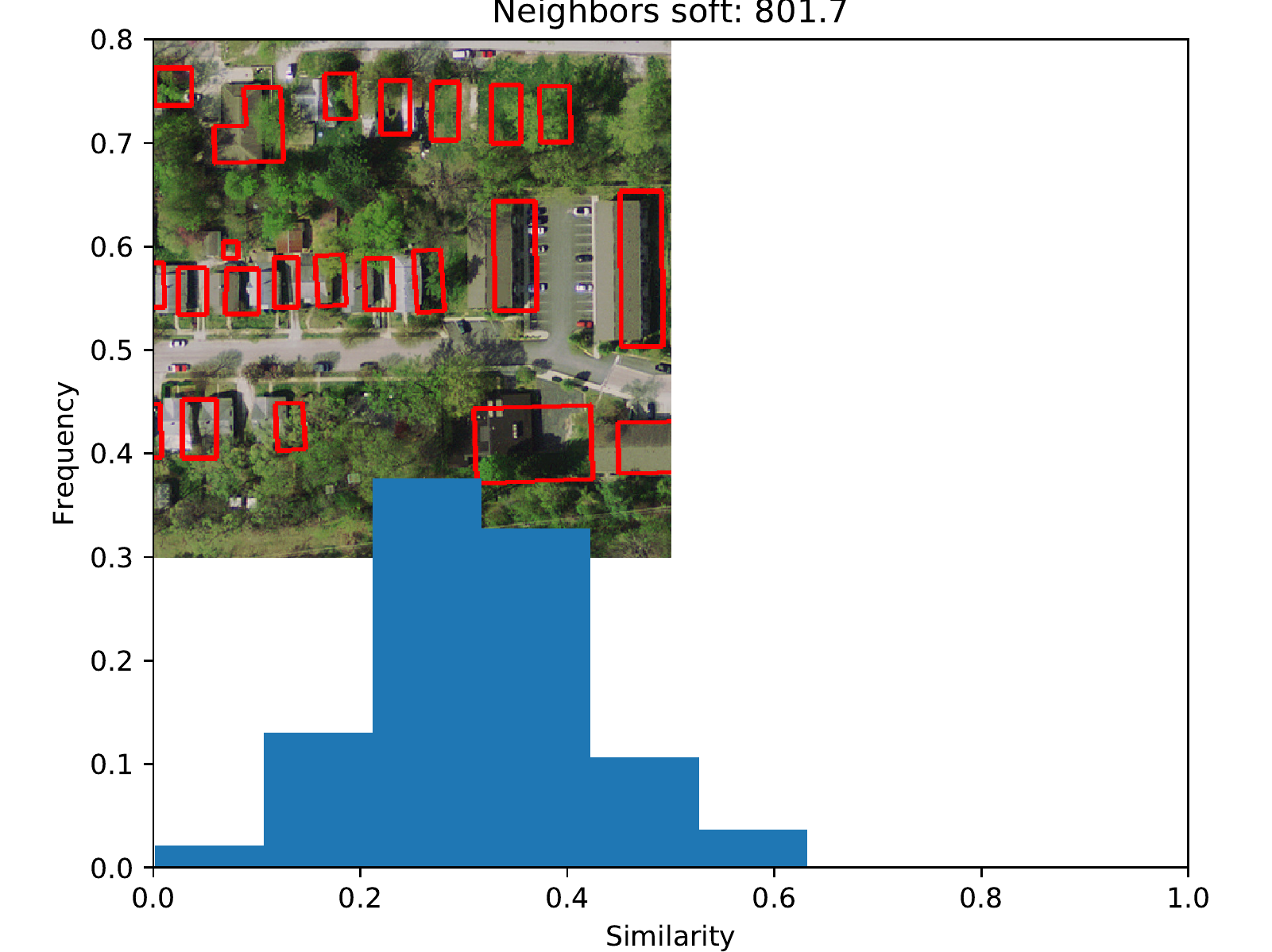}
		\includegraphics[width=0.7\linewidth]{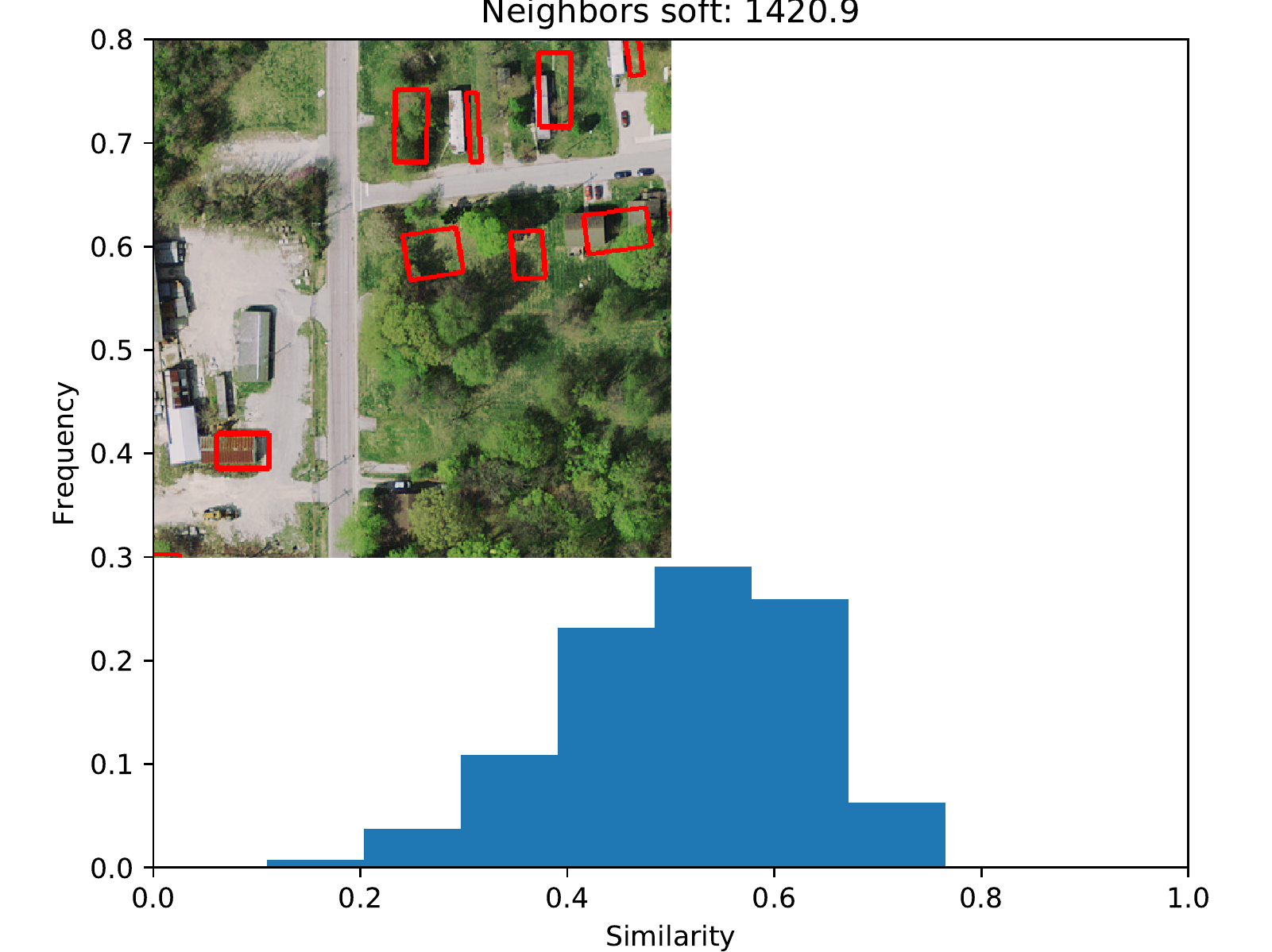}
		\includegraphics[width=0.7\linewidth]{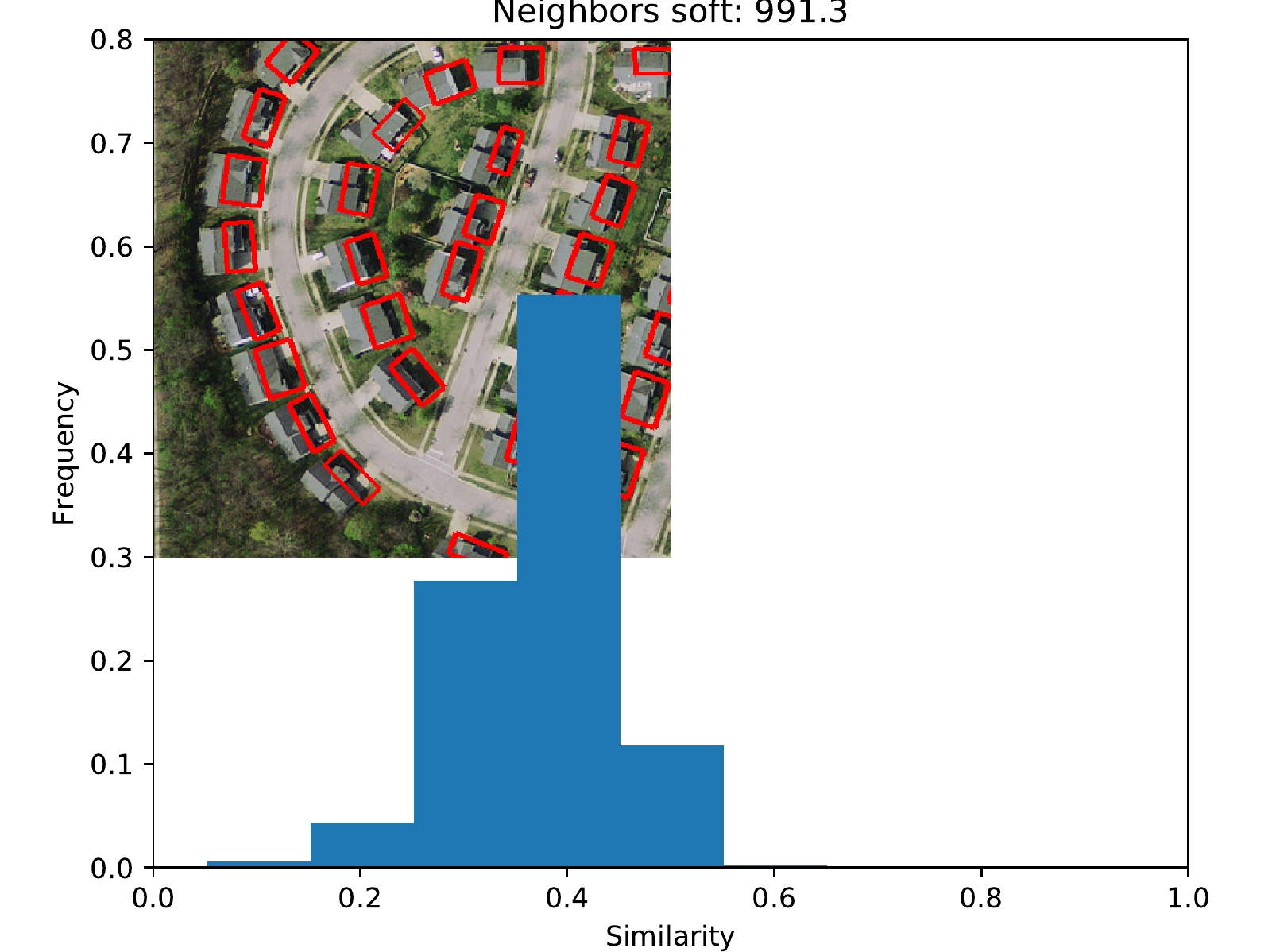}
		\includegraphics[width=0.7\linewidth]{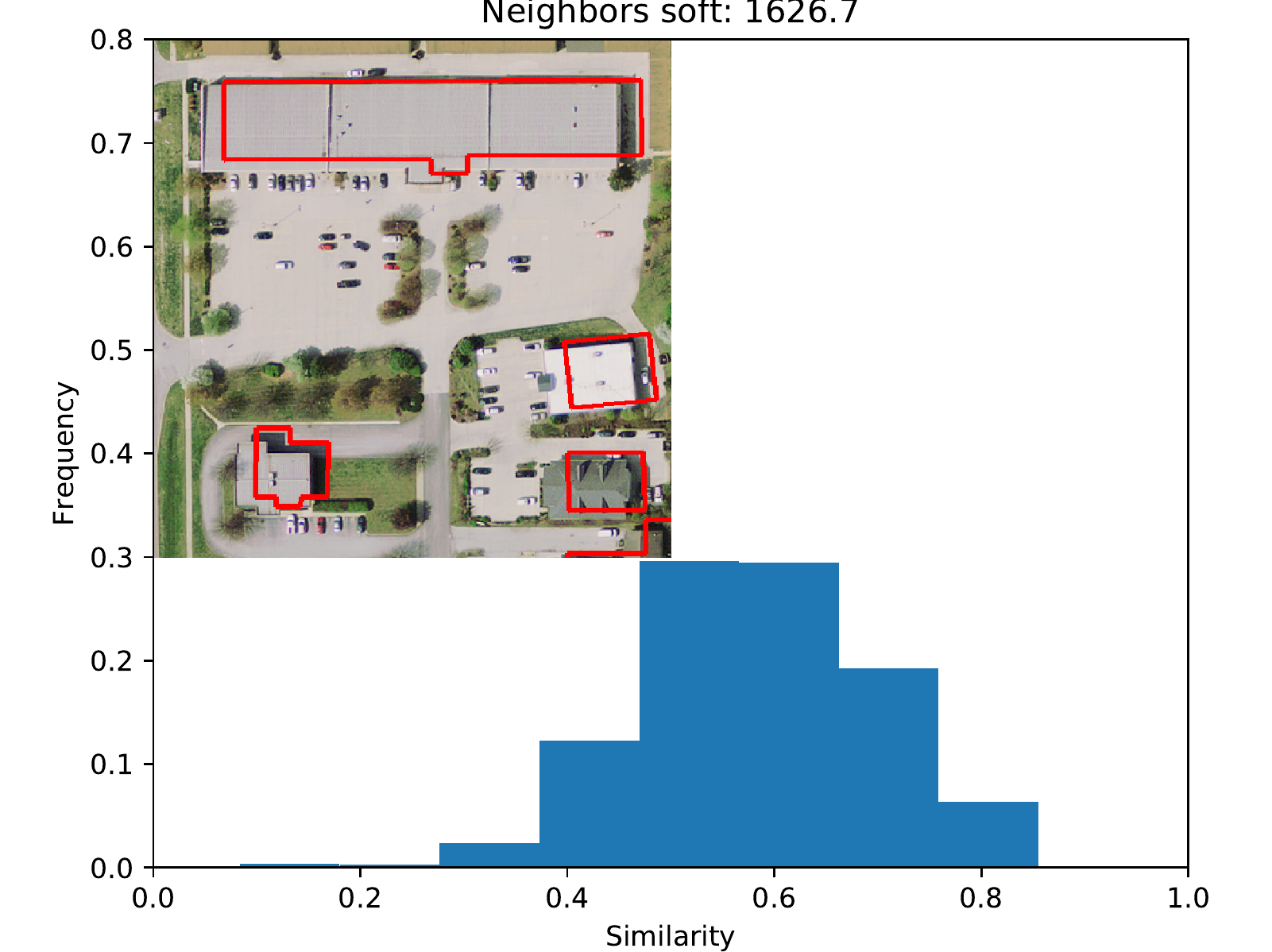}
	\end{subfigure}
	\begin{subfigure}[b]{0.3\textwidth}
		\centering
		\caption{Round 2}
		\includegraphics[width=0.7\linewidth]{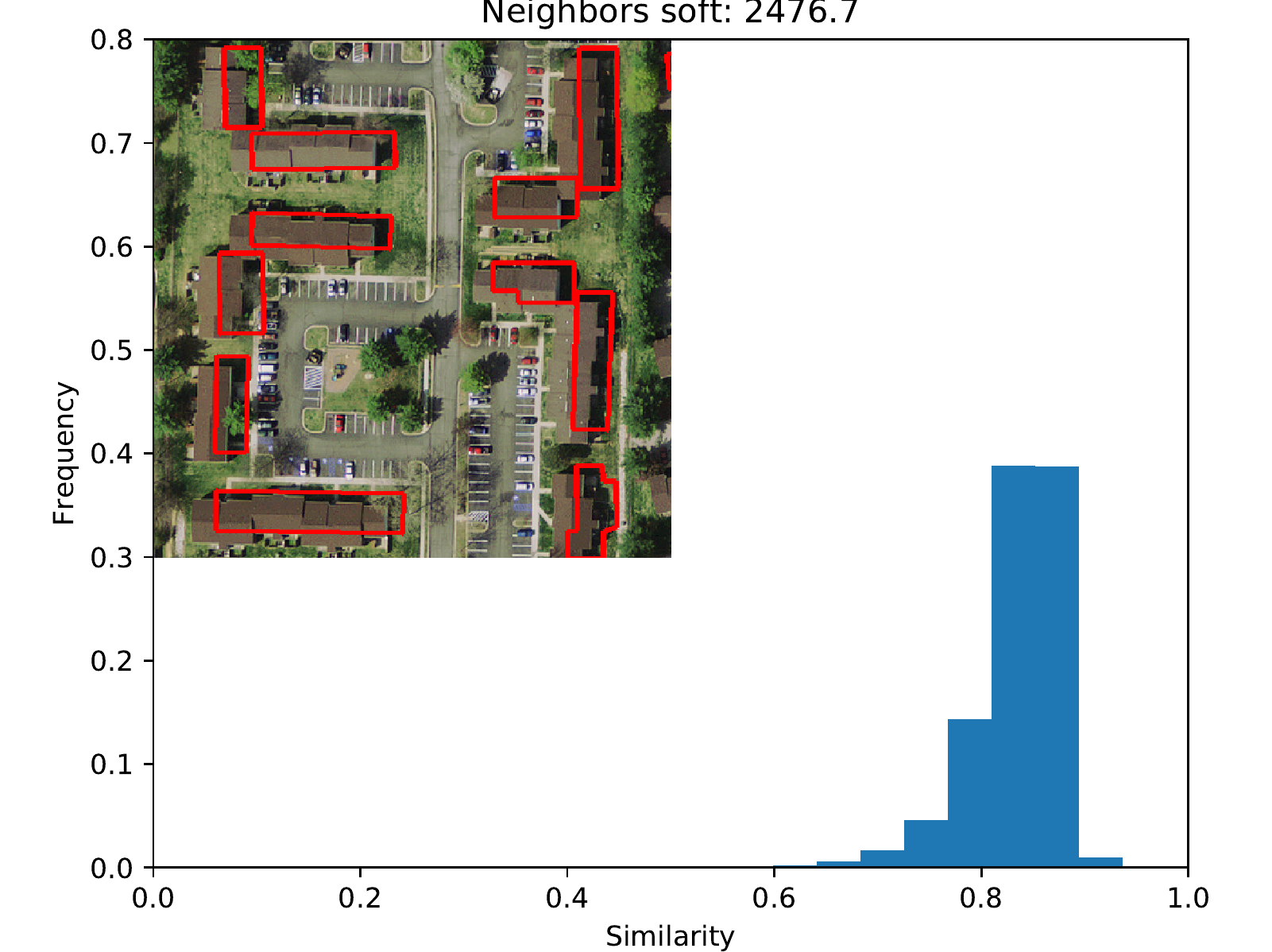}
		\includegraphics[width=0.7\linewidth]{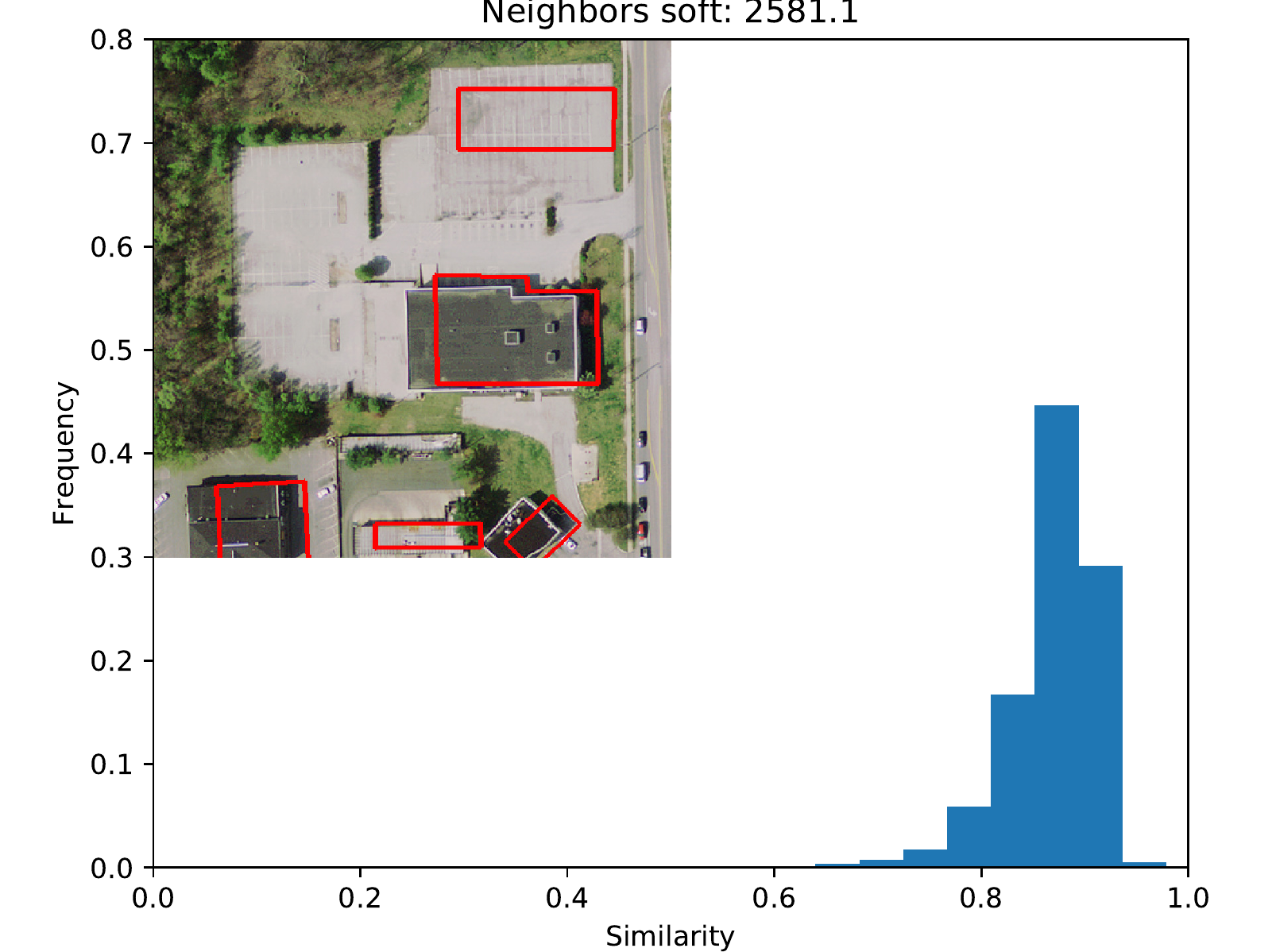}
		\includegraphics[width=0.7\linewidth]{netsimilarity_ds_fac_4_round_1_bloomington22_individual_hist_02-eps-converted-to.pdf}
		\includegraphics[width=0.7\linewidth]{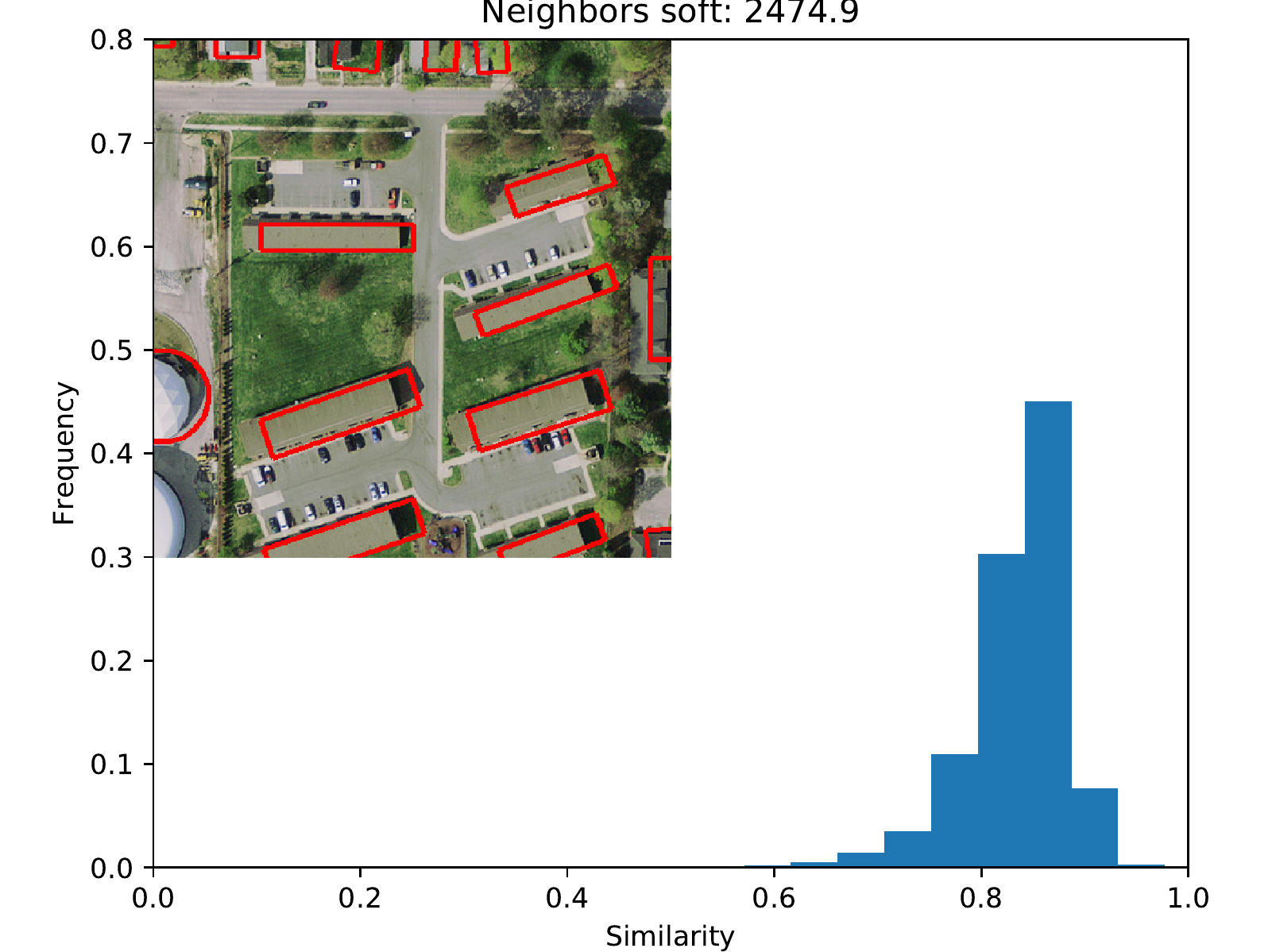}
		\includegraphics[width=0.7\linewidth]{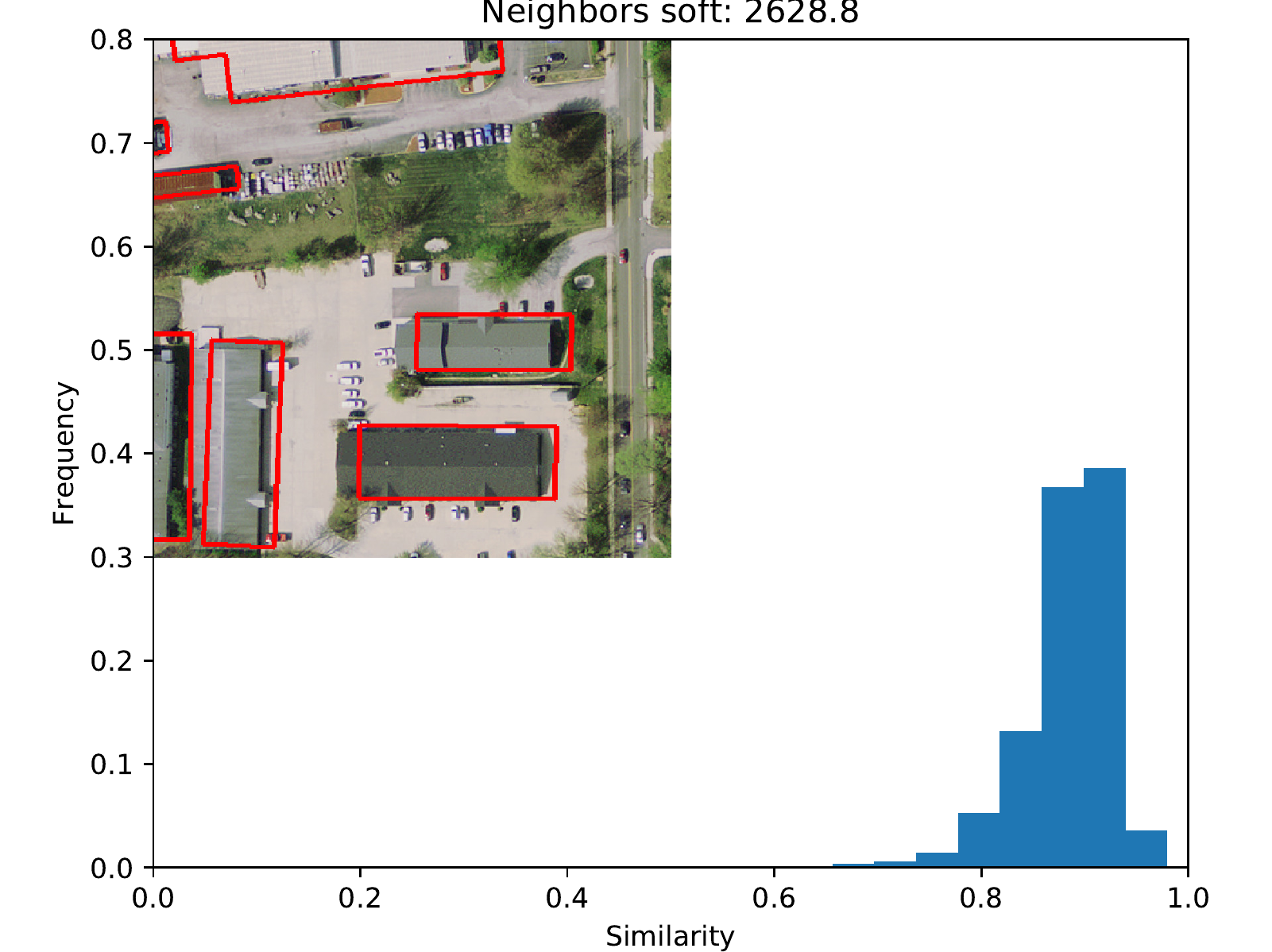}
		\includegraphics[width=0.7\linewidth]{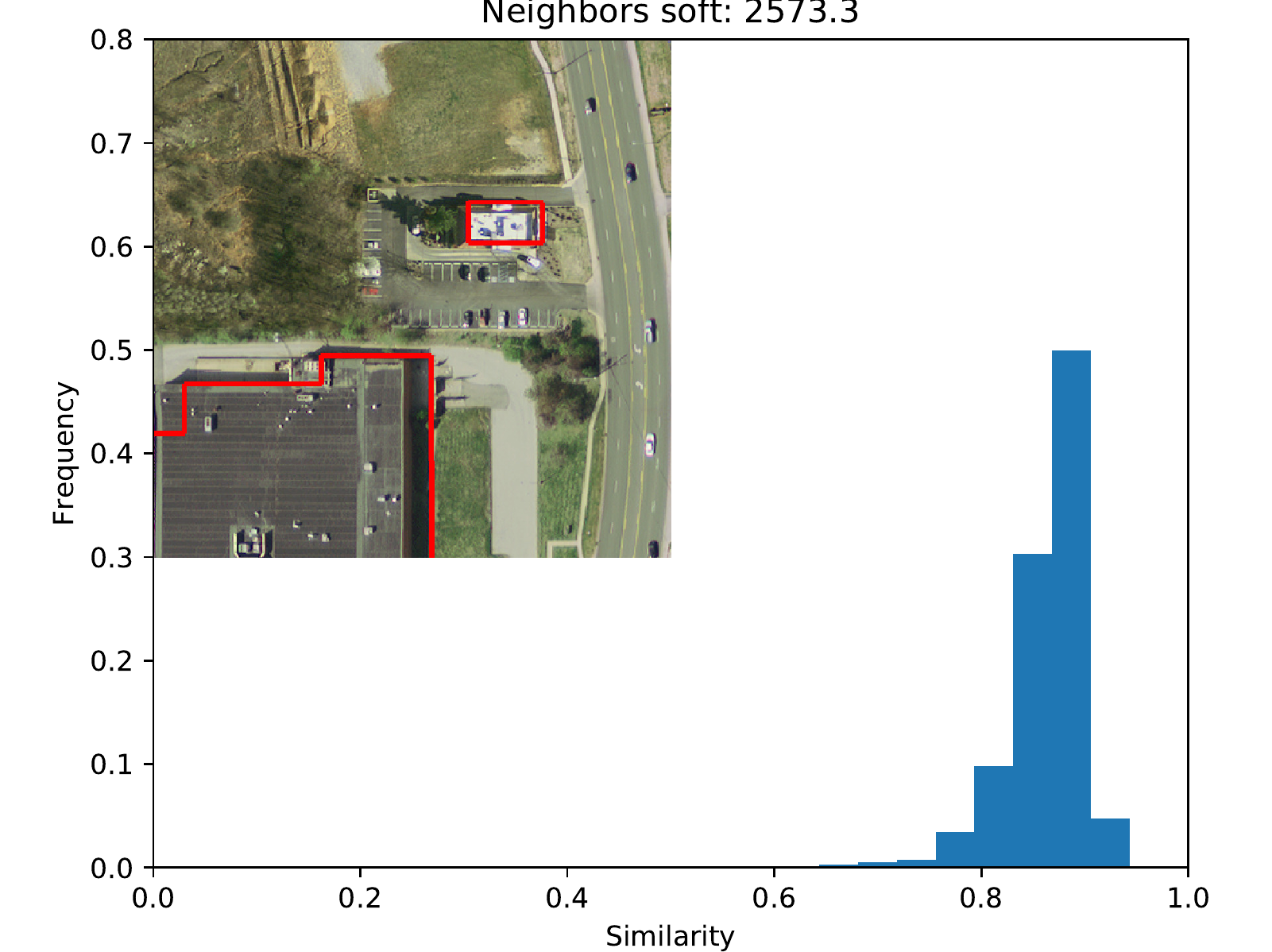}
		\includegraphics[width=0.7\linewidth]{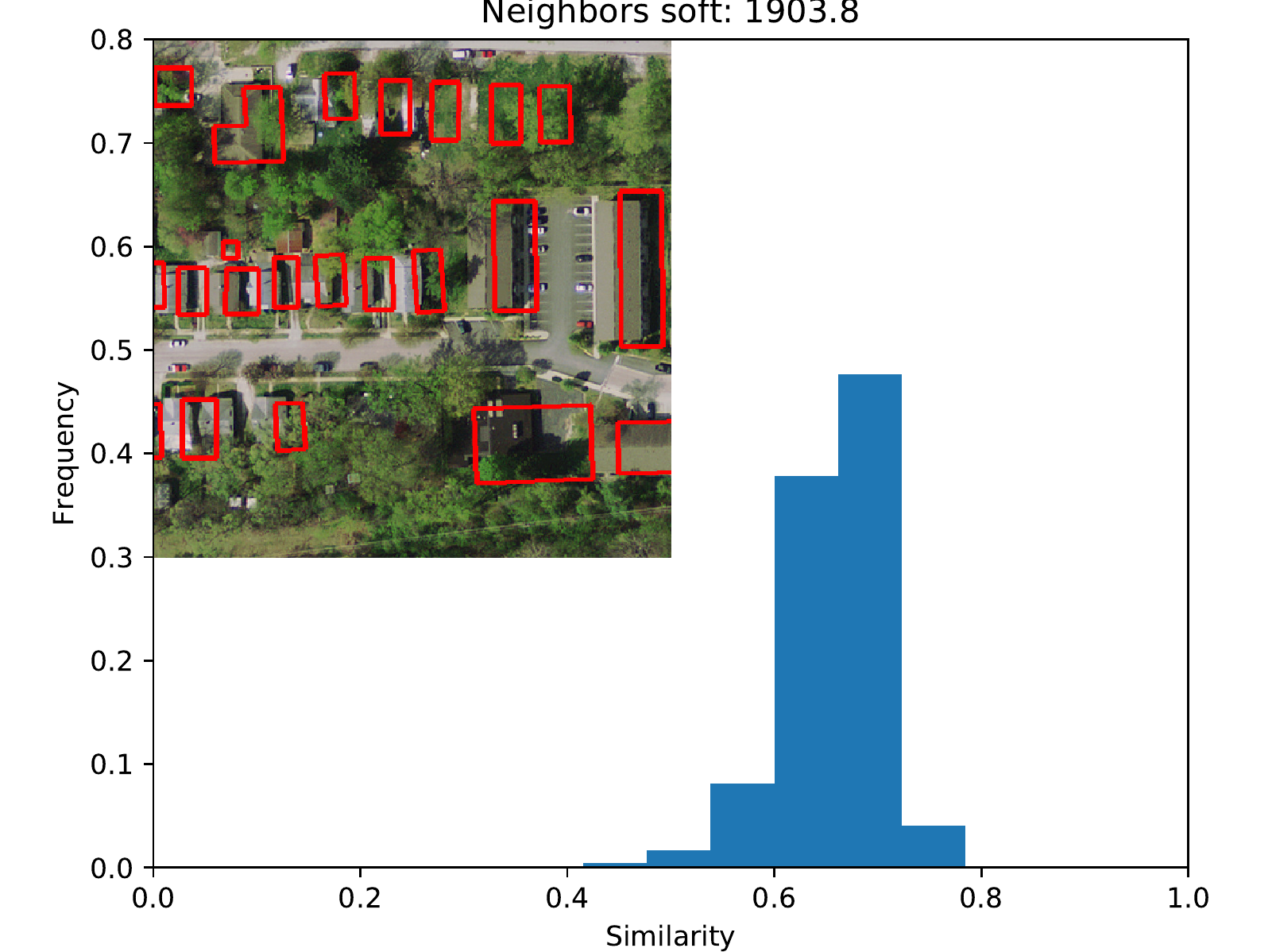}
		\includegraphics[width=0.7\linewidth]{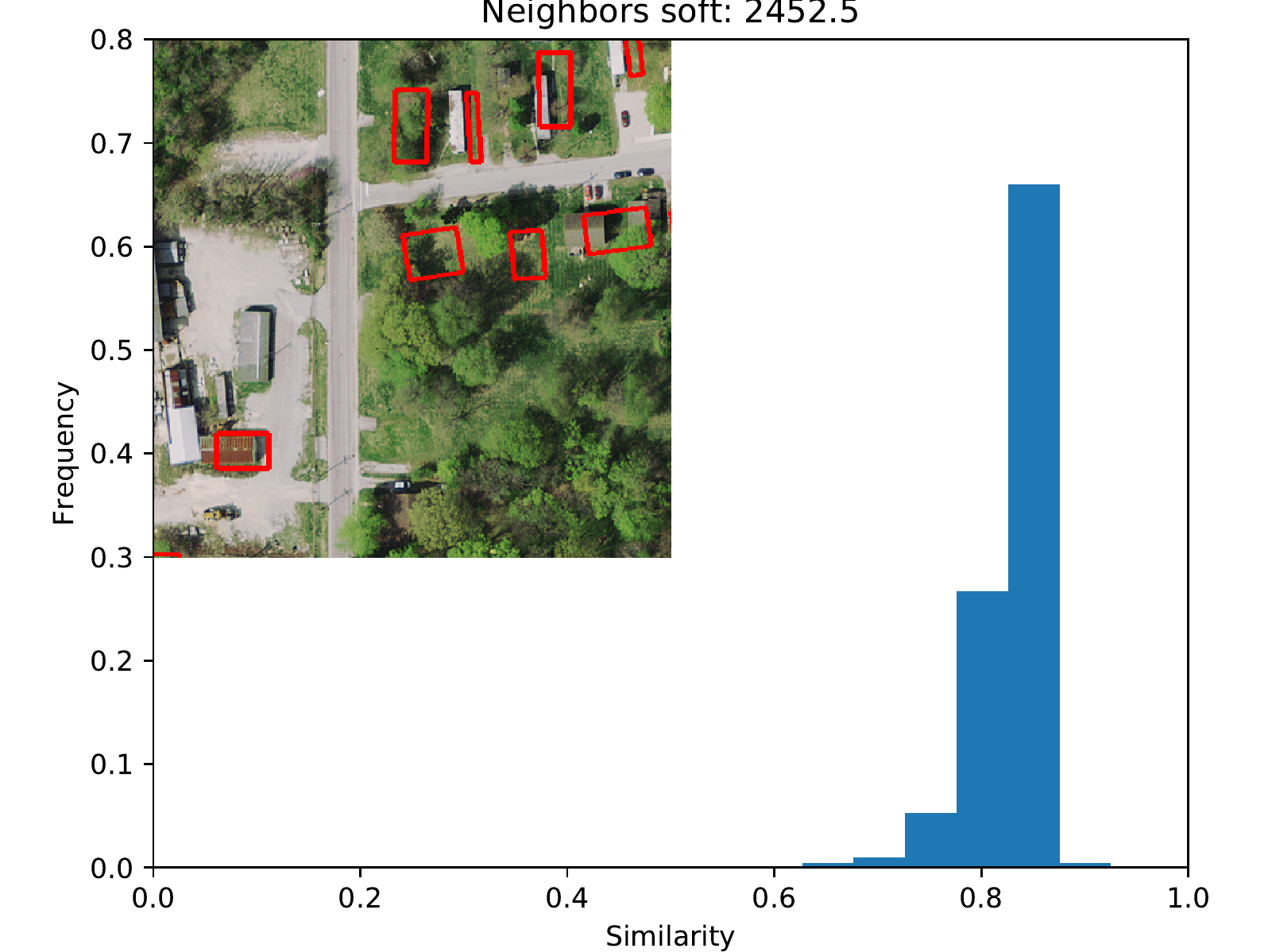}
		\includegraphics[width=0.7\linewidth]{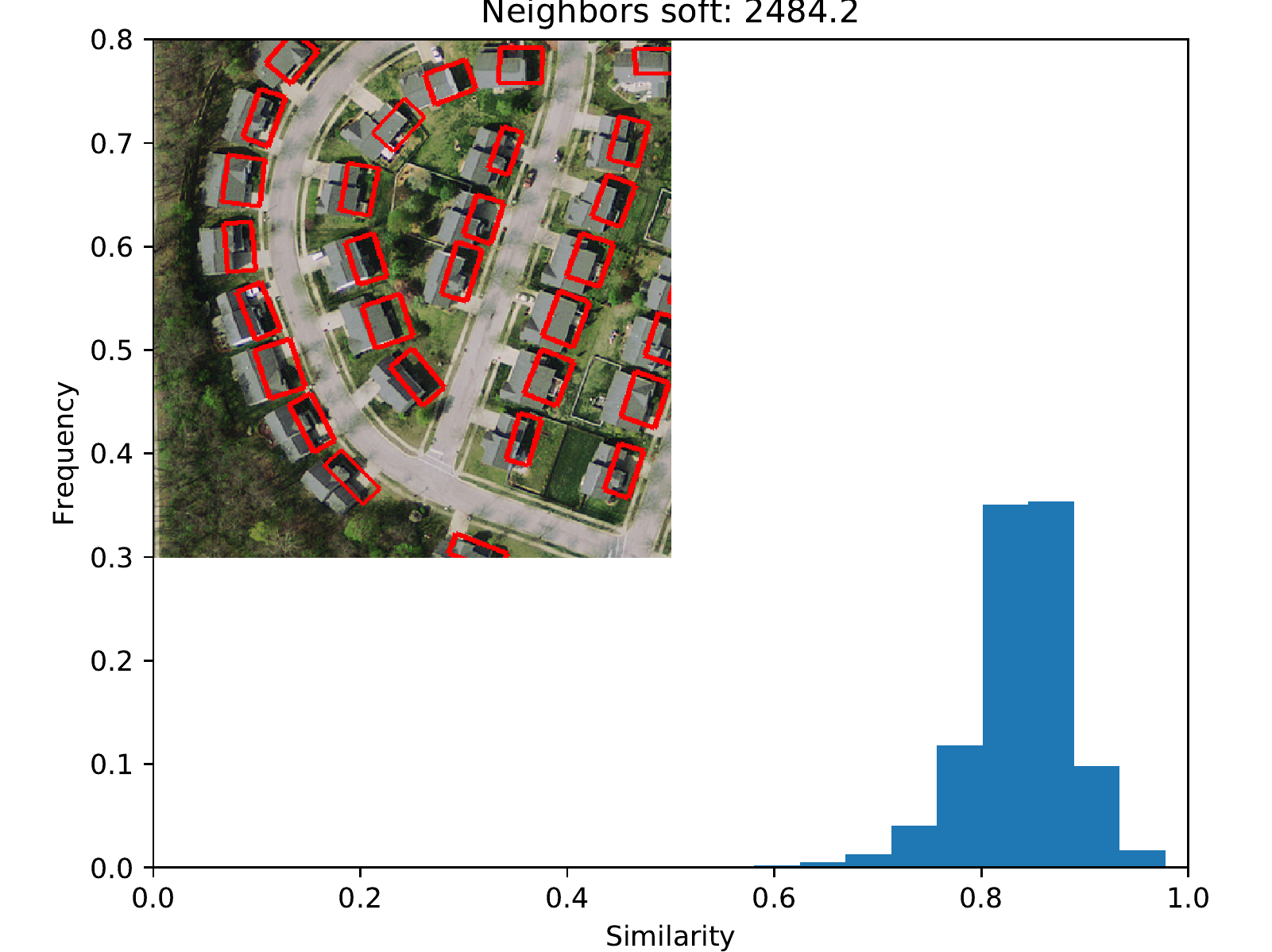}
		\includegraphics[width=0.7\linewidth]{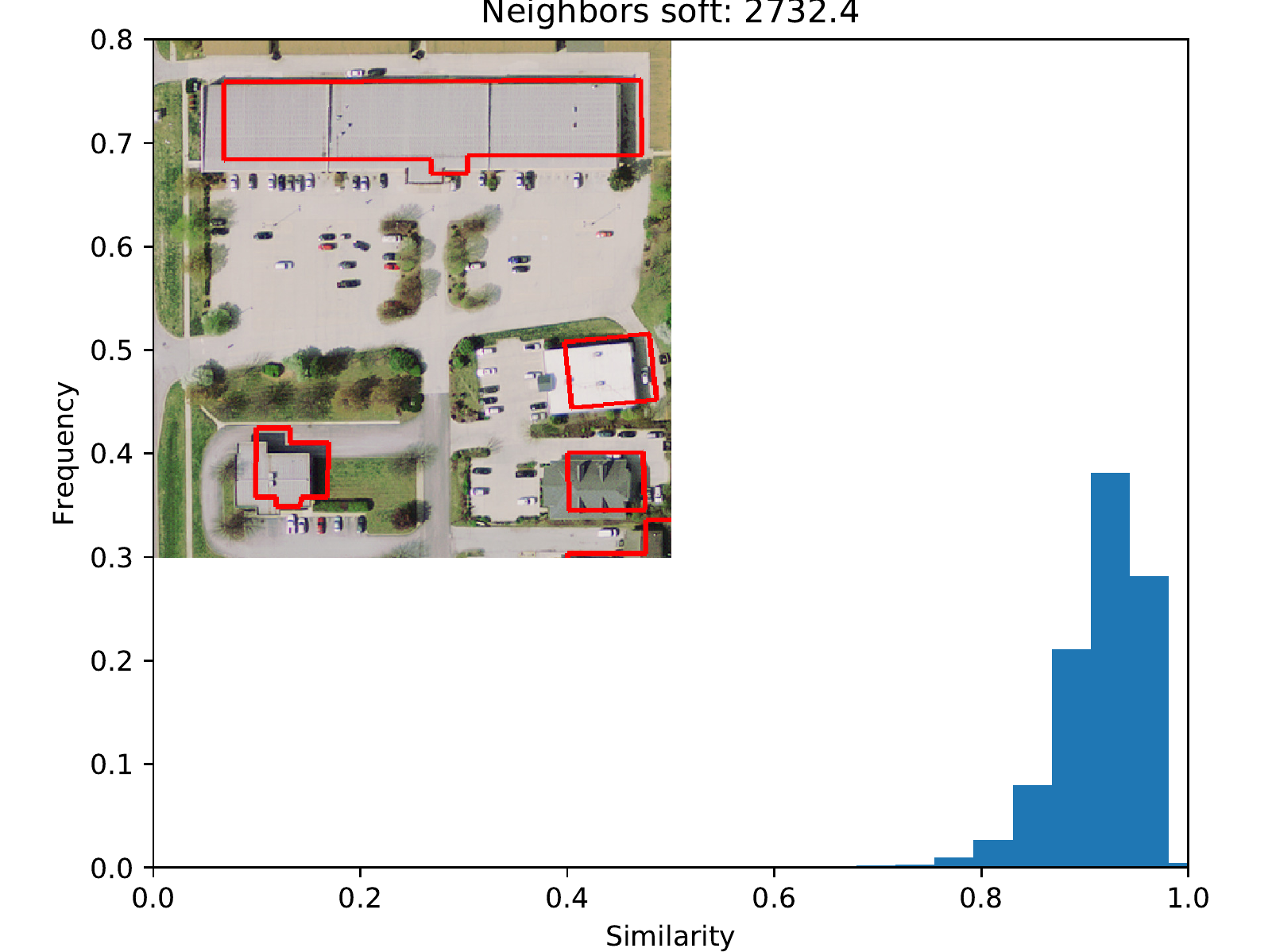}
	\end{subfigure}
	\begin{subfigure}[b]{0.3\textwidth}
		\centering
		\caption{Round 3}
		\includegraphics[width=0.7\linewidth]{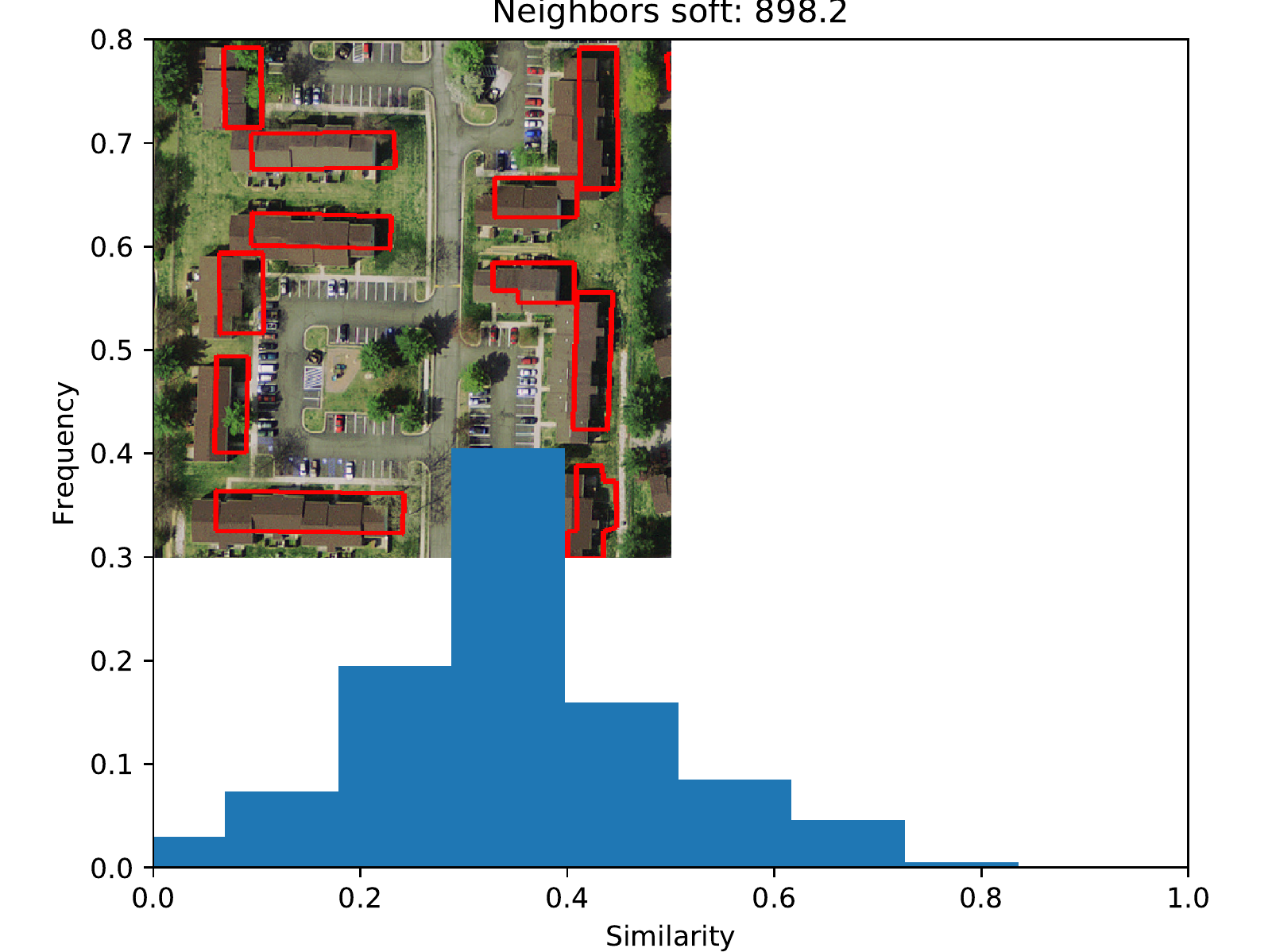}
		\includegraphics[width=0.7\linewidth]{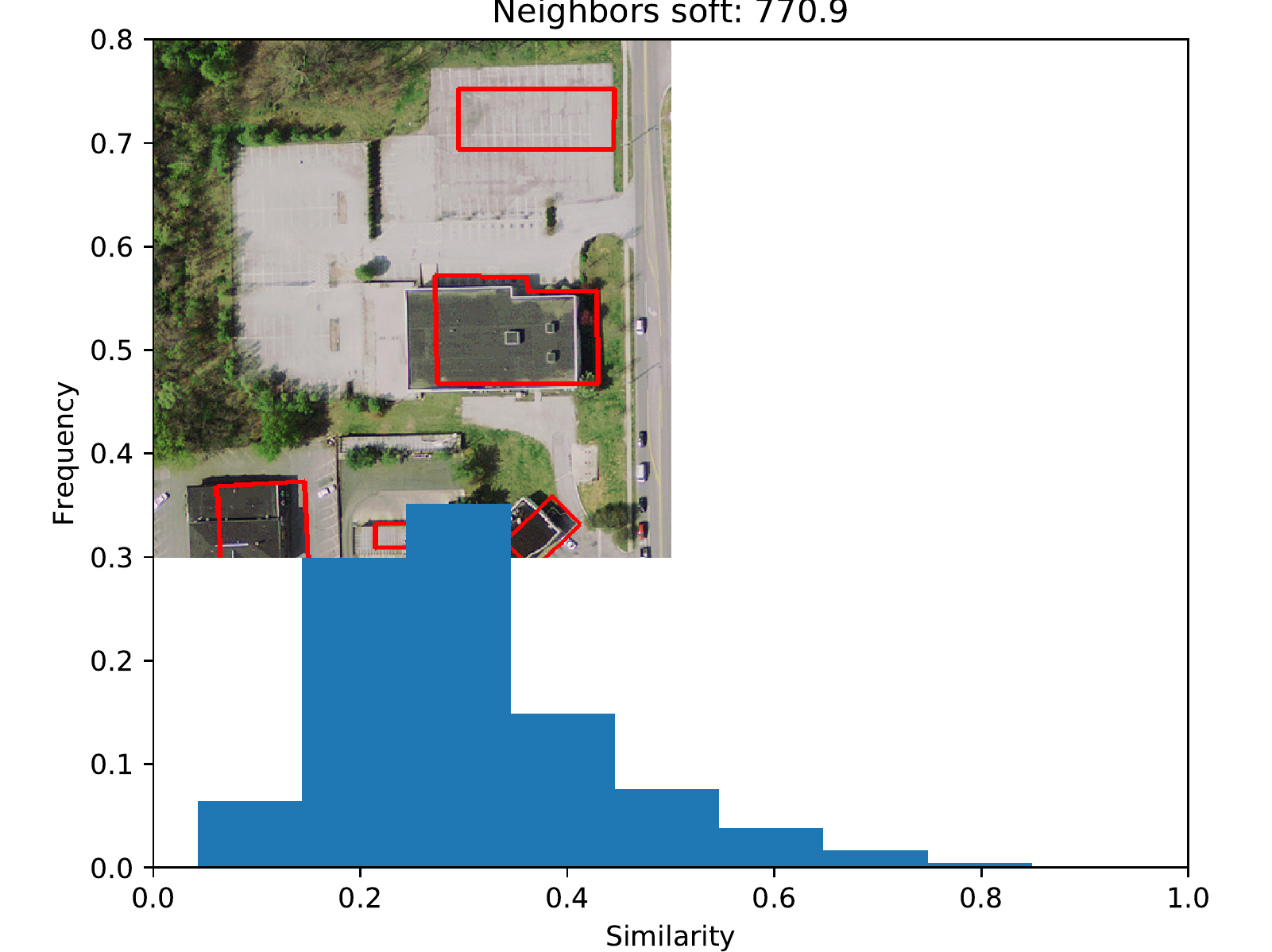}
		\includegraphics[width=0.7\linewidth]{netsimilarity_ds_fac_4_round_2_bloomington22_individual_hist_02-eps-converted-to.pdf}
		\includegraphics[width=0.7\linewidth]{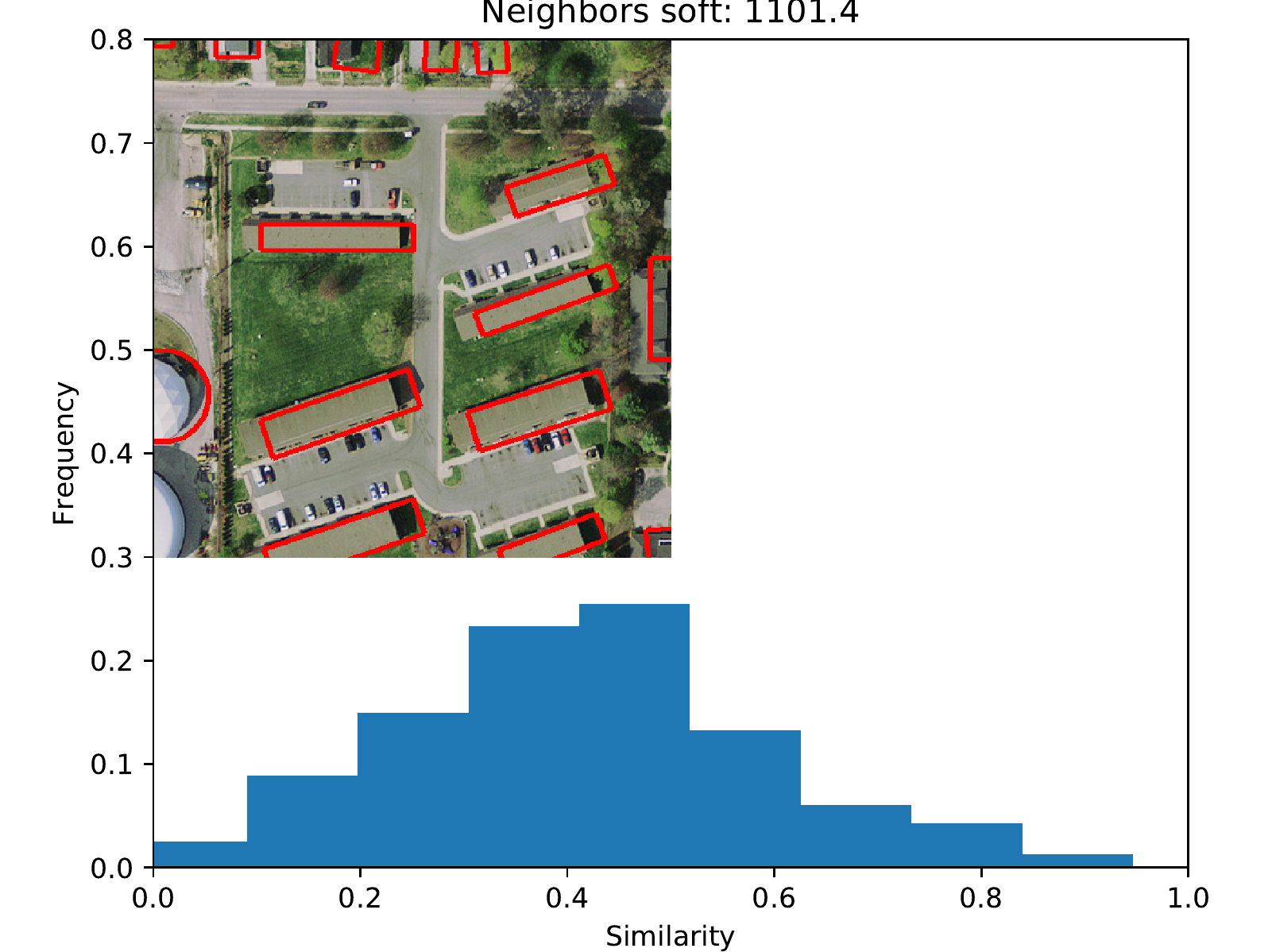}
		\includegraphics[width=0.7\linewidth]{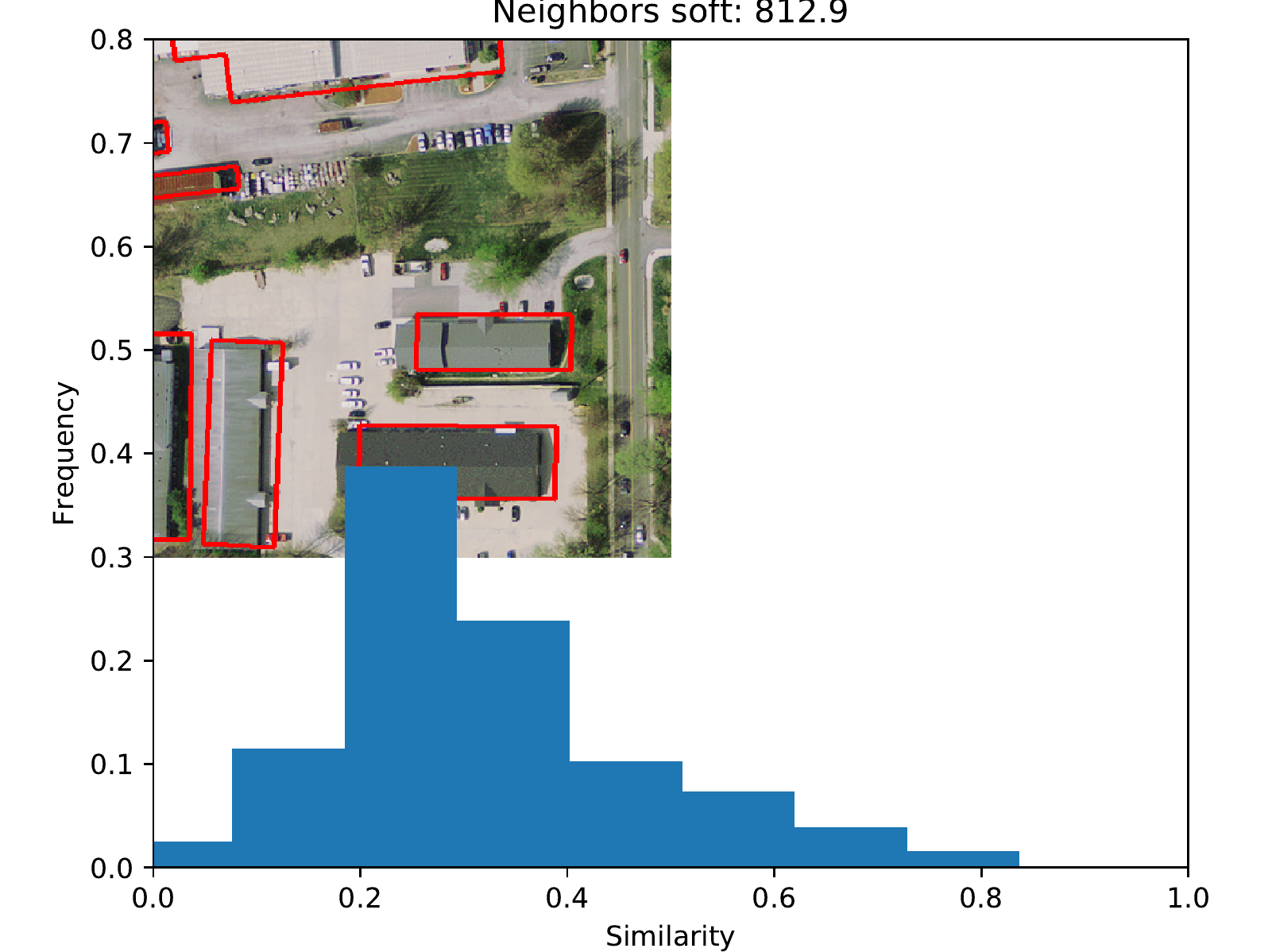}
		\includegraphics[width=0.7\linewidth]{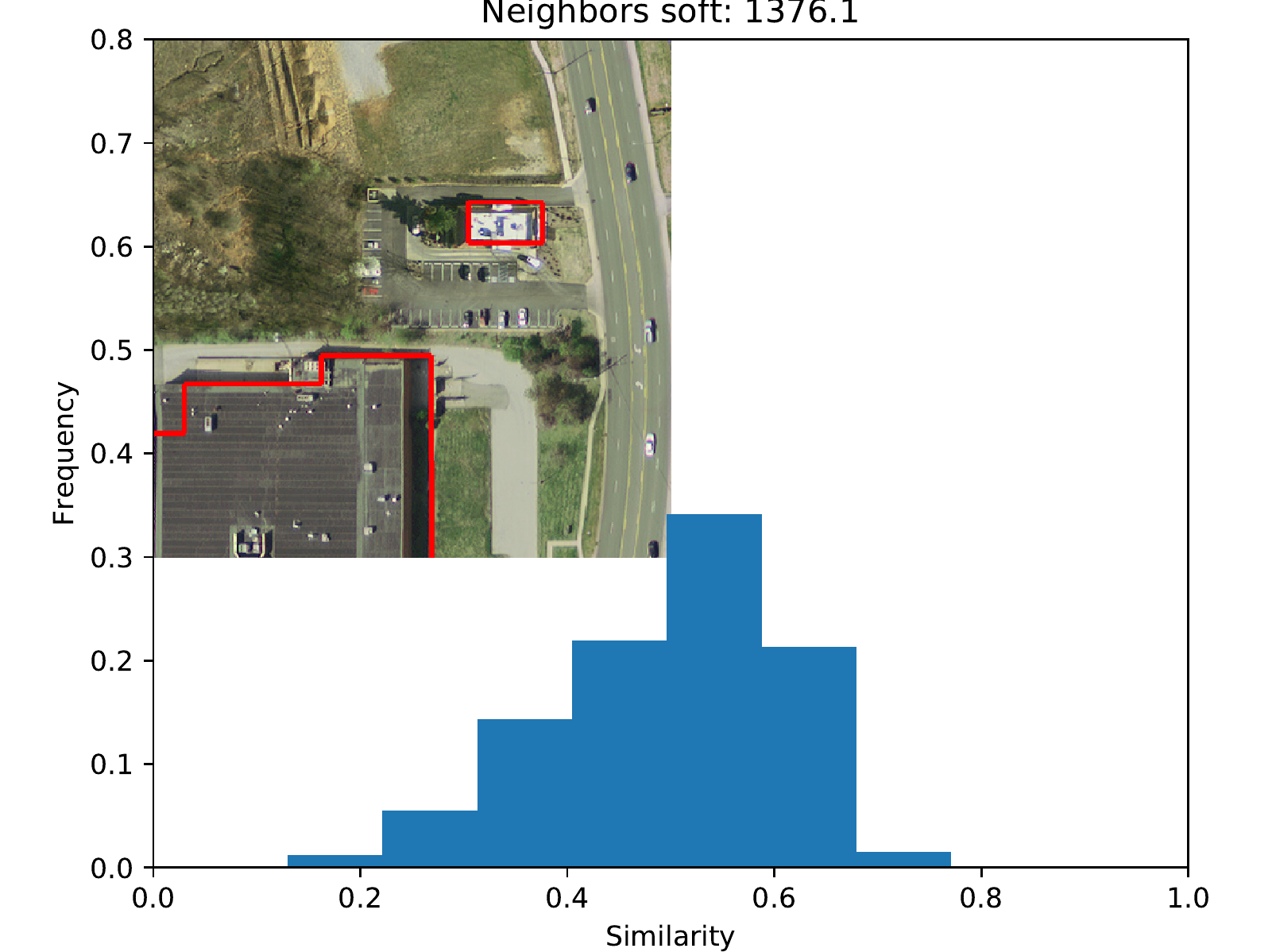}
		\includegraphics[width=0.7\linewidth]{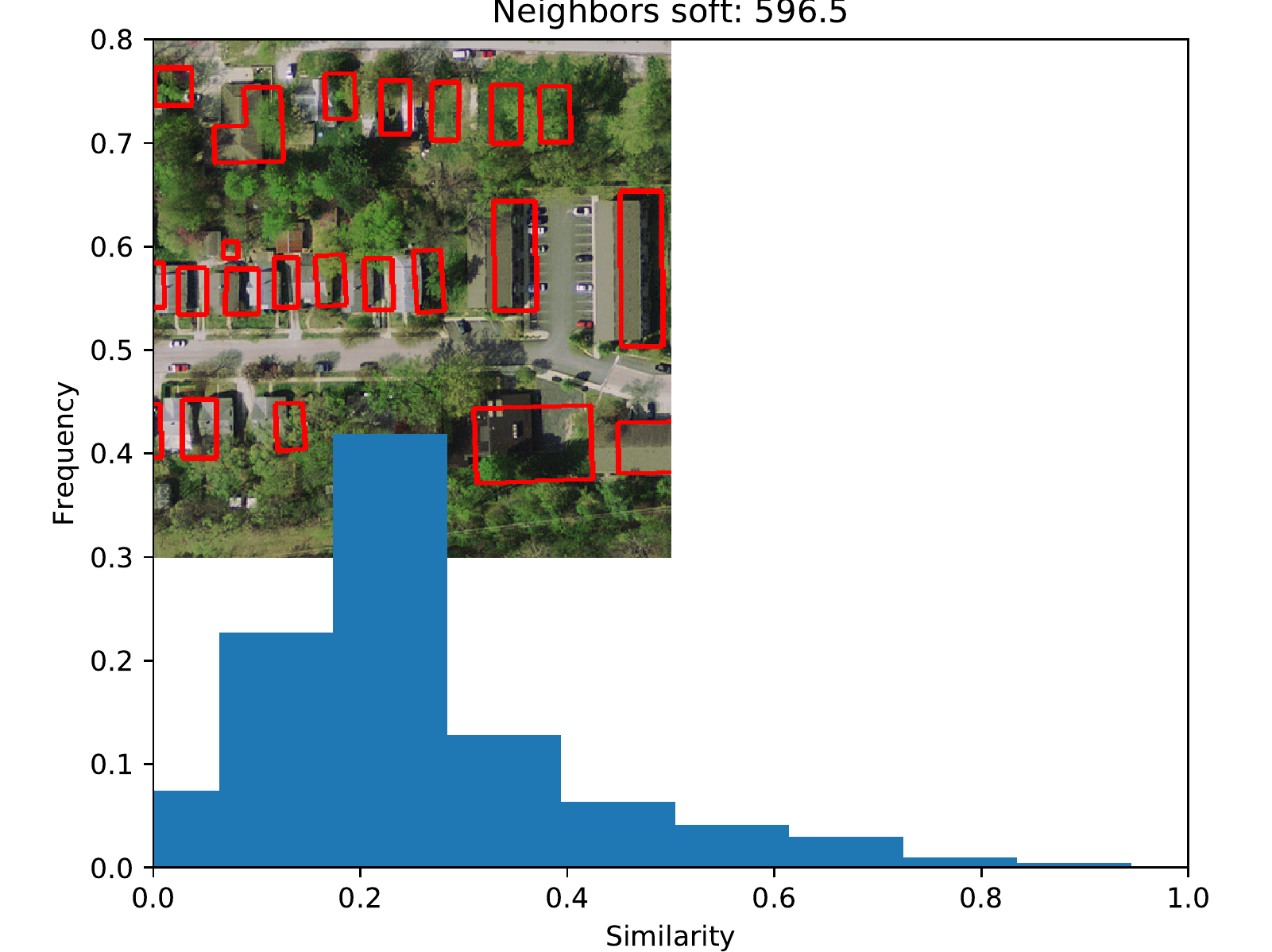}
		\includegraphics[width=0.7\linewidth]{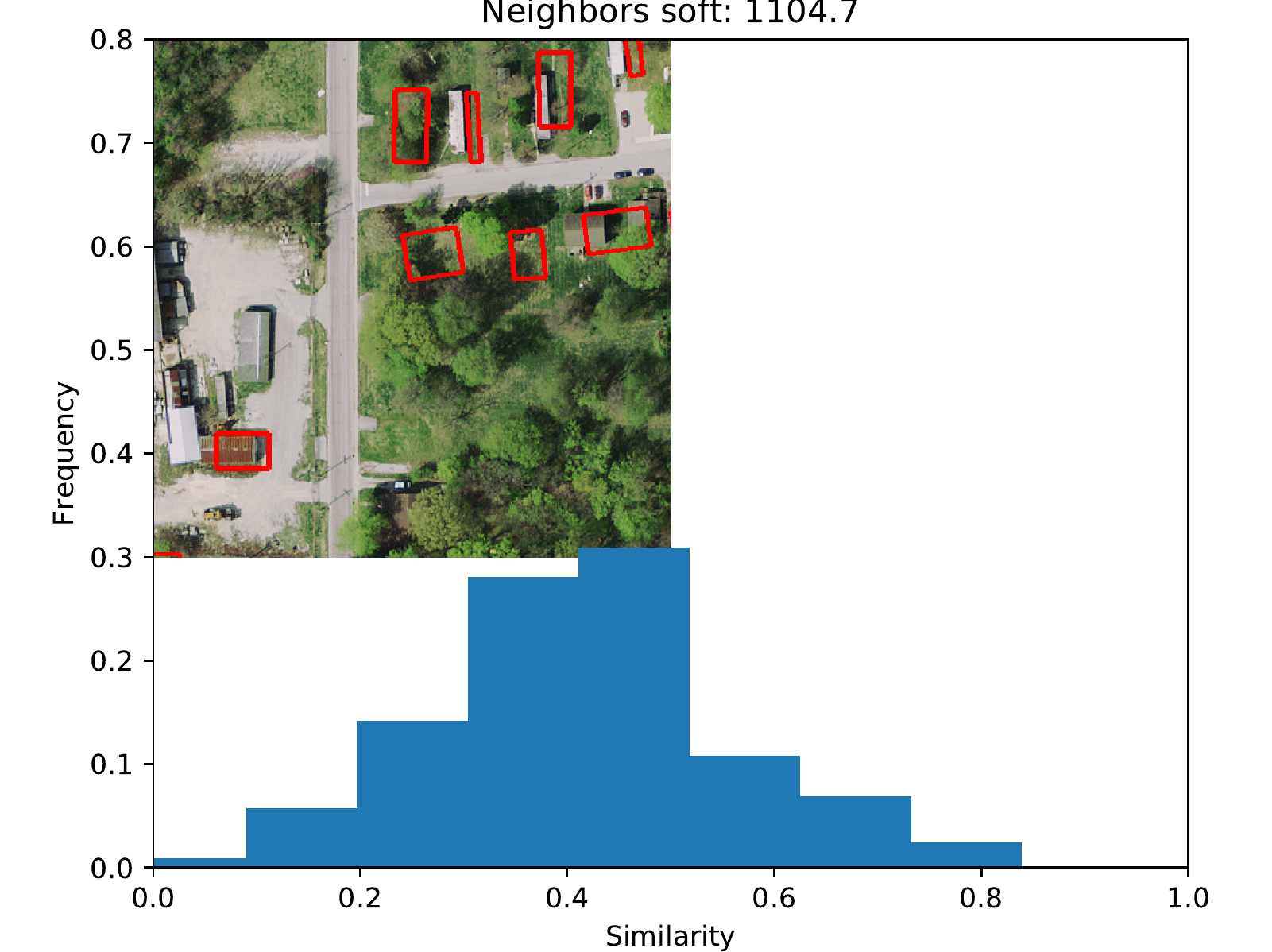}
		\includegraphics[width=0.7\linewidth]{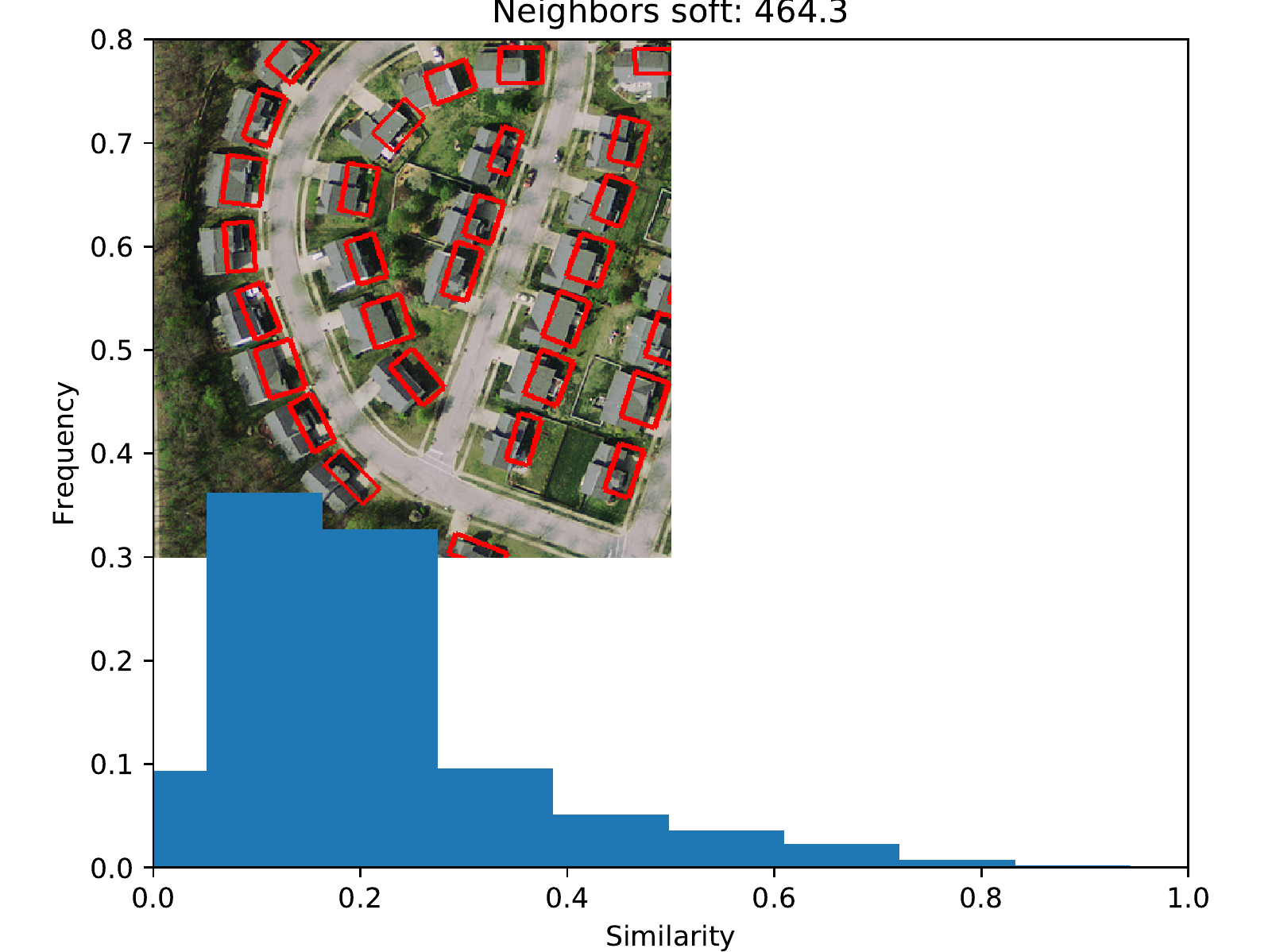}
		\includegraphics[width=0.7\linewidth]{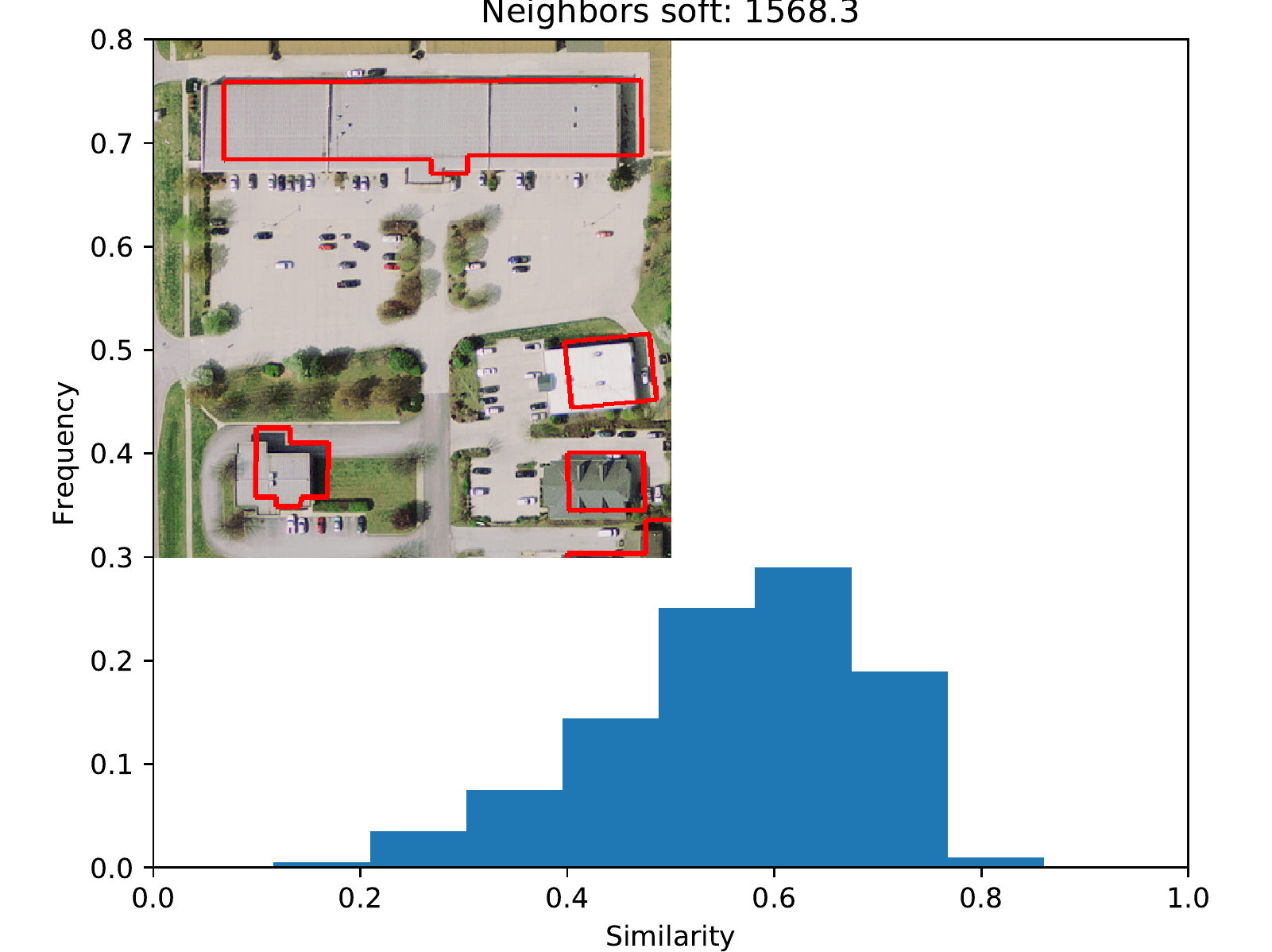}
	\end{subfigure}
	\caption{Histograms of similarities shown for the same 10 patches as in  Fig.\ref{fig:round_0_bloomington22_k_nearest}, \ref{fig:round_1_bloomington22_k_nearest}, \ref{fig:round_2_bloomington22_k_nearest}. Same patch selection across rounds.}
	\label{fig:bloomington22_individual_hist}
\end{figure}

\end{document}